\documentclass[10pt,twocolumn,twoside]{IEEEtran}

\usepackage[utf8]{inputenc}
\usepackage[T1]{fontenc}
\usepackage{amsmath,amsthm,amsfonts,amssymb,bm,accents,dsfont}
\usepackage{graphicx,cite,enumerate,afterpage}
\usepackage{url}

\usepackage{blindtext}

\usepackage{hyperref}

% ========== Colors ==========
\usepackage{xcolor}

% Brewer palettes: Set1
\definecolor{Set1.red}{rgb}{0.894117647058824,0.101960784313725,0.109803921568627}
\definecolor{Set1.orange}{rgb}{1,0.498039215686275,0}

% ========== Custom operators and definitions ==========
% -- General --
%\let\Pr\relax
%\DeclareMathOperator{\Pr}{\mathbb{P}}
\DeclareMathOperator{\E}{\mathbb{E}}

\newcommand{\norm}[1]{\ensuremath{\left\| #1 \right\|}}
\newcommand{\abs}[1]{\ensuremath{{\left\vert #1 \right\vert}}}

\DeclareMathOperator{\conv}{conv}
\DeclareMathOperator{\cconv}{\overline{conv}}

\DeclareMathOperator{\indicator}{\mathbb{I}}
\newcommand{\ceil}[1]{\left \lceil #1 \right \rceil}

\DeclareMathOperator*{\argmin}{argmin}
\DeclareMathOperator*{\argmax}{argmax}
\DeclareMathOperator*{\minimize}{minimize}

\DeclareMathOperator{\subjectto}{subject\ to}

% -- Calligraphic --
\newcommand{\calA}{\ensuremath{\mathcal{A}}}
\newcommand{\calB}{\ensuremath{\mathcal{B}}}
\newcommand{\calC}{\ensuremath{\mathcal{C}}}
\newcommand{\calD}{\ensuremath{\mathcal{D}}}
\newcommand{\calE}{\ensuremath{\mathcal{E}}}
\newcommand{\calF}{\ensuremath{\mathcal{F}}}

\newcommand{\calH}{\ensuremath{\mathcal{H}}}
\newcommand{\calI}{\ensuremath{\mathcal{I}}}

\newcommand{\calM}{\ensuremath{\mathcal{M}}}

\newcommand{\calS}{\ensuremath{\mathcal{S}}}
\newcommand{\calT}{\ensuremath{\mathcal{T}}}

\newcommand{\calX}{\ensuremath{\mathcal{X}}}
\newcommand{\calY}{\ensuremath{\mathcal{Y}}}
\newcommand{\calZ}{\ensuremath{\mathcal{Z}}}

% -- Boldface --
\newcommand{\bzero}{\ensuremath{\bm{0}}}

\newcommand{\ba}{\ensuremath{\bm{a}}}

\newcommand{\bc}{\ensuremath{\bm{c}}}

\newcommand{\bp}{\ensuremath{\bm{p}}}

\newcommand{\bs}{\ensuremath{\bm{s}}}

\newcommand{\bx}{\ensuremath{\bm{x}}}

\newcommand{\bdelta}{\ensuremath{\bm{\delta}}}

\newcommand{\bmu}{\ensuremath{\bm{\mu}}}

\newcommand{\btheta}{\ensuremath{\bm{\theta}}}

\newcommand{\bsigma}{\ensuremath{\bm{\sigma}}}

% -- Blackboard --

\newcommand{\bbN}{\ensuremath{\mathbb{N}}}

\newcommand{\bbR}{\ensuremath{\mathbb{R}}}

% -- Fraktur --
\newcommand{\fkA}{\ensuremath{\mathfrak{A}}}

\newcommand{\fkD}{\ensuremath{\mathfrak{D}}}

\newcommand{\fke}{\ensuremath{\mathfrak{e}}}

\newcommand{\fkm}{\ensuremath{\mathfrak{m}}}

\newcommand{\fkp}{\ensuremath{\mathfrak{p}}}
\newcommand{\fkq}{\ensuremath{\mathfrak{q}}}

% -- Boldface bar/tilde/hat --

\newcommand{\bxt}{\ensuremath{\bm{{\tilde{x}}}}}

% -- Shorthands --
\def\st/{\textsuperscript{st}}
\def\nd/{\textsuperscript{nd}}
\def\rd/{\textsuperscript{rd}}
\def\th/{\textsuperscript{th}}

\newcommand{\setN}{\bbN}

\newcommand{\setR}{\bbR}

\newcommand{\zeros}{\ensuremath{\bm{0}}}

% ========== Optimization problem environment ==========
%\def\nnil{\nil}
%\newcounter{prob}
%\newenvironment{prob}[1][\nil]{%
%	\def\tmp{#1}
%	\equation
%	\ifx\tmp\nnil
%		\refstepcounter{prob}
%		\tag{P\Roman{prob}}
%	\else
%		\tag{\tmp}
%	\fi
%	\aligned%
%}{%
%	\endaligned\endequation%
%}
%
%\newenvironment{prob*}{%
%	\csname equation*\endcsname%
%	\aligned%
%}{%
%	\endaligned%
%	\csname endequation*\endcsname%
%}

\makeatletter
\def\nnil{\nil}
\newcounter{prob}
\newenvironment{prob}[1][\nil]{%
	\def\tmp{#1}
	\equation
	\ifx\tmp\nnil
		\refstepcounter{prob}
		\edef\@currentlabel{\theprob}\expandafter\ltx@label\expandafter{P\Roman{prob}-counter-number}
		\tag{P\Roman{prob}}
	\else
		\tag{\tmp}
	\fi
	\aligned%
}{%
	\endaligned\endequation%
}

\newcounter{dual}

\makeatother

\newenvironment{prob*}{%
	\csname equation*\endcsname%
	\aligned%
}{%
	\endaligned%
	\csname endequation*\endcsname%
}

% ========== Inkscape integration macro ==========
\newcommand{\includesvg}[2][scale=1]{\includegraphics[#1]{#2.pdf}}

% ========== Optional packages ==========
% -- Language packs --
\usepackage[english]{babel}

% -- Pseudocode listing --
\usepackage{algpseudocode,algorithm}
\algrenewcommand\algorithmicdo{}
\algrenewtext{EndFor}{\algorithmicend}
\algrenewtext{EndProcedure}{\algorithmicend}

% -- Table packs --
\usepackage{multirow}

% ========== Tikz ==========
\usepackage{tikz}
\usetikzlibrary{arrows,arrows.meta,cd}
\usepackage{pgfplots}
\usepgfplotslibrary{groupplots}
\usepgfplotslibrary{fillbetween}

% ========== Theorem-like definitions ==========
\newtheorem{theorem}{Theorem}
\newtheorem{proposition}{Proposition}[section]
\newtheorem{corollary}{Corollary}
\newtheorem{lemma}{Lemma}[section]
\theoremstyle{definition}
\newtheorem{definition}{Definition}
\newtheorem{remark}{Remark}
\newtheorem{example}{Example}
\newtheorem{assumption}{Assumption}

% Solid box for QED

\newcommand{\bhtheta}{\ensuremath{\bm{{\hat{\theta}^\star}}}}
\newcommand{\bhdtheta}{\ensuremath{\hat{\btheta}^\dagger}}
\newcommand{\bhmu}{\ensuremath{\bm{{\hat{\mu}^\star}}}}

\newcommand{\bmut}{\ensuremath{\bm{{\tilde{\mu}}}_\nu^\star}}

\newcommand{\calHb}{\ensuremath{\bar{\calH}}}

% ========== TITLE ==========
\title{Constrained Learning with Non-Convex Losses}

% ========== AUTHORS ==========
\author{Luiz~F.~O.~Chamon, Santiago~Paternain, Miguel~Calvo-Fullana, and Alejandro~Ribeiro%
\thanks{\emph{Luiz~F.~O.~Chamon} (\mbox{\texttt{lfochamon@berkeley.edu}}) is with the Simons Institute for the Theory of Computation, University of California, Berkeley.\\
\emph{Santiago~Paternain} (\mbox{\texttt{paters@rpi.edu}}) is with the Department of Electrical, Computer and Systems Engineering,  Rensselaer Polytechnic Institute.\\
\emph{Miguel~Calvo-Fullana} (\mbox{\texttt{cfullana@mit.edu}}) is with the  Department of Aeronautics and Astronautics, Massachusetts Institute of Technology.\\
\emph{Alejandro~Ribeiro} (\mbox{\texttt{aribeiro@seas.upenn.edu}}) is with the Department of Electrical and Systems Engineering, University of Pennsylvania.}.
\thanks{Part of the results in this paper appeared in~\cite{Chamon20t, Chamon20p}.}%
%\thanks{Manuscript received XXXXXXX XX, 2017; revised XXXXXXX XX, 2017.}
}

% ========== PAPER HEADERS ==========
\markboth{IEEE TRANSACTIONS ON INFORMATION THEORY}%, VOL. XX, NO. XX, XXXXXXX 2017}%
{CHAMON ET AL.: Constrained Learning with Non-Convex Losses}

\begin{document}
\maketitle
\begin{abstract}

Though learning has become a core component of modern information processing, there is now ample evidence that it can lead to biased, unsafe, and prejudiced systems. The need to impose requirements on learning is therefore paramount, especially as it reaches critical applications in social, industrial, and medical domains. However, the non-convexity of most modern statistical problems is only exacerbated by the introduction of constraints. Whereas good unconstrained solutions can often be learned using empirical risk minimization, even obtaining a model that satisfies statistical constraints can be challenging. All the more so, a good one. In this paper, we overcome this issue by learning in the empirical dual domain, where constrained statistical learning problems become unconstrained and deterministic. We analyze the generalization properties of this approach by bounding the empirical duality gap---i.e., the difference between our approximate, tractable solution and the solution of the original (non-convex)~statistical problem---and provide a practical constrained learning algorithm. These results establish a constrained counterpart to classical learning theory, enabling the explicit use of constraints in learning. We illustrate this theory and algorithm in rate-constrained learning applications arising in fairness and adversarial robustness.

\end{abstract}

% Introduction
% !TEX root=tit_csl.tex

\section{Introduction}
	\label{S:intro}

Learning is at the core of modern information systems upon which we increasingly rely to select job candidates, analyze medical data, and control ``smart'' applications~(home, grid, city). Central to this approach is the concept of empirical risk minimization~(ERM), in which a statistical~(expected value) optimization problem is replaced by its empirical~(sample average) counterpart, thus allowing it to be solved directly from data, without knowledge of the underlying distributions~\cite{Vapnik00t, Shalev-Shwartz04u, Mohri18f}. This approach is grounded on celebrated generalization results from learning~\cite{Vapnik00t, Shalev-Shwartz04u, Mohri18f} and stochastic optimization theory~\cite{Shapiro09l, Homem-de-Mello14m} showing that under mild conditions, the ERM solutions are close to their statistical analog for large enough sample sizes.

As these systems become ubiquitous, however, so does the need to constrain their behavior to tackle
fairness~\cite{Goh16s, Woodworth17l, Agarwal18a, Donini18e, Kearns18p, Zafar19f, Cotter19o},
robustness~\cite{Madry18t, Sinha18c, Zhang19t},
and safety~\cite{Garcia15a, Achiam17c, Paternain19l} problems. Left untethered, learning can lead to biased, prejudiced models prone to tampering~(e.g., adversarial examples) and unsafe behaviors~\cite{Datta15a, Kay15u, Angwin16m}. Such constraints can also be used to incorporate prior knowledge, such as smoothness or sparsity~\cite{Wahba90s, Berlinet11r, Eldar12c}, and tackle semi-supervised problems~\cite{Cour11l, Yu17m}.

In learning, requirements are often imposed using penalties, i.e., by integrating constraint violation costs in the ERM objective~(see, e.g.,~\cite{Goodfellow15e, Berk17a, Xu18a, Zhao18t, Sinha18c}). Yet, while it is straightforward to find penalties leading to feasible, optimal solutions when the ERM problem is convex, most modern parametrizations~(e.g., neural networks, NNs) lead to non-convex optimization problems. Designing good penalties then becomes a time-consuming trial-and-error process based on domain-expert knowledge that need not yield feasible solutions, all the more so good ones~\cite{Bertsekas09c}. While algorithms inspired by primal-dual methods have been applied in practice~\cite{Madry18t, Shaham18u, Zhang19t}, they are not supported by generalization guarantees. What is more, classical learning theory guarantees generalization with respect to the overall objective~(cost + penalty) and not with respect to the requirements it describes~\cite{Vapnik00t, Shalev-Shwartz04u, Mohri18f}. This issue is sometimes addressed by constructing models that explicitly \emph{embed} the desired properties~(e.g.,~\cite{Cohen16g, Weiler18l, Ruiz20i}), although the scale and opacity of modern machine learning systems typically render this approach impractical.

Since learning is often synonymous with ERM, a natural solution is to explicitly add constraints to these optimization problems. Given that requirements are often expressed as constraints in the first place, this approach guarantees that any solution satisfies the requirements without the need to tune penalty parameters. While constraints have been deployed in statistics since at least Neyman-Pearson~\cite{Neyman33i}, two roadblocks hinder their use in modern learning problems. First, typical parametrizations lead to non-convex constrained optimization problems that are often computationally harder than their unconstrained counterparts. While gradient descent can sometimes be used to approximately minimize a loss function even if it is non-convex~\cite{Ge18l, Brutzkus17g, Soltanolkotabi18t}, it does not guarantee feasibility.

Second, even if we could solve this constrained ERM problem, generalization guarantees exist only in specific contexts, e.g., for coherence constraints or rate-constrained learning~\cite{Garg01l, Luedtke08a, Pagnoncelli09s, Goh16s, Woodworth17l, Agarwal18a, Cotter19o}. And even then, results often hold for specific models, algorithms, and/or randomized solutions, e.g., \cite{Goh16s, Woodworth17l, Kearns18p, Agarwal18a, Cotter19o, Cotter19t}. Indeed, classical learning theory is concerned with unconstrained learning problems~\cite{Vapnik00t, Shalev-Shwartz04u, Mohri18f} and results for general constrained stochastic programs are often asymptotic, involving a myriad of distributional assumptions~\cite{Ahmed07c, Wang08s, Pagnoncelli09s, Luedtke08a, Hu12s, Guigues17n, Oliveira20s}. More recent guarantees rely on particular algorithms and hold only for randomized solutions.

This paper builds off the constrained statistical learning theory from~\cite{Chamon20p} to provide generalization guarantees for a large class of constrained learning problems, even those involving non-convex losses. Its main contribution is a generalization bound on the empirical duality gap of constrained learning, i.e., the difference between the optimal value of the constrained, statistical problem and that of an unconstrained, deterministic problem. This result implies a practical dual ascent algorithm that we show yields near-optimal and near-feasible solutions.

We approach these results in three steps. First, we show that, under mild conditions, functional constrained learning has zero \emph{duality gap}~(Section~\ref{S:duality_gap}). In contrast to~\cite{Chamon20t, Chamon20p} that considered only convex losses, we make use of recent results from non-convex functional optimization to account for the non-convex case~\cite{Chamon20f}. We then proceed to bound the \emph{parameterization gap}, i.e., the loss of optimality due to approximating the functional learning problem using a finite dimensional parameterization~(Section~\ref{S:parameterization_gap}). Finally, we analyze the \emph{empirical gap}, i.e., the error due to the use of samples instead of the unknown data distributions~(Section~\ref{S:empirical_gap}). The final bound~(Theorem~\ref{T:main}) depends not only on the number of samples, but also on the difficulty of the learning task both in terms of the parametrization used and how hard the constraints are to satisfy. We then show that the dual ascent method suggested by these results enables us to explicitly constrain models during training. We conclude by showcasing practical applications of constrained learning.

% A Constrained Learning Primer
% !TEX root=tit_csl.tex

\section{A (Constrained) Learning Primer}
\label{S:csl}

Let~$\fkD_i$, $i = 0,\dots,m$, denote probability distributions over data pairs~$(\bx,y)$, with~$\bx \in \calX \subseteq \setR^d$ and~$y \in \calY \subseteq \setR$, and~$f_{\btheta}: \calX \to \setR^k$ be a function associated with the parameter vector~$\btheta \in \Theta \subseteq \setR^p$. We denote the hypothesis class induced by these functions~$\calH = \{f_{\btheta} \mid \btheta \in \Theta\}$. For convenience, we can interpret~$\bx$ as a feature vector or a system input, $y$ as a label or a measurement, $f_{\btheta}$ as a parametrized classifier or estimator, $\fkD_0$ as a nominal joint distribution, and the other~$\fkD_i$ as conditional distributions over which requirements are imposed. For instance, the~$\fkD_i$ can be used to represent adversarial input perturbations for robust learning~(as in Section~\ref{S:robust}) or subgroups of the population in fair learning applications~(as in Section~\ref{S:rate}). For classification problems, $\calY$ is finite, typically a subset of~$\setN$.

The constrained learning problem is defined as
\begin{prob}[\textup{P-CSL}]\label{P:csl}
	P^\star = \min_{\btheta \in \Theta}&
		&&\E_{(\bx,y) \sim \fkD_0} \!\Big[ \ell_0\big( f_{\btheta}(\bx),y \big) \Big]
	\\
	\subjectto& &&\E_{(\bx,y) \sim \fkD_i} \!\Big[ \ell_i\big( f_{\btheta}(\bx),y \big) \Big] \leq c_i
		\text{,}
	\\
	&&&\qquad i = 1,\ldots,m
		\text{,}
\end{prob}
where~$\ell_i: \setR^k \times \calY \to [0,B]$, $i = 0,\dots,m$, together with the~$c_i$, encode the performance metric and the desired statistical properties of the solution. The value~$P^\star$ of~\eqref{P:csl} belongs to the extended real line, i.e., $\setR \cup \{\infty\}$. In particular, $P^\star = \infty$ whenever the constrained learning problem is infeasible, i.e., for all~$\btheta \in \Theta$ there exists~$i$ such that~$\E_{\fkD_i} \!\Big[ \ell_i\big( f_{\btheta}(\bx),y \big) \Big] > c_i$. We omit the random variables over which expectations are taken whenever they are clear from the context. Since~$\infty$ is used only as a symbol to denote infeasibility, we let~$\infty - \infty = 0$.

Observe that~\eqref{P:csl} explicitly considers statistical constraints rather than parameter constraints, such as quadratic reqularization~($\norm{\btheta}_2 \leq c$) or sparsity~($\norm{\btheta}_1 \leq c$). The latter, embedded in~$\Theta$, are deterministic and can be directly imposed using projections. In contrast, the constraints in~\eqref{P:csl} cannot even be evaluated as they depend on unknown distributions~$\fkD_i$. The goal of constrained learning theory is to establish when and how~\eqref{P:csl} can be solved using only samples from the~$\fkD_i$ as classical learning theory does for unconstrained learning.

\subsection{PAC learning}

The unconstrained version of~\eqref{P:csl}, namely,
\begin{prob}\label{P:sl}
	Q^\star = \min_{\btheta \in \Theta}\ \E_{(\bx,y) \sim \fkD_0} \!\Big[ \ell_0\big( f_{\btheta}(\bx),y \big) \Big]
		\text{,}
\end{prob}
is at the core of celebrated Bayesian estimators, such as Kalman filters, and virtually every modern learning algorithm~\cite{Kailath00l, Friedman01t, Shalev-Shwartz04u}. Classical learning theory studies the conditions under which~\eqref{P:sl} can be solved based only on samples from~$\fkD_0$, i.e., without the knowledge of the distribution itself. When the hypothesis class~$\calH$ induced by the parametrization allows~$Q^\star$ to be approximated arbitrarily well and with arbitrarily high probability, it is said to be \emph{probably approximately correct~(PAC) learnable}~\cite{Valiant84a, Haussler92d, Vapnik00t, Shalev-Shwartz04u}.

\begin{definition}[PAC learnability]\label{D:pac}

A hypothesis class~$\calH$ is~(agnostic)%
\footnote{Throughout this work, we consider only the agnostic setting~(as opposed to the realizable one~\cite[Def.~3.1]{Shalev-Shwartz04u}). For conciseness, we therefore omit the qualifier ``agnostic'' from now on.}
\emph{probably approximately correct~(PAC)} learnable with respect to the loss~$\ell_0$ if there exists an algorithm that, for every~$\epsilon, \delta \in (0,1)$ and every distribution~$\fkD_0$, can obtain~$f_{\btheta} \in \calH$ based on~$N_\calH(\epsilon,\delta)$ samples from~$\fkD_0$ such that
\begin{equation}\label{E:pac}
	\E \big[ \ell_0\big( f_{\btheta}(\bx),y \big) \big] \leq Q^\star + \epsilon \text{ with probability } 1-\delta
		\text{.}
\end{equation}
\end{definition}

\noindent While the original definition in~\cite{Valiant84a} also requires the sample complexity~$N_\calH$ to be polynomial in~$1/\epsilon$ and~$1/\delta$, this distinction will not be important to us.

PAC learnability limits the complexity of a hypothesis class: the richer~$\calH$ is, the more samples are required to pinpoint a hypothesis that has small error on the entire distribution.  Different complexity measures exist that allow us to determine whether a hypothesis class is learnable or not. Though our derivations do not rely on a specific one, we introduce two well-known measures below to illustrate our results.

Consider the sample set~$\calS = \{(\bx_i,y_i) \in \calX \times \calY\}$, $\abs{\calS} = N$, and the set~$\calF$ of functions $f: \calX \times \calY \to \setR$. Denote by~$\calF \circ \calS$ the set of vectors in~$\setR^\abs{\calS}$ achievable by applying the functions in~$\calF$ to the samples in~$\calS$, namely,
\begin{equation}\label{E:achievable_vectors}
	\calF \circ \calS = \Big\{ \big[ f(\bx_1,y_1)\ \cdots\ f(\bx_N,y_N) \big] \mid f \in \calF \Big\}
		\text{.}
\end{equation}
Both the VC dimension~(Definition~\ref{D:vc}) and the Rademacher complexity~(Definition~\ref{D:rademacher}) characterize the learning complexity of~$\calH$ in terms of the size of the set~$(\ell \circ \calH) \circ \calS$, where~$\ell \circ \calH$ is used to denote the set of functions~$\{\ell(h(\cdot),\cdot) \mid h \in \calH\}$. The former does so in a combinatorial, worst-case sense, whereas the latter does so on average.

\begin{definition}[VC dimension~{\cite[Section~3.6]{Vapnik00t}}]\label{D:vc}
The \emph{VC dimension} of a hypothesis class~$\calH$ with respect to the loss~$\ell$ is defined as
\begin{equation}\label{E:vc_dimension}
	d_\text{VC} = \max \{m \mid \Pi(m) = 2^m\}
\end{equation}
for the \emph{growth function} $\Pi(m) = \max_{\abs{\calS} = m}\ \abs{ (\calT \circ \ell \circ \calH) \circ \calS }$, where~$\calT = \{ \indicator(\cdot - b > 0) \mid b \in (0,B) \}$ is a set of threshold functions and~$\indicator(\calE)$ denotes the indicator function that is one over the event~$\calE$ and zero otherwise. To be more specific, the set~$(\calT \circ \ell \circ \calH) \circ \calS$ contains all vectors of the form~$\big[ \indicator \big( \ell\big( f_{\btheta}(\bx_n), y_n \big) - b \big) \big]$, i.e., all possible binary sequences obtained by thresholding the losses. Note that~$\calT$ has no effect for the~0-1~loss, in which case Definition~\ref{D:vc} reduces to the typical one found in, e.g., \cite[Def.~6.5]{Shalev-Shwartz04u} or~\cite[Def.~3.10]{Mohri18f}.

\end{definition}

\begin{definition}[Rademacher complexity~{\cite{Bartlett02r}\cite[Def.~3.1--3.2]{Mohri18f}}]\label{D:rademacher}
The \emph{Rademacher complexity} of a hypothesis class~$\calH$ with respect to the loss~$\ell$ and the distribution~$\fkD$ is defined as
\begin{equation}\label{E:rademacher_complexity}
	R_N = \E_{\calS \sim \fkD^N} \!\Big[ \hat{R}\big( (\ell \circ \calH) \circ \calS \big) \Big]
\end{equation}
for the \emph{empirical Rademacher complexity}~$\hat{R}$ defined for any set of vectors~$\calA \subset \setR^N$ as
\begin{equation}\label{E:rademacher_complexity_empirical}
	\hat{R}(\calA) = \E_{\bsigma} \left[ \sup_{\ba \in \calA}\ \frac{1}{N} \sum_{n = 1}^N \sigma_n a_n \right]
		\text{,}
\end{equation}
where~$\bsigma \in \setR^N$ is a random vector whose elements~$\sigma_n$ are drawn i.i.d.\ according to~$\Pr[\sigma_n = +1] = \Pr[\sigma_n = -1] = 1/2$.
\end{definition}

A fundamental result in learning theory states that for binary classification problems, $\calH$ is PAC learnable with respect to the~$0/1$-loss~$\indicator(f_{\btheta}(\bx) \neq y)$ if and only if it has finite VC dimension. In this case, ERM is a PAC learner, i.e.,
\begin{prob}[\textup{P-ERM}]\label{P:erm}
	\hat{Q}^\star = \min_{\btheta \in \Theta}\ \frac{1}{N} \sum_{n = 1}^{N}
		\ell_0\big( f_{\btheta}(\bx_{n}), y_{n} \big)
\end{prob}
for independent samples~$(\bx_n,y_n) \sim \fkD_0$ yields a PAC solution of~\eqref{P:sl}~(see, e.g., \cite[Thm.~6.7]{Shalev-Shwartz04u}). Though this equivalence does not hold in general, Definitions~\ref{D:vc} and~\ref{D:rademacher} can still be used to bound the sample complexity~$N_\calH$ of other learning tasks~(e.g., see~Proposition~\ref{T:emp_bounds}). Note that the Rademacher complexity depends on the distribution of the data while PAC learnability does not. Still, it is often the case that distribution-independent upper bounds can be found for~$R_N$~\cite{Shalev-Shwartz04u, Mohri18f}.

In constrained learning, however, it is not enough to approximate the value~$P^\star$ of~\eqref{P:csl}, since a solution must also satisfy its constraints. In fact, feasibility often takes priority over performance in constrained problems: regardless of how accurate a \emph{fair} classifier is, it serves no practical purpose unless it meets the fairness requirements~(see Section~\ref{S:sims} for an example). Hence, PAC learning is not sufficiently strict to address the problem of learning under requirements. In the sequel, we summarize the constrained learning framework introduced in~\cite{Chamon20p}.

\subsection{Probably approximately correct constrained learning}
\label{S:pacc}

Since we do not have access to the distributions~$\fkD_i$ required to evaluate~\eqref{P:csl}, we cannot expect to obtain an exact solution and must settle for one that is \emph{good enough}. Similar to classical learning theory, we next establish what is considered ``good enough'' for a constrained learning problem.

\begin{definition}[Near-PACC learnability~{\cite[Def.~2]{Chamon20p}}]\label{D:pacc}

A hypothesis class~$\calH$ is \emph{nearly probably approximately correct constrained~(near-PACC)} learnable with respect to~$\{\ell_0, (\ell_i,c_i)\}$, if there exists~$\epsilon_0 \geq 0$ and an algorithm that, for every~$\epsilon, \delta \in (0,1)$ and every distribution~$\fkD_i$, $i = 0,\dots,m$, can obtain~$f_{\btheta} \in \calH$ using~$N_\calH(\epsilon,\delta,m)$ samples from each~$\fkD_i$ that is, with probability $1-\delta$,
\begin{enumerate}[1)]
	\item probably approximately optimal, i.e.,
	\begin{equation}\label{E:pacc_optimal}
		\abs{\E_{(\bx,y) \sim \fkD_0} \!\big[ \ell_0\big( f_{\btheta}(\bx),y \big) \big] - P^\star} \leq \epsilon_0 + \epsilon
			\text{, and}
	\end{equation}

	\item probably approximately feasible, i.e.,
	\begin{equation}\label{E:pacc_feasible}
		\E_{(\bx,y) \sim \fkD_i} \!\big[ \ell_i\big( f_{\btheta}(\bx),y \big) \big]
			\leq c_i + \epsilon
			\text{,} \quad \text{for all } i \geq 1
			\text{.}
	\end{equation}
\end{enumerate}

\end{definition}

Definition~\ref{D:pacc} is an extension of the PAC framework from classical learning theory to the problem of learning under requirements. Indeed, for~$\epsilon_0 = 0$, \eqref{E:pacc_optimal} is the classical definition of PAC learnability~(see Definition~\ref{D:pac}). In fact, PACC learnability implies PAC learnability~(Remark~\ref{R:pac}). This is, however, not enough to enable constrained learning since a PAC~$f_{\btheta}$ may not be feasible for~\eqref{P:csl}. Hence, a PACC learner must also satisfy the approximate feasibility condition~\eqref{E:pacc_feasible}. The additional ``C'' in PACC serves to remind ourselves of this fact.

Another important distinction with PAC learning is the presence of a fixed tolerance~$\epsilon_0$. Notice that this tolerance is independent of the distributions and affects only the value of the problem, i.e., it does not interfere with either the sample complexity~$N_\calH$ or the constraint satisfaction~\eqref{E:pacc_feasible}. Instead, it characterizes an intrinsic limitation of the learning task related to the \emph{approximation error} found in classical~(unconstrained) learning~\cite{Shalev-Shwartz04u}. However, it is now coupled to the learning problem by the constraints and can no longer be treated as a separate source of error. Hence, near-PACC learnability is in fact a hierarchy: when~$\epsilon_0 \geq B$, \eqref{E:pacc_optimal} holds trivially and near-PACC reduces to a feasibility learning problem. We are therefore interested in the smallest~$\epsilon_0$ for which Definition~\ref{D:pacc} holds and when it occurs for~$\epsilon_0 = 0$, we simply say that~$\calH$ is \emph{PACC learnable}.

Finally, observe that the sample complexity~$N_\calH$ may now depend on the number of constraints~$m$. In fact, it often does~(Theorem~\ref{T:main}). This dependency precludes the formulation of pathological learning problems that could be described using an exponential number of constraints.

\begin{remark}\label{R:pac}
It is easy to see that if the hypothesis class~$\calH$ is PACC learnable with respect to~$\{\ell_0,(\ell_i,B)\}$, then it is PAC learnable with respect to~$\ell_0$. Indeed, consider the constrained learning problem
\begin{prob}\label{P:pacc_pac}
	P_0^\star = \min_{\btheta \in \Theta}&
		&&\E_{\fkD_0} \!\Big[ \ell_0\big( f_{\btheta}(\bx),y \big) \Big]
	\\
	\subjectto& &&\E_{\fkD_0} \!\Big[ \ell_i\big( f_{\btheta}(\bx),y \big) \Big] \leq B
		\text{,}
	\\
	&&&\quad i = 1,\ldots,m
		\text{.}
\end{prob}
Since the losses are~$B$-bounded, the feasibility set of~\eqref{P:pacc_pac} is~$\Theta$. Hence, \eqref{P:pacc_pac} has the same value as the unconstrained learning problem~\eqref{P:sl}, i.e., $P_0^\star = Q^\star$. Given the hypothesis class~$\calH$ is PACC learnable~($\epsilon_0 = 0$) with respect to~$\{\ell_0,(\ell_i,B)\}$, there exists an algorithm that can obtain~$f_{\btheta} \in \calH$ such that~$\abs{\E_{\fkD_0} \!\big[ \ell_0\big( f_{\btheta}(\bx),y \big) \big] - Q^\star} \leq \epsilon$ from~$\bar{N}(\epsilon,\delta,m)$ samples for any~$\epsilon,\delta \in (0,1)$. Hence, $\calH$ is also PAC learnable~(Definition~\ref{D:pac}).
\end{remark}

\subsection{Empirical constrained risk minimization}

Despite its similarities to PAC learning~(e.g., see Remark~\ref{R:pac}), PACC learning has strikingly different behaviors. In particular, while PAC learnability is often equivalent to ERM learnability~\cite[Thm.~6.7]{Shalev-Shwartz04u}, this is not the case for constrained learning. Said otherwise, while~\eqref{P:erm} is typically a PAC learner, its constrained counterpart is generally not.

Indeed, consider the empirical constrained risk minimization~(ECRM) problem
\begin{prob}[\textup{P-ECRM}]\label{P:ecrm}
	\hat{P}^\star = \min_{\btheta \in \Theta}&
		&&\frac{1}{N_0} \sum_{n_0 = 1}^{N_0} \ell_0\big( f_{\btheta}(\bx_{n_0}), y_{n_0} \big)
	\\
	\subjectto& &&\frac{1}{N_i} \sum_{n_i = 1}^{N_i} \ell_i\big( f_{\btheta}(\bx_{n_i}), y_{n_i} \big)
		\leq c_i
		\text{,}
	\\
	&&&\qquad i = 1,\ldots,m
		\text{,}
\end{prob}
which approximates the expectations in~\eqref{P:csl} using~$N_i$ samples~$(\bx_{n_i},y_{n_i}) \sim \fkD_i$. The following example shows that~\eqref{P:ecrm} can be almost surely wrong, even for a PAC learnable hypothesis class.

\begin{example}\label{R:ecrm}
Consider the learning problem
\begin{prob}\label{P:example_ecrm}
	P_e^\star = \min_{\btheta \in \Theta}&
		&&J(\btheta) \triangleq \E_{\fkD_0} \!\big[ \vert y \btheta^\top \bx \vert \big]
	\\
	\subjectto& &&\E_{\fkD_1} \!\big[ y \btheta^\top \bx \big] \leq -1
		\text{,} \quad
	\E_{\fkD_2} \!\big[ y \btheta^\top \bx \big] \leq 1
		\text{,}
\end{prob}
where~$\fkD_0$ is the distribution of
\begin{equation*}
	(\bx,y) = \begin{cases}
		([\tau,-\tau],-1) \text{,} &\text{with prob.\ } 1/2
		\\
		([0,\alpha],1) \text{,} &\text{with prob.\ } 1/2
	\end{cases}
		\text{,}
\end{equation*}
$\fkD_1$ is such that~$(\bx,y) = ([-1,\tau],1)$, and~$\fkD_2$ is such that~$(\bx,y) = ([-\tau,1],1)$, where~$\alpha$ is drawn uniformly at random from~$[0,1/4]$ and~$\tau$ is drawn uniformly at random from~$[-1/2,1/2]$. Notice that the~$\fkD_i$ are therefore correlated through the random variable~$\tau$. The hypothesis class is induced by the finite set~$\Theta = \{[1,1]; [1,0]\}$. Notice that under these distributions, the constraints in~\eqref{P:example_ecrm} reduce to~$-\theta_1 \leq -1$ and~$\theta_2 \leq 1$. Hence, the statistical~\eqref{P:example_ecrm} is effectively unconstrained and its optimal value is~$P_e^\star = 1/16$ since
\begin{equation}\label{E:example_solutions}
	J(\btheta) = \begin{cases}
		1/16 \text{,} &\btheta = [1,1]
		\\
		1/8 \text{,} &\btheta = [1,0]
	\end{cases}
		\text{.}
\end{equation}

For its empirical version, however, the constraints can be written as
\begin{equation*}
	\theta_1 \geq 1 + \bar{\tau} \theta_2
	\quad \text{and} \quad
	\theta_2 \leq 1 + \bar{\tau} \theta_1
		\text{,}
\end{equation*}
where~$\bar{\tau} = \dfrac{1}{N} \sum_{n = 1}^N \tau_{n}$ is the empirical average of i.i.d.\ samples~$\tau_{n}$ drawn uniformly at random from~$[-1/2,1/2]$. Whenever~$\bar{\tau} > 0$, the first constraint implies~$\theta_2 = 0$. Hence, from~\eqref{E:example_solutions}, the statistical objective evaluates to~$J(\bhtheta) = 1/8$. Similarly, if~$\bar{\tau} < 0$, then the second constraint implies~$\theta_2 = 0$, which again yields a population value of~$J(\bhtheta) = 1/8$. Given that~$\tau$ is a continuous distribution, we immediately obtain that
\begin{equation*}
	\Pr\big[\big\vert J(\bhtheta) - P_e^\star \big\vert \leq 1/32 \big] = \Pr[\bar{\tau} = 0] = 0
		\text{.}
\end{equation*}
On the other hand, the unconstrained learning problem~$Q_e^\star = \min_{\btheta \in \Theta}\ \E_{\fkD_0} \!\big[ \vert y \btheta^\top \bx \vert \big]$ can be solved using ERM and~$\Pr[J(\bhtheta) \leq Q_e^\star + \epsilon] \to 0$ as~$N \to \infty$ for all~$\epsilon > 0$ by the law of large numbers.
\end{example}

Example~\ref{R:ecrm} shows that~\eqref{P:ecrm} may not be a PACC learner even when the hypothesis class is PAC/ERM learnable. This occurs because the requirements in~\eqref{P:example_ecrm} are so sensitive that they modify the feasibility set for almost every realization of its empirical version. For~\eqref{P:example_ecrm}, ECRM turns out to be a near-PACC, although with~$\epsilon_0 = 3/8 \gg P_e^\star$. This again reinforces the importance of~$\epsilon_0$ to be small for a near-PACC learner to be useful. Additionally, if the requirements are stringent, the empirical~\eqref{P:ecrm} may be infeasible even though the original~\eqref{P:csl} has a feasible solution, in which case the difference between their values is unbounded.

Example~\ref{R:ecrm} suggests that we may overcome these issues by relaxing~\eqref{P:ecrm}, i.e., by replace its constraints with
\begin{equation}\label{E:ecrm_relaxation}
	\frac{1}{N_i} \sum_{n_i = 1}^{N_i} \ell_i\big( f_{\btheta}(\bx_{n_i}), y_{n_i} \big) \leq c_i + \xi
		\text{,}
\end{equation}
where~$\xi > 0$ is an estimate of the empirical approximation error that guarantees the feasibility set of~\eqref{P:csl} is included in that of~\eqref{P:ecrm} with high probability. Yet, while~\eqref{E:ecrm_relaxation} addresses the issue of feasibility in Definition~\ref{D:pacc}, it is not clear how this relaxation affects the value the ECRM solution. Indeed, the feasibility set of the relaxed~\eqref{P:ecrm} is likely larger than that of~\eqref{P:csl}, allowing hypotheses with potentially lower objective value that are excluded by the original problem. This can lead to estimates of~$P^\star$ that violate the two-sided bound in~\eqref{E:pacc_optimal}. For convex problems, perturbation results can be used to connect the value of~$\xi$ to the magnitude of the deviation from~$P^\star$~\cite{Bonnans00p, Bertsekas09c}. For most modern machine learning models, however, \eqref{P:csl} is non-convex even if the losses themselves are convex. These issues are only exacerbated by the fact that it is rarely possible to obtain tight estimates for~$\xi$.

The non-convexity of~\eqref{P:ecrm} also raises computational concerns. While unconstrained learning faces a similar issue, it is exacerbated here by the presence of constraints. Indeed, though it may be possible to find good approximate minimizers of~$\ell_0$ using, e.g., gradient descent~\cite{Zhang17u, Ge18l, Brutzkus17g, Soltanolkotabi18t}, even obtaining a feasible~$\btheta$ for~\eqref{P:ecrm} may be challenging. Penalty-based formulations that incorporate a fixed linear combination of the constraints the objective of an unconstrained problem are often used to sidestep these issues~\cite{Goodfellow15e, Berk17a, Xu18a, Zhao18t, Sinha18c}. However, classical learning theory only guarantees generalization for the overall value of the objective and not for requirements it describes. In fact, there may not even be a set of weights~(regularization parameters) that yields a solution of~\eqref{P:ecrm}, leading to infeasible results or unacceptably poor performance~\cite{Bertsekas09c}.

In the sequel, we put forward an alternative learning rule based on empirical duality and show that it is a~(near-)PACC learner, despite the non-convexity of~\eqref{P:csl}. Doing so, we derive mild conditions under which PACC learning is not considerably harder than PAC learning. Another advantage of this learning rule is that it involves solving only unconstrained learning problems, leading to a more practical constrained learning algorithm than ECRM~(Section~\ref{S:algorithm}).

% Empirical Dual Learning
%!TEX root=../tit_csl

\section{Empirical Dual Learning}
\label{S:empirical_dual}

In this section, we overcome the shortcomings of~\eqref{P:ecrm} by analyzing the gap between~\eqref{P:csl} and its empirical dual problem. Our goal is to quantify the loss of optimality incurred by replacing the constrained, statistical problem~\eqref{P:csl} by an unconstrained, empirical one.

Explicitly, define the~\emph{empirical Lagrangian} of~\eqref{P:csl} as
\begin{equation}\label{E:empirical_lagrangian}
\begin{aligned}
	\hat{L}(\btheta, \bmu) &=
		\frac{1}{N_0} \sum_{n_0 = 1}^{N_0} \ell_0\big( f_{\btheta}(\bx_{n_0}), y_{n_0} \big)
		\\
		{}&+ \sum_{i = 1}^m \mu_i \left[ \frac{1}{N_i} \sum_{n_i = 1}^{N_i}
			\ell_i\big( f_{\btheta}(\bx_{n_i}), y_{n_i} \big) - c_i \right]
		\text{,}
\end{aligned}
\end{equation}
based on~$N_i$ samples~$(\bx_{n_i},y_{n_i}) \sim \fkD_i$, where~$\bmu \in \setR^m_+$ collects the dual variables~$\mu_i$ relative to each constraint and~$\setR_+$ denotes the set of non-negative real numbers. Defining the \emph{empirical dual function} associated with~\eqref{E:empirical_lagrangian} as
\begin{equation}\label{E:empirical_dual_function}
	\hat{d}(\bmu) = \min_{\btheta \in \Theta}\ \hat{L}(\btheta, \bmu)
		\text{,}
\end{equation}
the \emph{empirical dual problem} of~\eqref{P:csl} is written as
\begin{equation}\label{P:empirical_dual}
	\hat{D}^\star = \max_{\bmu \in \hat{\calM}}\ \hat{d}(\bmu)
		\text{,}
		\tag{$\widehat{\textup{D}}$\textup{-CSL}}
\end{equation}
where~$\hat{\calM} = \{ \bmu \in \setR_+^m \mid \hat{d}(\bmu) > -\infty \}$ is the domain of~$\hat{d}$. As with~\eqref{P:csl}, $\hat{D}^\star$ takes values on the extended real line~$\setR \cup \{\infty\}$ and~$\hat{D}^\star = \infty$ whenever~\eqref{P:ecrm} is infeasible, i.e., for all~$\btheta \in \Theta$ there exists~$i$ such that~$\frac{1}{N_i} \sum_{n_i = 1}^{N_i} \ell_i\big( f_{\btheta}(\bx_{n_i}), y_{n_i} \big) > c_i$.

There are two ways of viewing~\eqref{P:empirical_dual}. The first is to consider it as the dual problem of~\eqref{P:ecrm}~(see Figure~\ref{F:proof}). In other words, to consider the dual of the empirical counterpart of~\eqref{P:csl}. However, due to the non-convex nature of these problems, it is hard to relate their value beyond the fact that~\eqref{E:empirical_dual_function} is a relaxation of~\eqref{P:ecrm}, so that~$\hat{D}^\star \leq \hat{P}^\star$. An alternative view that will turn out to be more fruitful is to consider~\eqref{P:empirical_dual} as the empirical counterpart of the dual problem of~\eqref{P:csl}, namely,
\begin{prob}[\textup{D-CSL}]\label{P:param_dual_csl}
	D^\star = \max_{\bmu \in \calM}\ d(\bmu)
\end{prob}
solved over the domain~$\calM$ of the dual function
\begin{equation}\label{E:param_dual_function_csl}
	d(\bmu) = \min_{\btheta \in \Theta}\ L(\btheta, \bmu)
\end{equation}
for the Lagrangian
\begin{equation}\label{E:lagrangian_csl_param}
\begin{aligned}
	L(\btheta, \bmu) &= \E_{(\bx,y) \sim \fkD_0} \!\Big[ \ell_0\big( f_{\btheta}(\bx),y \big) \Big]
	\\
	{}&+ \sum_{i = 1}^m \mu_i \Big(
			\E_{(\bx,y) \sim \fkD_i} \!\Big[ \ell_i\big( f_{\btheta}(\bx),y \big) \Big] - c_i
		\Big)
		\text{.}
\end{aligned}
\end{equation}
Figure~\ref{F:proof} provides an overview of the optimization problems defined in this paper and their relation.

Observe that while the empirical Lagrangian~\eqref{E:empirical_lagrangian} has a form reminiscent of the regularized formulations often used to tackle learning under requirements, the weights~(dual variables)~$\bmu$ are optimization variables in~\eqref{P:empirical_dual} rather than constants~(adjusted by trial and error or cross-validation). Also note that while~\eqref{P:empirical_dual} is exactly the dual problem of~\eqref{P:ecrm}, their non-convex nature implies that~$\hat{D}^\star$ is a lower bound on~$\hat{P}^\star$~(weak duality~\cite{Bertsekas09c}), but not necessarily a tight one. This is therefore not enough to establish the near-optimality~\eqref{E:pacc_optimal} required by PACC learnability.

Nevertheless, the main result of this section~(Theorem~\ref{T:main}) establishes that \eqref{P:empirical_dual} is indeed a near-PACC learner under the following assumptions:

\begin{assumption}\label{A:losses}
	The losses~$\ell_i(\cdot,y)$, $i = 0,\dots,m$, are $M$-Lipschitz continuous functions for all~$y \in \calY$.
\end{assumption}

\begin{assumption}\label{A:empirical}
	For~$i = 0,\dots,m$, there exists~$\zeta_i(N,\delta) \geq 0$ monotonically decreasing with~$N$ such that
	\begin{equation}\label{E:emp_approximation}
		\abs{\E_{\fkD_i} \!\big[ \ell_i(f_{\btheta}(\bx), y) \big]
			- \frac{1}{N} \sum_{n = 1}^{N} \ell_i(f_{\btheta}(\bx_n), y_n)} \leq \zeta_i(N,\delta)
			\text{,}
	\end{equation}
	for all~$\btheta \in \Theta$, with probability~$1-\delta$ over independent draws~$(\bx_n,y_n) \sim \fkD_i$.
\end{assumption}

\begin{assumption}\label{A:parametrization}
	There exists~$\nu \geq 0$ such that for each~$\phi \in \calHb = \cconv(\calH)$, the closed convex hull of~$\calH$, there exists a~$\btheta \in \Theta$ for which
	\begin{equation}\label{E:param_approximation}
		\E_{\fkD_i} \!\big[ \abs{\phi(\bx) - f_{\btheta}(\bx)} \big] \leq \nu
			\text{.}
	\end{equation}
	The closure is taken with respect to the total variation measures~\eqref{E:param_approximation} induced by the distributions~$\fkD_i$.
\end{assumption}

\begin{assumption}\label{A:slater}
	There exist~$\bm{{\theta^\prime}}, \bm{{\hat{\theta}^\prime}} \in \Theta$ such that~$f_{\bm{{\theta^\prime}}}$ and~$f_{\bm{{\hat{\theta}^\prime}}}$ are strictly feasible for~\eqref{P:csl} and~\eqref{P:ecrm} respectively, i.e., such that, for all~$i = 1,\dots,m$,
	\begin{align*}
		\E_{\fkD_i} \!\big[ \ell_i\big( f_{\bm{{\theta^\prime}}}(\bx),y \big) \big] &\leq c_i - M \nu - \xi
			\text{,}
		\\
		\frac{1}{N_i} \sum_{n_i = 1}^{N_i} \ell_i\big( f_{\bm{{\hat{\theta}^\prime}}}(\bx_{n_i}), y_{n_i} \big) &\leq c_i - \xi
			\text{,}
	\end{align*}
	with~$M$ as in Assumption~\ref{A:losses}, $\nu$ as in~\eqref{E:param_approximation}, and~$\xi > 0$.
\end{assumption}

Assumption~\ref{A:empirical} is known in learning theory as \emph{uniform convergence}. It is often used to prove PAC learnability, though they are not equivalent: \eqref{E:emp_approximation} is sufficient, but in general not necessary, for PAC learnability~\cite{Shalev-Shwartz04u}. While it may appear strict, it can be replaced by, e.g., a bound on the VC dimension or Rademacher complexity~(Definitions~\ref{D:vc}--\ref{D:rademacher}).

\begin{proposition}\label{T:emp_bounds}
	Let~$d_\text{VC}$ and~$R_{N}$ be upper bounds on the VC dimension and Rademacher complexity of~$\calH$ with respect to~$\ell_i$ respectively. Then, \eqref{E:emp_approximation} holds with
	\begin{subequations}\label{E:empirical_bounds}
	\begin{align}
		\zeta_i(N, \delta) &= B \sqrt{\frac{1}{N} \left[ 1 + \log\left( \frac{4(2N)^{d_\text{VC}}}{\delta} \right) \right]}
			\label{E:empirical_bounds_vc}
			\quad \text{or}
		\\
		\zeta_i(N, \delta) &= 2 B R_N + B \sqrt{\frac{\log(1/\delta)}{2 N}}
			\label{E:empirical_bounds_rademacher}
			\text{.}
	\end{align}
	\end{subequations}
\end{proposition}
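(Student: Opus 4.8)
The plan is to recognize that the bound~\eqref{E:emp_approximation} demanded by Assumption~\ref{A:empirical} is precisely a two-sided \emph{uniform convergence} statement for the $[0,B]$-valued loss class~$\ell_i \circ \calH$, and therefore to establish it by invoking the classical concentration machinery of statistical learning theory—once in its Rademacher form and once in its VC form. In both cases the object to control is the empirical process
\[
	\Psi_i(\calS) = \sup_{\btheta \in \Theta} \abs{ \E_{\fkD_i}\!\big[ \ell_i(f_{\btheta}(\bx),y) \big] - \frac{1}{N} \sum_{n=1}^{N} \ell_i(f_{\btheta}(\bx_n),y_n) } \text{,}
\]
and the two displayed expressions for~$\zeta_i$ arise by upper bounding~$\Psi_i$ with high probability and then inverting in the confidence level~$\delta$.

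For the Rademacher bound~\eqref{E:empirical_bounds_rademacher}, I would first note that, since each loss lies in~$[0,B]$, replacing a single sample~$(\bx_n,y_n)$ perturbs~$\Psi_i$ by at most~$B/N$. McDiarmid's bounded-differences inequality then yields $\Psi_i \le \E_{\calS}[\Psi_i] + B\sqrt{\log(1/\delta)/(2N)}$ with probability~$1-\delta$, which is exactly the second term of~\eqref{E:empirical_bounds_rademacher}. It remains to bound the expected supremum, and this is what the symmetrization argument delivers: introducing an independent ghost sample and Rademacher signs~$\sigma_n$ controls $\E_{\calS}[\Psi_i]$ by twice the Rademacher complexity~$R_N$ of Definition~\ref{D:rademacher}, with the $[0,B]$ range of the losses supplying the leading factor. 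Combining the two inequalities gives~\eqref{E:empirical_bounds_rademacher}.

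For the VC bound~\eqref{E:empirical_bounds_vc}, I would instead route through the growth function~$\Pi$ of Definition~\ref{D:vc}. The double-sample (ghost-sample) symmetrization of Vapnik and Chervonenkis reduces the tail of~$\Psi_i$ to a union bound over the finitely many loss patterns realizable on~$2N$ points; bounding each pattern's deviation by Hoeffding's inequality and the number of patterns by the growth function produces a tail of the form $\Pr[\Psi_i > \epsilon] \le 4\,\Pi(2N)\,\exp\!\big(1 - N\epsilon^2/B^2\big)$. Sauer's lemma then replaces~$\Pi(2N)$ by~$(2N)^{d_\text{VC}}$, and setting the right-hand side equal to~$\delta$ and solving for~$\epsilon$ reproduces the stated~$\zeta_i(N,\delta)$; the additive~$1$ under the square root is precisely the constant carried through this inversion.

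The core ingredients—McDiarmid's inequality, symmetrization, Sauer's lemma, and the double-sample trick—are entirely classical, so no genuinely new difficulty appears; the proposition merely collects them. The only delicate points, and hence where I would be most careful, are bookkeeping in nature: tracking the $[0,B]$ scale of the losses through every concentration step (this is what converts the usual unit-interval constants into the factors of~$B$ and~$B^2$ above), and verifying that the threshold set~$\calT$ of Definition~\ref{D:vc} legitimately reduces the real-valued class~$\ell_i \circ \calH$ to a binary class whose shattering is governed by~$d_\text{VC}$, so that Sauer's lemma is applicable. Matching the exact leading constants in~\eqref{E:empirical_bounds} to a particular textbook form of the uniform-convergence theorem is then the last, and only, fiddly step.
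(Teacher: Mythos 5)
Your proposal is correct and follows essentially the same route as the paper, whose proof consists simply of citations to the classical results you reconstruct: Vapnik's double-sample/growth-function bound~(\cite[eq.~(3.26)]{Vapnik00t}, inverted in~$\delta$ exactly as you describe, with the~$\log(x) \geq 1 - 1/x$ massaging producing the additive~$1$) for~\eqref{E:empirical_bounds_vc}, and the McDiarmid-plus-symmetrization argument underlying~\cite[Thm.~3.3]{Mohri18f}, extended to a two-sided statement, for~\eqref{E:empirical_bounds_rademacher}. The bookkeeping you flag at the end---carrying the~$[0,B]$ scale through each concentration step and verifying that the threshold class~$\calT$ of Definition~\ref{D:vc} reduces the real-valued loss class to a binary one so Sauer's lemma applies---is indeed the only delicate point, and it is precisely what the paper delegates to the cited references rather than deriving.
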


\begin{proof}
See~\cite[eq.~(3.26)]{Vapnik00t} for~\eqref{E:empirical_bounds_vc} using the fact that~$\log(x) \geq 1-1/x$. While guarantees based on Rademacher complexity are typically stated as one-sided bounds~\cite[Thm.~3.3]{Mohri18f}, their proof are based on symmetrization arguments and can therefore be extended to yield the two-sided bound obtained from~\eqref{E:emp_approximation} and~\eqref{E:empirical_bounds_rademacher}.
\end{proof}

Whereas Assumption~\ref{A:empirical} limits the complexity of the parametrization, Assumption~\ref{A:parametrization} requires that it still be sufficiently rich, in the sense that it is a fine cover of its convex hull or equivalently, of the underlying function space it parametrizes. This occurs, for instance, when~$f_{\btheta}$ is a neural network~(parametrizing the space of continuous functions, see, e.g., \cite{Hornik89m}) or a finite linear combinations of kernels~(parametrizing a reproducing kernel Hilbert space, RKHS~\cite{Berlinet11r}). In both cases, the parametrizations satisfy a uniform approximation condition that is stronger than the total variation requirement in~\eqref{E:param_approximation}. Assumption~\ref{A:slater} guarantees that the constrained problems~\eqref{P:csl} and~\eqref{P:ecrm} are feasible and that their dual problems are well-posed. Observe, once again, that the losses~$\ell_i$ need not be convex.

The main result of this section is collected in Theorem~\ref{T:main}. For clarity, it focuses on the classification setting, i.e., finite~$\calY$. The regression case is considered in Appendix~\ref{X:zdg_regression}. In what follows, we say a measure~$\fkm$ is \emph{non-atomic} if it does not contain Dirac deltas, i.e., if for every measurable set~$\calX$ of positive measure~($\fkm(\calX) > 0$) there exists a measurable~$\calY \subset \calX$ such that~$\fkm(\calX) > \fkm(\calY) > 0$. Additionally, we say the function space~$\calF$ is \emph{decomposable} if for every~$\phi,\phi^\prime \in \calF$ and measurable set~$\calZ$, it holds that~$\bar{\phi} \in \calF$ for
\begin{equation*}
	\bar{\phi}(\bx) =
	\begin{cases}
		\phi(\bx) \text{,} &\bx \in \calZ
		\\
		\phi^\prime(\bx) \text{,} &\bx \notin \calZ
	\end{cases}
		\text{.}
\end{equation*}
Lebesgue spaces~(e.g., $L_2$ or~$L_\infty$) or more generally Orlicz spaces are typical examples of decomposable function spaces~\cite{Rockafellar98v}.

We further introduce a functional version of~\eqref{P:csl}, namely,
\begin{prob}[$\widetilde{\textup{P}}$\textup{-CSL}]\label{P:csl_variational}
	\tilde{P}^\star = \min_{\phi \in \calHb}&
		&&\E_{(\bx,y) \sim \fkD_0} \!\Big[ \ell_0\big( \phi(\bx), y \big) \Big]
	\\
	\subjectto& &&\E_{(\bx,y) \sim \fkD_i} \!\Big[ \ell_i\big( \phi(\bx), y \big) \Big]
		\leq c_i
		\text{,}
	\\
	&&&\qquad i = 1,\ldots,m
		\text{,}
\end{prob}
where~$\calHb = \cconv(\calH)$ denotes the closed convex hull of the hypothesis class~$\calH$ induced by the parametrization~$f_{\btheta}$~(as in Assumption~\ref{A:parametrization}). Its dual problem is defined as
\begin{prob}[$\widetilde{\textup{D}}$\textup{-CSL}]\label{P:dual_csl_variational}
	\tilde{D}^\star
	   = \max_{\bmu \in \tilde{\calM}}\
	        \min_{\phi \in \calHb}\ \tilde{L}\left( \phi, \bmu \right)
	           = \max_{\bmu \in \tilde{\calM}}\
	                \tilde{d}(\bmu)
		\text{,}
\end{prob}
where~$\tilde{\calM}$ is the domain of~$\tilde{d}(\bmu) = \min_{\phi \in \calHb} \tilde{L}\left( \phi, \bmu \right)$, for the Lagrangian
\begin{equation}\label{E:lagrangian_csl_variational}
\begin{aligned}
	\tilde{L}(\phi, \bmu) &= \E_{\fkD_0} \!\Big[ \ell_0\big( \phi(\bx),y \big) \Big]
	\\
	{}&+ \sum_{i = 1}^m \mu_i \Big(
			\E_{\fkD_i} \!\Big[ \ell_i\big( \phi(\bx),y \big) \Big] - c_i
		\Big)
		\text{.}
\end{aligned}
\end{equation}
Notice that~$L(\btheta,\bmu) = \tilde{L}(f_{\btheta}, \bmu)$ for the Lagrangian of~\eqref{P:csl} in~\eqref{E:lagrangian_csl_param}.

\begin{figure}[t]
\centering
\includesvg{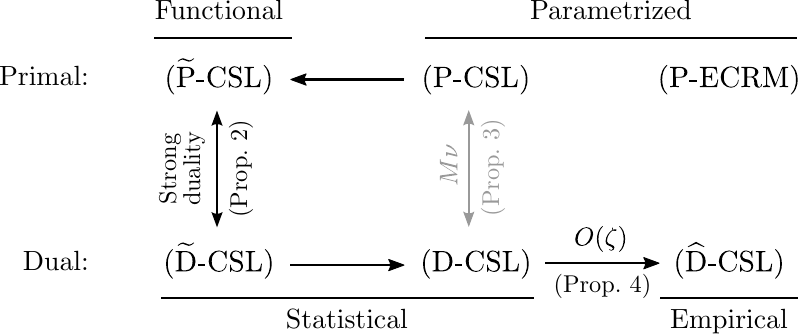}
\caption{Overview of the relation the learning problems studied in this work.}
	\label{F:proof}
\end{figure}

\begin{theorem}\label{T:main}
Let either
\begin{enumerate}[(a)]
	\item the function~$\ell_i$ be convex for~$i = 0,\dots,m$; or
	\item $\calY$ be finite, the conditional random variables~$\bx \vert y$ induced by the~$\fkD_i$ be non-atomic, and~$\cconv(\calH)$, the closed convex hull of~$\calH$, be decomposable.
\end{enumerate}
Let~$\bhmu$ be a solution of the dual problem~\eqref{P:empirical_dual} with finite~$\hat{D}^\star$. Under Assumptions~\ref{A:losses}--\ref{A:slater}, there exists~$\bhtheta \in \argmin_{\btheta \in \Theta}\ \hat{L}(\btheta,\bhmu)$ such that, with probability~$1-(3m+2)\delta$,
\begin{subequations}\label{E:empirical_duality_gap}
\begin{gather}
	\abs{P^\star - \hat{D}^\star} \leq (1 + \Delta) (M \nu + \overline{\zeta})
		\quad \text{and}
		\label{E:empirical_duality_gap_value}
	\\
	\E_{\fkD_i} \!\Big[ \ell_i\big( f_{\bhtheta}(\bx),y \big) \Big]
		\leq c_i + \zeta_{i}(N_i, \delta)
		\text{,} \quad \text{for all } i
		\text{,}
		\label{E:empirical_duality_gap_feas}
\end{gather}
\end{subequations}
where~$P^\star$ is the value of~\eqref{P:csl}, $\overline{\zeta} = \max_i \zeta_i(N_i, \delta)$, and~$\Delta = \max (\norm{\bmu^\star}_1, \norm{\bhmu}_1, \norm{\bmut}_1) \leq C$ for some constant~$C < \infty$, where~$\bmu^\star$ is a solution of~\eqref{P:param_dual_csl} and~$\bmut \in \setR^m_+$ maximizes~$\tilde{d}_{\nu}(\bmu) = \tilde{d}(\bmu) +  M\nu \norm{\bmu}_1$ for $\tilde{d}(\bmu)$ as in~\eqref{P:dual_csl_variational}.
\end{theorem}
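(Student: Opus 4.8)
The plan is to bound $\abs{P^\star-\hat{D}^\star}$ by telescoping through the functional problem \eqref{P:csl_variational} and its dual \eqref{P:dual_csl_variational}, inserting the intermediate values $\tilde{P}^\star$, $\tilde{D}^\star$, and $D^\star$ and charging each hop to one of the three gaps flagged in Section~\ref{S:intro}: the \emph{duality gap} (Section~\ref{S:duality_gap}), the \emph{parameterization gap} (Section~\ref{S:parameterization_gap}), and the \emph{empirical gap} (Section~\ref{S:empirical_gap}). The linchpin is that, whether $\ell_i$ is convex (case~(a)) or the non-atomic/decomposable hypotheses of case~(b) hold, the functional problem \eqref{P:csl_variational}---and its constraint-tightened variants---has \emph{zero duality gap}, a fact I would take from the functional-optimization machinery invoked in Section~\ref{S:duality_gap}. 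I would then prove the two sides of \eqref{E:empirical_duality_gap_value} separately.

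The easy direction uses only weak duality and Assumption~\ref{A:empirical}. Since \eqref{P:empirical_dual} is the dual of \eqref{P:ecrm} and \eqref{P:param_dual_csl} the dual of \eqref{P:csl}, weak duality gives $D^\star\le P^\star$. Uniform convergence \eqref{E:emp_approximation} applied to the $m+1$ losses shows that $\hat{L}(\btheta,\bmu)$ and $L(\btheta,\bmu)$ differ by at most $(1+\norm{\bmu}_1)\overline{\zeta}$ uniformly in $\btheta$; minimizing over $\btheta$ and evaluating at the empirical dual optimum $\bhmu$ yields $\hat{D}^\star=\hat{d}(\bhmu)\le d(\bhmu)+(1+\norm{\bhmu}_1)\overline{\zeta}\le D^\star+(1+\Delta)\overline{\zeta}$, hence $\hat{D}^\star-P^\star\le(1+\Delta)\overline{\zeta}$.

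For the hard direction I would route through the functional problem with constraints tightened to $c_i-M\nu$, whose optimal value I denote $\tilde{P}^\star_{-M\nu}$. Its Lagrangian dual function is exactly $\tilde{d}_\nu(\bmu)=\tilde{d}(\bmu)+M\nu\norm{\bmu}_1$, so zero duality gap gives $\tilde{P}^\star_{-M\nu}=\tilde{d}_\nu(\bmut)$. The parameterization step (Assumption~\ref{A:parametrization} with the $M$-Lipschitz losses of Assumption~\ref{A:losses}) shows that any $\phi$ feasible for the tightened problem admits $\btheta\in\Theta$ with $\E_{\fkD_i}\abs{\phi-f_{\btheta}}\le\nu$, so that $f_{\btheta}$ is feasible for \eqref{P:csl} and within $M\nu$ in objective; thus $P^\star\le\tilde{P}^\star_{-M\nu}+M\nu=\tilde{d}_\nu(\bmut)+M\nu$. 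Using $\tilde{d}\le d$ (minimizing the same Lagrangian over the larger set $\calHb\supseteq\calH$) together with $d(\bmut)\le D^\star$ gives $P^\star\le D^\star+M\nu(1+\norm{\bmut}_1)$, and bounding $D^\star$ by $\hat{D}^\star$ exactly as above (now at the statistical dual optimum $\bmu^\star$) produces $P^\star-\hat{D}^\star\le(1+\Delta)(M\nu+\overline{\zeta})$. Combining the two directions yields \eqref{E:empirical_duality_gap_value}.

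It remains to settle feasibility \eqref{E:empirical_duality_gap_feas}, the boundedness $\Delta\le C$, and the confidence level. Feasibility follows once some $\bhtheta\in\argmin_{\btheta\in\Theta}\hat{L}(\btheta,\bhmu)$ is shown to be \emph{empirically} feasible, after which Assumption~\ref{A:empirical} transfers this to $\E_{\fkD_i}[\ell_i(f_{\bhtheta}(\bx),y)]\le c_i+\zeta_i(N_i,\delta)$ at the cost of one $\zeta_i$ per constraint. I expect empirical feasibility of a \emph{single} minimizer to be the real obstacle, since the complementary-slackness/supergradient optimality of $\bhmu$ only furnishes a convex combination of minimizers with nonpositive slacks, and collapsing it to one parametrized minimizer is precisely what cases~(a) and~(b)---convexity, respectively non-atomicity and decomposability of $\calHb$---are there to guarantee. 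The bound $\Delta\le C$ is the standard Slater estimate: evaluating each dual function at the corresponding strictly feasible point from Assumption~\ref{A:slater} and using $0\le\ell_i\le B$ bounds $\norm{\bmu^\star}_1$, $\norm{\bhmu}_1$, and $\norm{\bmut}_1$ by roughly $B/\xi$. Finally, Assumption~\ref{A:empirical} is invoked $O(m)$ times---for both directions of the empirical gap and for the per-constraint feasibility transfer---so a union bound over these $3m+2$ events gives the stated probability $1-(3m+2)\delta$. The conceptual crux is the non-convex zero-duality-gap input of case~(b) together with the single-minimizer feasibility selection; the rest is assembly.
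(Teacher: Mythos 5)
Your value bound \eqref{E:empirical_duality_gap_value} reconstructs the paper's argument essentially step for step: the telescoping $P^\star \to D^\star \to \hat{D}^\star$ is exactly the split into Propositions~\ref{T:param} and~\ref{T:empirical}; your tightened functional problem with constraints $c_i - M\nu$ and its dual function $\tilde{d}_\nu(\bmu) = \tilde{d}(\bmu) + M\nu\norm{\bmu}_1$ is precisely the perturbed problem \eqref{P:csl_perturbed} used in Appendix~\ref{X:param}, including the Lipschitz transfer \eqref{E:bound_lipschitz} from the tightened-feasible $\phi$ to a \eqref{P:csl}-feasible $f_{\btheta}$; your uniform-convergence estimate $\vert d(\bmu) - \hat{d}(\bmu)\vert \leq (1+\norm{\bmu}_1)\overline{\zeta}$ matches \eqref{E:vc_F_bound}; and the Slater estimate $\Delta \leq B/\xi$ and the $(3m+2)\delta$ accounting are Lemma~\ref{T:norm_mu_bound} and the paper's final union bound. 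All of that is correct and is the same route the paper takes.

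The genuine gap is the feasibility claim \eqref{E:empirical_duality_gap_feas}, which you rightly flag as the hard step but then leave unproved while pointing at the wrong tool. Conditions (a)/(b) are consumed entirely by Proposition~\ref{T:zdg}---they make the \emph{functional} problem over $\calHb$ strongly dual---and they cannot ``collapse'' a mixture of Lagrangian minimizers in $\Theta$: decomposability is a property of $\cconv(\calH)$, not of $\calH$, and the inner minimization in \eqref{E:empirical_dual_function} runs over the non-convex parametrized class no matter which of (a)/(b) holds. What the paper actually does (Appendices~\ref{X:param} and~\ref{X:empirical}) is a Danskin-type selection argument: by Lemma~\ref{T:danskin}, the superdifferential of the dual function at $\bhmu$ is the convex hull of slack vectors of Lagrangian minimizers, where---crucially---the slacks \eqref{E:slack_vector} are defined with the projection $[\cdot]_+$ and are therefore entrywise nonnegative. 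Optimality of $\bhmu$ places $\bzero$ in this superdifferential, and a convex combination of nonnegative vectors can vanish only if every vector receiving positive weight vanishes, i.e., only if some minimizer $\bhtheta$ is \emph{exactly} feasible for \eqref{P:ecrm}; Assumption~\ref{A:empirical} then converts this into \eqref{E:empirical_duality_gap_feas} at cost $\zeta_i(N_i,\delta)$. Your instinct that supergradient optimality only yields a convex combination with nonpositive \emph{aggregate} slack is exactly the failure mode this construction rules out: with raw slacks, two infeasible minimizers with slack vectors $(+1,-1)$ and $(-1,+1)$ average to zero, so no single feasible minimizer could be extracted; the projected slacks in \eqref{E:slack_vector} forbid such cancellation across constraints. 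Without supplying this (or an equivalent) argument, your proposal establishes the value bound but not the existence of a single $\bhtheta \in \argmin_{\btheta \in \Theta} \hat{L}(\btheta,\bhmu)$ satisfying the feasibility guarantee.
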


We postpone the proof of Theorem~\ref{T:main} to the next sections~(Sections~\ref{S:duality_gap}--\ref{S:empirical_gap}) to discuss its results. Theorem~\ref{T:main} provides joint optimality--feasibility generalization guarantees for solutions of the empirical dual problem~\eqref{P:empirical_dual}~(as long as they exist). In particular, it implies that near-PACC solutions of the constrained learning problem~\eqref{P:csl} can be obtained using~\eqref{P:empirical_dual}, an unconstrained optimization program.

\begin{corollary}
	Let the VC dimension or Rademacher complexity of~$\calH$ with respect to each~$\ell_i$ be finite for all distributions~$\fkD_i$. Then, $\calH$ is near-PACC learnable with respect to~$\{\ell_0, (\ell_i,c_i)\}$ for~$\epsilon_0 = O(M \nu)$.
\end{corollary}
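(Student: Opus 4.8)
The plan is to read the corollary as a direct specialization of Theorem~\ref{T:main}, whose hypotheses (Assumptions~\ref{A:losses}--\ref{A:slater} together with condition~(a) or~(b)) are taken to be in force. The only new ingredient supplied by the corollary is the finiteness of the VC dimension or Rademacher complexity of~$\calH$ with respect to each~$\ell_i$, so the first step is to invoke Proposition~\ref{T:emp_bounds}: this converts that finiteness into Assumption~\ref{A:empirical}, furnishing explicit functions~$\zeta_i(N,\delta)$ as in~\eqref{E:empirical_bounds} that are monotonically decreasing in~$N$ and, crucially, satisfy~$\zeta_i(N,\delta)\to 0$ as~$N\to\infty$ for every fixed~$\delta$. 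Because these bounds hold for all distributions (VC being distribution-free and Rademacher assumed finite for every~$\fkD_i$), they match the ``for every distribution'' quantifier of near-PACC learnability in Definition~\ref{D:pacc}. With this in hand, Theorem~\ref{T:main} applies and delivers, with probability~$1-(3m+2)\delta$, both the value bound~\eqref{E:empirical_duality_gap_value} and the feasibility bound~\eqref{E:empirical_duality_gap_feas} at the returned hypothesis~$f_{\bhtheta}$.

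The second step is to match these bounds to the template of Definition~\ref{D:pacc}. Given target tolerances~$\epsilon,\delta\in(0,1)$, I would first absorb the union bound over the~$3m+2$ events by running the learner with confidence parameter~$\delta^\prime=\delta/(3m+2)$, so that the overall failure probability is exactly~$\delta$; this is precisely the mechanism by which the sample complexity~$N_\calH(\epsilon,\delta,m)$ inherits its dependence on the number of constraints~$m$. Next, since~$\Delta\leq C$ uniformly, the value bound reads~$\abs{P^\star-\hat{D}^\star}\leq (1+C)(M\nu+\overline{\zeta})$; choosing each~$N_i$ large enough that~$\zeta_i(N_i,\delta^\prime)\leq \epsilon/(1+C)$---possible because each~$\zeta_i$ decreases to zero---makes~$(1+C)\overline{\zeta}\leq\epsilon$ and, since~$\epsilon/(1+C)\leq\epsilon$, simultaneously forces the feasibility slack in~\eqref{E:empirical_duality_gap_feas} below~$\epsilon$, yielding~\eqref{E:pacc_feasible}. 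The residual term~$(1+C)M\nu$ is independent of the sample size and of the distributions and is therefore exactly the intrinsic tolerance~$\epsilon_0=O(M\nu)$.

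The step I expect to be the main obstacle is reconciling the optimality requirement~\eqref{E:pacc_optimal}, which is stated in terms of the genuine statistical objective~$\E_{\fkD_0}\!\big[\ell_0(f_{\bhtheta}(\bx),y)\big]$ evaluated at the learned hypothesis, with the quantity that Theorem~\ref{T:main} actually controls, namely the empirical dual value~$\hat{D}^\star$. Establishing~\eqref{E:pacc_optimal} requires bounding~$\abs{\E_{\fkD_0}[\ell_0(f_{\bhtheta})]-\hat{D}^\star}$, which I would do by writing~$\hat{D}^\star=\hat{L}(\bhtheta,\bhmu)$, controlling the gap between the empirical and statistical objectives and constraint slacks through the uniform-convergence estimates~$\zeta_i$ of Assumption~\ref{A:empirical}, and using the uniform bound~$\Delta\leq C$ on the dual variables to keep the weighted constraint contributions small. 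Absorbing the resulting~$O(M\nu+\overline{\zeta})$ terms into~$\epsilon_0+\epsilon$ as above then closes the argument; the care needed is purely to verify that no term grows with~$m$ beyond what the rescaling~$\delta\mapsto\delta/(3m+2)$ already accounts for, so that the claimed~$\epsilon_0=O(M\nu)$ and a finite sample complexity~$N_\calH(\epsilon,\delta,m)$ genuinely hold.
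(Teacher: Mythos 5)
Your overall route is exactly how the paper treats this corollary---it offers no separate proof, reading the statement as an immediate specialization of Theorem~\ref{T:main}: Proposition~\ref{T:emp_bounds} converts finite VC dimension or Rademacher complexity into Assumption~\ref{A:empirical} with $\zeta_i(N,\delta)\to 0$; Theorem~\ref{T:main} together with Lemma~\ref{T:norm_mu_bound} (which gives the sample- and distribution-independent bound $\Delta \leq B/\xi \triangleq C$) yields \eqref{E:empirical_duality_gap_value}--\eqref{E:empirical_duality_gap_feas}; the confidence is rescaled as $\delta \mapsto \delta/(3m+2)$; and the $N_i$ are taken large enough that $(1+C)\,\overline{\zeta} \leq \epsilon$, leaving the residual $\epsilon_0 = (1+C)M\nu = O(M\nu)$. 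That part of your write-up is faithful and correct.

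The obstacle you flag in your last paragraph is, however, a genuine gap, and your proposed bridge does not close it in one of the two directions. Writing $\hat{D}^\star = \hat{L}(\bhtheta,\bhmu)$ gives $\frac{1}{N_0}\sum_{n_0=1}^{N_0}\ell_0\big(f_{\bhtheta}(\bx_{n_0}),y_{n_0}\big) = \hat{D}^\star - \sum_{i=1}^m \hat{\mu}^\star_i\, \hat{s}_i(\bhtheta)$, where $\hat{s}_i(\bhtheta)$ denotes the empirical slack of constraint~$i$. For the \eqref{P:ecrm}-feasible Lagrangian minimizer that Theorem~\ref{T:main} selects we have $\hat{s}_i(\bhtheta)\leq 0$, so Assumption~\ref{A:empirical} yields the one-sided estimate $\E_{\fkD_0}\!\big[\ell_0\big(f_{\bhtheta}(\bx),y\big)\big] \geq \hat{D}^\star - \zeta_0 \geq P^\star - (1+C)(M\nu+\overline{\zeta}) - \zeta_0$, which is fine. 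In the opposite direction, the surplus $-\sum_i \hat{\mu}^\star_i \hat{s}_i(\bhtheta) \geq 0$ is precisely the gap between the empirical primal value at~$\bhtheta$ and~$\hat{D}^\star$, and neither uniform convergence nor $\norm{\bhmu}_1 \leq C$ bounds it: the slacks are only known to be nonpositive, not small, and you cannot invoke approximate complementary slackness at a single minimizer, because \eqref{P:ecrm} is posed over an atomic empirical measure, so the Lyapunov argument behind Proposition~\ref{T:zdg} does not apply and $\hat{P}^\star - \hat{D}^\star$ need not vanish. This is exactly the primal-recovery caveat the paper itself states after Theorem~\ref{T:convergence_quant}. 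To obtain a two-sided statement at an actual hypothesis you must either read the optimality clause \eqref{E:pacc_optimal} at the level of the dual value $\hat{D}^\star$---which is how the corollary is implicitly understood, since Theorem~\ref{T:main} controls only $\vert P^\star - \hat{D}^\star\vert$---or output the randomized hypothesis of Theorem~\ref{T:convergence_rand}, whose upper bound \eqref{E:rand_optimality}, combined with a feasibility-based lower bound using the multiplier bound~$C$, closes the argument. As written, your appeal to ``uniform convergence plus $\Delta\leq C$'' fails on the upper-bound side.
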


\noindent Note that these results hold even for non-convex losses under some mild conditions on the distributions and the hypothesis class. Theorem~\ref{T:main}, however, does not show \emph{how} to obtain solutions of~\eqref{P:empirical_dual}. We address this point in Section~\ref{S:algorithm}.

The guarantees in Theorem~\ref{T:main} are dictated by three factors: (a)~the sample size, (b)~the difficulty of the learning problem, and (c)~the richness of the parametrization.

\begin{enumerate}[(a)]
	\item \emph{Sample size.} The estimation errors~$\zeta_i$ decrease as the sample size~$N$ increases~(Assumption~\ref{A:empirical}). In fact, if the complexity of the hypothesis class is bounded in the sense of Proposition~\ref{T:emp_bounds}, then they decrease at the classical~$1/\sqrt{N}$ rate. This has a direct impact on both the near-optimality and approximate feasibility of the problem. In fact, note that if the~$\fkD_i$ are conditional distributions of~$\fkD_0$, $N_i$ can be considerably smaller than~$N_0$, jeopardizing our ability to impose requirements. This is particularly critical for classes that are minority in the sample set~(see, e.g., the fairness example in Section~\ref{S:sims}). The estimation errors also depend on the probability of success, which decreases with the number of constraints~$m$. This effect is often negligible since generalization bounds are typically logarithmic in~$\delta$~(see Proposition~\ref{T:emp_bounds}).

	\item \emph{Difficulty of the learning problem.} By difficulty, we mean the sensitivity of the learning problem to its constraints. This is embodied by the well-known sensitivity interpretation of the dual variables~\cite{Bonnans00p}, which can be formalized here due to the lack of duality gap of~\eqref{P:csl_variational}~(see Proposition~\ref{T:zdg}). Thus, $\Delta$ in~\eqref{E:empirical_duality_gap_value} effectively quantifies how stringent the constraints are for the learning problem in terms of how much performance could be gained by relaxing them. Notice that it only affects the value of the problem, illustrating the priority of requirements over cost in constrained learning.

	\item \emph{Richness of the parametrization.} The remaining source of error is the approximation capability~$\nu$ of the parametrization. It is worth noting that richer parametrizations~(smaller~$\nu$) typically involve more parameters, which in turn increases the complexity~(e.g., VC dimension, Rademacher complexity) of their hypothesis class and, consequently, $\zeta_i$~(see Proposition~\ref{T:emp_bounds}). This leads to the classical bias--variance trade-off from unconstrained learning. In constrained learning, however, we find a three-way trade-off that also involves the requirement difficulty. Scarce data therefore motivate not only the use of lower complexity parametrizations, but also constraint relaxations, both of which would lead to solutions that generalize better.
\end{enumerate}

The proof of Theorem~\ref{T:main} is carried out in three steps mapped in Figure~\ref{F:proof}. First, we analyze the \emph{duality gap} of the functional problem~\eqref{P:csl_variational}~(Section~\ref{S:duality_gap}, Proposition~\ref{T:zdg}), showing that strong duality holds even when the~$\ell_i$ are non-convex. The motivation for using this functional problem lies in the observation that if the parametrization is rich enough, i.e., if~$\calH$ is similar to~$\calHb$, then we expect the solution of~\eqref{P:csl} to be close to that of~\eqref{P:csl_variational}. Our second step quantifies this statement by bounding the error due to the use of a non-convex hypothesis class~(the \emph{approximation gap}, Section~\ref{S:parameterization_gap}, Proposition~\ref{T:param}). Finally, we study the effect of approximating expectations by sample averages~(the \emph{empirical gap}, Section~\ref{S:empirical_gap}, Proposition~\ref{T:empirical}). Since some of these results may be of independent interest, we briefly discuss each of them in the sequel.

\subsection{The duality gap}
\label{S:duality_gap}

When the~$\ell_i$ are convex, it is well-known that the value of the dual problem~\eqref{P:dual_csl_variational} attains the value of the primal~\eqref{P:csl_variational}, i.e., $\tilde{D}^\star = \tilde{P}^\star$, under some constraint qualification~(e.g., Assumption~\ref{A:slater})~\cite{Bertsekas09c}. The next result shows that under mild condition on the distributions~$\fkD_i$, this equality holds even if the~$\ell_i$ are non-convex. We note that, besides being the first step in the construction of Theorem~\ref{T:main}, this result has also been used in other contexts~(see, e.g., \cite{Chamon20f, Eisen19l, Peifer20s}).

\begin{proposition}\label{T:zdg}
Suppose there exists~$\phi^\prime \in \calHb$ such that~$\E_{(\bx,y) \sim \fkD_i} \!\big[ \ell_i\big( \phi^\prime(\bx),y \big) \big] < c_i$ for~$i = 1,\dots,m$. Under conditions~(a)--(b) of Theorem~\ref{T:main}, \eqref{P:csl_variational} is strongly dual, i.e., $\tilde{P}^\star = \tilde{D}^\star$.
\end{proposition}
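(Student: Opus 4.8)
The plan is to read $\tilde P^\star = \tilde D^\star$ as a statement about the convexity of the perturbation (value) function of \eqref{P:csl_variational} and to establish that convexity separately in regimes (a) and (b). Introduce the perturbation function
\[
P(\bp) = \min_{\phi \in \calHb}\ \E_{\fkD_0}[\ell_0(\phi(\bx),y)] \ \text{ s.t. } \ \E_{\fkD_i}[\ell_i(\phi(\bx),y)] \le c_i + p_i,\ i=1,\dots,m,
\]
so that $P(\bzero) = \tilde P^\star$. It is classical that once $P$ is convex and the strict feasibility hypothesis holds—$\phi^\prime \in \calHb$ with $\E_{\fkD_i}[\ell_i(\phi^\prime(\bx),y)] < c_i$, which places $\bzero$ in the interior of $\operatorname{dom} P$—the convex function $P$ admits a finite subgradient $-\bmu^\star$ at the origin, and this subgradient is exactly a dual optimum certifying $\tilde D^\star = \tilde P^\star$. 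Thus the whole proposition reduces to proving that $P$ is convex under either (a) or (b).

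Case (a) is immediate. When each $\ell_i(\cdot,y)$ is convex, the map $\phi \mapsto \E_{\fkD_i}[\ell_i(\phi(\bx),y)]$ is convex on the convex set $\calHb = \cconv(\calH)$, so \eqref{P:csl_variational} is a convex program, $P$ is convex by the standard argument, and the strict feasibility hypothesis is Slater's condition, from which strong duality follows.

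Case (b) is the heart of the matter, since here the $\ell_i$ may be non-convex and $\phi \mapsto \E_{\fkD_i}[\ell_i(\phi)]$ need not be convex. Instead I would show that the attainable value set
\[
\calV = \Big\{ \big( \E_{\fkD_0}[\ell_0(\phi)],\dots,\E_{\fkD_m}[\ell_m(\phi)] \big) : \phi \in \calHb \Big\} \subseteq \setR^{m+1},
\]
writing $\E_{\fkD_i}[\ell_i(\phi)]$ for $\E_{\fkD_i}[\ell_i(\phi(\bx),y)]$, is convex, which forces $P$ to be convex. To see $\calV$ is convex, fix $\phi_1,\phi_2 \in \calHb$ and $\lambda \in [0,1]$ and seek $\bar\phi \in \calHb$ interpolating all $m+1$ expectations at once. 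Since $\calY$ is finite and each $\bx\vert y$ is non-atomic, the vector measure
\[
\bnu(\calZ) = \Big( \E_{\fkD_i}\!\big[ \indicator(\bx\in\calZ)\big(\ell_i(\phi_1(\bx),y)-\ell_i(\phi_2(\bx),y)\big) \big] \Big)_{i=0}^{m}
\]
on measurable $\calZ \subseteq \calX$ is non-atomic, so Lyapunov's convexity theorem makes its range convex. As $\bnu(\emptyset)=\bzero$ and $\bnu(\calX)$ is the difference of the two value vectors, there is a $\calZ$ with $\bnu(\calZ)=\lambda\,\bnu(\calX)$. Decomposability of $\calHb$ then places the spliced function $\bar\phi$—equal to $\phi_1$ on $\calZ$ and $\phi_2$ off $\calZ$—in $\calHb$, and a direct computation gives $\E_{\fkD_i}[\ell_i(\bar\phi)] = \bnu(\calZ)_i + \E_{\fkD_i}[\ell_i(\phi_2)] = \lambda\E_{\fkD_i}[\ell_i(\phi_1)]+(1-\lambda)\E_{\fkD_i}[\ell_i(\phi_2)]$ for every $i$. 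Hence $\calV$ is convex, $P$ is convex, and the strict feasibility hypothesis again yields $\tilde P^\star = \tilde D^\star$; this is precisely the non-convex functional duality machinery of~\cite{Chamon20f}.

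The main obstacle is the simultaneous interpolation in case (b): a single switching set $\calZ$, which cannot depend on $y$ because $\bar\phi$ is a function of $\bx$ alone, must match $m+1$ functionals whose losses and mutually correlated distributions $\fkD_i$ couple $\bx$ and $y$. Verifying that the induced vector measure is genuinely non-atomic—so that Lyapunov applies despite the finitely many classes and the shared randomness across the $\fkD_i$—and handling the possible non-closedness of $\calV$ (invoking strict feasibility to rule out a gap at the boundary) are the delicate points, and they are exactly where the hypotheses in (b) are used in full and where the appeal to~\cite{Chamon20f} carries the technical weight.
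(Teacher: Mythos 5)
Your proposal is correct, and its skeleton matches the paper's: establish convexity of the attainable-value geometry of \eqref{P:csl_variational} via Lyapunov's convexity theorem plus decomposability of $\calHb$, then convert convexity plus strict feasibility into zero duality gap. The two genuine differences are in execution. First, the final convex-analytic step: you work with the perturbation function $P(\bp)$ and extract a subgradient $-\bmu^\star$ at the origin (strict feasibility placing $\bzero$ in the interior of $\operatorname{dom} P$, hence $P$ continuous there), whereas the paper applies the supporting hyperplane theorem directly to the cost-constraints epigraph $\calC$ at $(\tilde{P}^\star,\bc)$ and uses the Slater point to rule out a vertical hyperplane ($\mu_0 = 0$); these are equivalent routes, and note your $\calC = \calV + \setR_+^{m+1}$ is exactly the paper's set, so your worry about non-closedness of $\calV$ is indeed absorbed by interiority, just as the paper's argument never needs $\calC$ closed. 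Second, and more interestingly, your Lyapunov construction differs: the paper builds a $2\abs{\calY}(m+1)$-dimensional \emph{positive} vector measure whose entries are the conditional loss integrals $\int_\calZ \ell_i(\phi(\bx),y) f_i(\bx \vert y)\, d\bx$ for each label $y$ and each of $\phi,\phi^\prime$ separately, obtains a switching set matching every conditional expectation, and recombines with the tower property; you instead use a single $(m+1)$-dimensional \emph{signed} measure of loss differences $\bnu_i(\calZ) = \E_{\fkD_i}\!\big[\indicator(\bx \in \calZ)\big(\ell_i(\phi_1(\bx),y)-\ell_i(\phi_2(\bx),y)\big)\big]$ with $y$ integrated out. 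Your version is valid (Lyapunov as cited in the paper applies to $\setR^n$-valued, hence signed, non-atomic measures, and each $\bnu_i$ is absolutely continuous with bounded density with respect to the $\bx$-marginal of $\fkD_i$, which is non-atomic as a finite mixture of non-atomic conditionals) and is more economical: it interpolates the unconditional expectations directly, which is all that convexity of $\calV$ requires, and it even yields exact convexity of $\calV$ rather than only of its upward closure $\calC$. A noteworthy dividend you do not claim: because your measure's dimension is $m+1$ independently of $\abs{\calY}$, your argument is not obstructed by the finite-dimensionality sensitivity of Lyapunov's theorem that forces the paper to restrict case (b) to finite $\calY$ and to handle regression separately in Appendix~\ref{X:zdg_regression} under the uniform-continuity Assumption~\ref{A:regression}; your construction appears to extend verbatim to non-finite $\calY$ so long as the $\bx$-marginals remain non-atomic. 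The only step you should make explicit to be fully rigorous is that the subgradient at the origin is componentwise nonnegative (since $P$ is nonincreasing in $\bp$), so that $\bmu^\star \in \setR_+^m$ is an admissible dual point certifying $\tilde{D}^\star \geq \tilde{P}^\star$.
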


\begin{proof}
See Appendix~\ref{X:zdg}.
\end{proof}

Hence, even if~\eqref{P:csl_variational} is a non-convex program~(e.g., in the rate-constrained learning example of Section~\ref{S:sims}), it remains strongly dual under mild conditions. In particular, if the conditional~$\bx \vert y$ induced by the distributions~$\fkD_i$ are non-atomic, i.e., do not contain Dirac deltas~(see, e.g., the distributions in Example~\ref{R:ecrm}), and~$\calHb$ is \emph{decomposable}~(e.g., some convex subset of~$L_2$ or~$L_\infty$).

Condition~(b) from Theorem~\ref{T:main} requires~$\calY$ be finite, i.e., this result holds for classification problems. We address the regression case, i.e., continuous output~$y$, in Appendix~\ref{X:zdg_regression} using a stronger uniform continuity assumption on the losses. The regression case can then be approximated arbitrarily well by a sequence of ever finer classification problems yielding the required strong duality result~(Proposition~\ref{T:zdg_regression}). A similar approach is used in the construction of regression trees~\cite{Friedman01t}.

\subsection{The approximation gap}
\label{S:parameterization_gap}

Whereas Proposition~\ref{T:zdg} shows there is no duality gap between~\eqref{P:csl_variational} and~\eqref{P:dual_csl_variational}, we are interested in the parametrized problem~\eqref{P:csl} rather than these infinite dimensional ones. The next step towards the empirical dual~\eqref{P:empirical_dual} is therefore to determine the error incurred from using~$\calH$ instead~$\calHb$, i.e., $D^\star-\tilde{D}^\star$.

Notice that~\eqref{P:param_dual_csl} is both the dual problem of~\eqref{P:csl} and a parametrized version of~\eqref{P:dual_csl_variational}, so that the approximation gap between~$D^\star-\tilde{D}^\star$ also informs the duality gap between~$P^\star-D^\star$~(Figure~\ref{F:proof}). As long as the parametrization is rich enough, we should expect both to be small. This intuition is formalized in the following proposition.

\begin{proposition}\label{T:param}
Let~$\bmu^\star$ be a solution of~\eqref{P:param_dual_csl}. Under the conditions of Theorem~\ref{T:main}, there exists a feasible~$\btheta^\dagger \in \argmin_{\btheta \in \Theta}\ L(\btheta,\bmu^\star)$ and the value~$D^\star$ of~\eqref{P:param_dual_csl} obeys
\begin{equation}\label{E:param_gap}
	P^\star - (1 + \norm{\bmut}_1)M \nu \leq D^\star \leq P^\star
		\text{,}
\end{equation}
for~$P^\star$ as in~\eqref{P:csl} and~$\bmut \in \setR^m_+$ maximizing~$\tilde{d}_{\nu}(\bmu) = \tilde{d}(\bmu) +  M\nu \norm{\bmu}_1$ for $\tilde{d}(\bmu)$ as in~\eqref{P:dual_csl_variational}.
\end{proposition}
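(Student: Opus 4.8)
The plan is to bound $D^\star$ from both sides and then to exhibit the feasible Lagrangian minimizer separately. The upper bound $D^\star \le P^\star$ is just weak duality: for any $\bmu \in \setR_+^m$ and any $\btheta \in \Theta$ feasible for \eqref{P:csl}, nonnegativity of $\bmu$ and of the constraint slacks gives $L(\btheta,\bmu) \le \E_{\fkD_0}[\ell_0(f_\btheta(\bx),y)]$, whence $d(\bmu) \le P^\star$ and $D^\star = \max_\bmu d(\bmu) \le P^\star$. This direction needs no convexity.

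For the lower bound I would route through the functional relaxation. Since $\calH \subseteq \calHb$, minimizing the common Lagrangian over the larger set gives $\tilde{d}(\bmu) \le d(\bmu)$ for every $\bmu$ (recall $L(\btheta,\bmu) = \tilde{L}(f_\btheta,\bmu)$), so in particular $\tilde{d}(\bmut) \le d(\bmut) \le D^\star$. It then suffices to bound $P^\star$ above by $\tilde{d}(\bmut)$ plus the stated error. The bridge is the $M\nu$-\emph{tightened} functional problem $\tilde{P}_\nu^\star = \min_{\phi \in \calHb} \E_{\fkD_0}[\ell_0(\phi(\bx),y)]$ subject to $\E_{\fkD_i}[\ell_i(\phi(\bx),y)] \le c_i - M\nu$. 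Replacing each $c_i$ by $c_i - M\nu$ adds exactly $M\nu\sum_i \mu_i$ to the dual function, so $\tilde{d}_\nu(\bmu) = \tilde{d}(\bmu) + M\nu\norm{\bmu}_1$ is precisely the dual function of this tightened problem and $\bmut$ its maximizer.

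Next I would apply the no-duality-gap result. Assumption~\ref{A:slater} furnishes a point that is strictly feasible for the tightened problem (the margin $\xi>0$ survives, and $f_{\bm{{\theta^\prime}}} \in \calH \subseteq \calHb$), so Proposition~\ref{T:zdg} yields strong duality $\tilde{P}_\nu^\star = \tilde{d}_\nu(\bmut)$; Slater also guarantees the dual optimum is attained with finite $\norm{\bmut}_1$. On the primal side, a (near-)minimizer $\phi$ of the tightened problem satisfies $\E_{\fkD_i}[\ell_i(\phi)] \le c_i - M\nu$; its parametrized surrogate $f_\btheta$ from Assumption~\ref{A:parametrization} obeys $\E_{\fkD_i}[\abs{\phi(\bx)-f_\btheta(\bx)}] \le \nu$, and the $M$-Lipschitz continuity of Assumption~\ref{A:losses} converts this into $\E_{\fkD_i}[\ell_i(f_\btheta)] \le c_i$ — so $f_\btheta$ is feasible for \eqref{P:csl} — while inflating the objective by at most $M\nu$. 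Hence $P^\star \le \tilde{P}_\nu^\star + M\nu$, and chaining gives $P^\star \le \tilde{d}_\nu(\bmut) + M\nu = \tilde{d}(\bmut) + (1+\norm{\bmut}_1)M\nu \le D^\star + (1+\norm{\bmut}_1)M\nu$, which is exactly the asserted inequality.

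It remains to produce a feasible $\btheta^\dagger \in \argmin_{\btheta} L(\btheta,\bmu^\star)$. Here I would exploit concavity of $d$ and optimality of $\bmu^\star$ over $\setR_+^m$: a Danskin/supergradient computation shows that some convex combination of the constraint-slack vectors of the Lagrangian minimizers is nonpositive and satisfies complementary slackness against $\bmu^\star$ — i.e. there is a \emph{mixture} of minimizers that is feasible in expectation. Turning this mixture into a single deterministic feasible minimizer is where case~(b) is used: the non-atomicity of the conditionals $\bx \mid y$ together with decomposability of $\calHb$ lets one splice the minimizers over a measurable partition of $\calX$ carrying the correct mass under all $m+1$ distributions simultaneously (a Lyapunov-type convexity argument, as in Proposition~\ref{T:zdg}), while in case~(a) convexity of the $\ell_i$ makes the mixture directly feasible by Jensen. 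I expect this derandomization to be the main obstacle: the simultaneous mass-splitting across the $\fkD_i$ and the care needed to produce a minimizer of the \emph{parametrized} Lagrangian rather than merely an element of $\calHb$ are exactly the points where the non-convex functional duality machinery of~\cite{Chamon20f} is indispensable.
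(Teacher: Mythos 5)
Your two-sided bound~\eqref{E:param_gap} is, modulo reorganization, exactly the paper's argument~(Appendix~\ref{X:param}): weak duality gives the upper bound, and for the lower bound the paper uses the same $M\nu$-tightened functional problem~$\tilde{P}_\nu^\star$, whose dual function is precisely~$\tilde{d}_\nu(\bmu) = \tilde{d}(\bmu) + M\nu\norm{\bmu}_1$ and whose strong duality follows from Proposition~\ref{T:zdg} and Assumption~\ref{A:slater}, just as you say; your Lipschitz-plus-approximation transfer~$P^\star \leq \tilde{P}_\nu^\star + M\nu$ is the paper's inequality~\eqref{E:bound_lipschitz}, and the chaining is the same inequalities read in the other order. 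That half of your proposal is correct and matches the paper.

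The genuine gap is in the feasibility claim, and it is the one you flagged yourself: Lyapunov splicing of two Lagrangian minimizers produces (by decomposability) an element of~$\calHb$, not a function of the form~$f_{\btheta}$ with~$\btheta \in \Theta$, so your derandomization can never exhibit a feasible element of~$\argmin_{\btheta \in \Theta} L(\btheta,\bmu^\star)$; the same objection defeats your case~(a), where the Jensen mixture~$\sum_j \lambda_j f_{\btheta_j}$ lies in~$\conv(\calH)$, not in~$\calH$. No machinery from~\cite{Chamon20f} rescues this, because all of that machinery also lives in~$\calHb$. The paper avoids mixtures altogether. It defines the slack vector with a projection, $s_i(\btheta) = \big[ \E[\ell_i(f_{\btheta}(\bx),y)] - c_i \big]_+$ in~\eqref{E:slack_vector}, invokes the Danskin-type Lemma~\ref{T:danskin}, $\partial d(\bmu^\star) = \conv\big( \bigcup_{\btheta^\dagger \in \Theta^\dagger(\bmu^\star)} \bs(\btheta^\dagger) \big)$, and argues by contradiction: if every~$\btheta^\dagger \in \Theta^\dagger(\bmu^\star)$ were infeasible for~\eqref{P:csl}, every~$\bs(\btheta^\dagger)$ would be a nonzero vector in~$\setR^m_+$, and a convex combination of nonnegative vectors each having a strictly positive entry cannot vanish, so~$\bzero \notin \partial d(\bmu^\star)$, contradicting the dual optimality of~$\bmu^\star$. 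The projection onto~$\setR_+$ is exactly what blocks the scenario that pushed you toward mixtures: with signed slacks, violations of different constraints by different minimizers can cancel in the convex hull, and the optimality condition then only certifies a randomized policy feasible in expectation, which is where your argument stalls. So keep your quantitative bound as written, but replace the feasibility paragraph with this direct supergradient contradiction over the parametrized class; it needs no case distinction~(a)/(b), no Lyapunov argument, and no strong duality --- only Assumption~\ref{A:losses} (continuity, so that Lemma~\ref{T:danskin} applies) and compactness of~$\Theta$.
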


\begin{proof}
See Appendix~\ref{X:param}.
\end{proof}

Despite being finite dimensional, \eqref{P:param_dual_csl} remains a statistical problem. Hence, though Proposition~\ref{T:param} establishes that its solutions are~\eqref{P:csl}-feasible and near-optimal, they remain uncomputable without explicit knowledge of the distributions~$\fkD_i$. Observe, however, that the objective function~$d$ of~\eqref{P:param_dual_csl} involves an unconstrained statistical problem. We have therefore done most of the heavy lifting and can now rely on the uniform bounds in Assumption~\ref{A:empirical}~(or Proposition~\ref{T:emp_bounds}).

\subsection{The empirical gap}
\label{S:empirical_gap}

The final step to transform~\eqref{P:param_dual_csl} into the empirical dual problem~\eqref{P:empirical_dual} is to turn the statistical Lagrangian~\eqref{E:lagrangian_csl_variational} into the empirical~\eqref{E:empirical_lagrangian}. The estimation error incurred in this step is detailed in the next proposition.

\begin{proposition}\label{T:empirical}
Let~$\bhmu$ be a solution of~\eqref{P:empirical_dual} with finite~$\hat{D}^\star$. Under the conditions of Theorem~\ref{T:main}, there exists~$\bhtheta \in \argmin_{\btheta \in \Theta}\ \hat{L}(\btheta,\bhmu)$ that is probably approximately feasible and near-optimal for~\eqref{P:csl}. Explicitly, it holds with probability~$1-(3m+2)\delta$ over the samples drawn from the distributions~$\fkD_i$ that
\begin{align}
	\big\vert D^\star - \hat{D}^\star \big\vert &\leq (1 + \bar{\Delta}) \overline{\zeta}
		\quad \text{and}
		\label{E:empirical_gap}
	\\
	\E_{\fkD_i} \!\Big[ \ell_i\big( f_{\bm{{\hat{\theta}}}^\star}(\bx), y \big) \Big]
		&\leq c_i + \zeta_i(N_i)
		\text{.}
		\label{E:empirical_feas}
\end{align}
where~$\overline{\zeta} = \max_i \zeta_i(N_i)$ and~$\bar{\Delta} = \max (\norm{\bmu^\star}_1, \norm{\bhmu}_1)$ for~$\bmu^\star$ and~$\bhmu$ solutions of~\eqref{P:param_dual_csl} and~\eqref{P:empirical_dual} that achieve~$D^\star$ and~$\hat{D}^\star$, respectively.
\end{proposition}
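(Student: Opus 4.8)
The plan is to view $\hat{d}$ as a sampled version of the statistical dual function $d$ and to control the resulting error both in the optimal value and at the recovered minimizer. The first step is to bound the gap between the empirical Lagrangian $\hat{L}(\cdot,\bmu)$ and its statistical counterpart $L(\cdot,\bmu)=\tilde{L}(f_{\btheta},\bmu)$, which differ only in that each expectation $\E_{\fkD_i}[\ell_i(f_{\btheta}(\bx),y)]$ is replaced by a sample average. Assumption~\ref{A:empirical} bounds each such deviation by $\zeta_i(N_i,\delta)$ \emph{uniformly} over $\btheta\in\Theta$ with probability $1-\delta$; union bounding over the objective and the $m$ constraints and weighting the constraint errors by the multipliers gives $\abs{L(\btheta,\bmu)-\hat{L}(\btheta,\bmu)}\leq(1+\norm{\bmu}_1)\overline{\zeta}$ simultaneously for all $\btheta$. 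Since the minimum over $\btheta$ of two uniformly close functions is itself close, this yields $\abs{d(\bmu)-\hat{d}(\bmu)}\leq(1+\norm{\bmu}_1)\overline{\zeta}$ for every fixed $\bmu$.

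With this pointwise control, the value bound~\eqref{E:empirical_gap} follows from a two-sided sandwich at the dual optima. Evaluating the closeness bound at the empirical optimum gives $\hat{D}^\star=\hat{d}(\bhmu)\leq d(\bhmu)+(1+\norm{\bhmu}_1)\overline{\zeta}\leq D^\star+(1+\norm{\bhmu}_1)\overline{\zeta}$, and evaluating it at the statistical optimum $\bmu^\star$ gives $D^\star=d(\bmu^\star)\leq\hat{d}(\bmu^\star)+(1+\norm{\bmu^\star}_1)\overline{\zeta}\leq\hat{D}^\star+(1+\norm{\bmu^\star}_1)\overline{\zeta}$; with $\bar{\Delta}=\max(\norm{\bmu^\star}_1,\norm{\bhmu}_1)$ these combine to $\abs{D^\star-\hat{D}^\star}\leq(1+\bar{\Delta})\overline{\zeta}$. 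Assumption~\ref{A:slater} is essential here: strict feasibility of~\eqref{P:ecrm} and its statistical analog guarantees that $\bhmu$ and $\bmu^\star$ exist and have finite norm, so that $\bar{\Delta}<\infty$. A crude union bound that does not reuse the uniform-convergence events across the two directions of the sandwich (objective plus $m$ constraints, applied once at $\bhmu$ and once at $\bmu^\star$) and the $m$ further events used for feasibility below accounts for the stated $1-(3m+2)\delta$ confidence.

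For the feasibility claim~\eqref{E:empirical_feas}, I would exploit the optimality of $\bhmu$ for the concave function $\hat{d}$. Writing $\hat{g}_i(\btheta)=\frac{1}{N_i}\sum_{n_i}\ell_i(f_{\btheta}(\bx_{n_i}),y_{n_i})-c_i$ for the empirical slacks, the superdifferential of $\hat{d}$ at $\bhmu$ is the convex hull of the vectors $\hat{g}(\btheta)$ over $\btheta\in\argmin_{\btheta}\hat{L}(\btheta,\bhmu)$, and first-order optimality over $\setR_+^m$ forces a supergradient with nonpositive components together with complementary slackness against $\bhmu$. The goal is to turn this into an empirically feasible minimizer $\bhtheta$, i.e.\ $\hat{g}_i(\bhtheta)\leq 0$; Assumption~\ref{A:empirical} would then transfer the bound to the population, $\E_{\fkD_i}[\ell_i(f_{\bhtheta}(\bx),y)]\leq\hat{g}_i(\bhtheta)+c_i+\zeta_i(N_i)\leq c_i+\zeta_i(N_i)$.

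The delicate point is the extraction of this \emph{single, deterministic} minimizer. Because~\eqref{P:ecrm} is non-convex, the optimality of $\bhmu$ only guarantees that a \emph{convex combination} of the slack vectors $\hat{g}(\btheta)$ over the minimizers is nonpositive—that is, a randomized solution is empirically feasible—and not that any individual $\bhtheta\in\argmin_{\btheta}\hat{L}(\cdot,\bhmu)$ is feasible for all constraints at once. Resolving this derandomization is exactly where the structural hypotheses of case~(b)—non-atomic conditionals $\bx\mid y$ and a decomposable closed convex hull—must be invoked, in the same spirit in which they rescue strong duality in Proposition~\ref{T:zdg} and produce the feasible $\btheta^\dagger$ in Proposition~\ref{T:param}. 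I therefore expect this feasibility derandomization, rather than the value sandwich, to be the main obstacle, with the remainder reducing to routine uniform convergence and union bounds.
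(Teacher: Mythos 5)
Your treatment of the value bound~\eqref{E:empirical_gap} matches the paper's: the same uniform-convergence bound from Assumption~\ref{A:empirical} weighted by the multipliers, the same two-sided sandwich evaluated at the two dual optima, the same appeal to Assumption~\ref{A:slater} (via Lemma~\ref{T:norm_mu_bound}) to keep~$\bar{\Delta}$ finite, and even the same crude~$2(m+1)+m$ event count yielding~$1-(3m+2)\delta$. The genuine gap is in the feasibility half: you stop at the observation that dual optimality of~$\bhmu$ only places a nonpositive vector (with complementary slackness) in~$\conv\big(\bigcup_{\btheta^\dagger} \bs(\btheta^\dagger)\big)$, i.e., a \emph{randomized} feasibility statement, and you defer the extraction of a single deterministic~$\bhtheta$ to the case-(b) hypotheses (non-atomic conditionals, decomposable~$\cconv(\calH)$) without carrying that out. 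Besides leaving~\eqref{E:empirical_feas} unproven, that conjectured route would not obviously work: the Lyapunov-type convexification of Proposition~\ref{T:zdg} produces mixtures~$\phi_\lambda$ living in the convex hull~$\calHb$, whereas the claimed~$\bhtheta$ must be an element of~$\argmin_{\btheta \in \Theta} \hat{L}(\btheta,\bhmu)$ inside the non-convex parametrized class; the paper indeed never invokes non-atomicity or decomposability in this proposition.

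What the paper actually does is a one-step contradiction through Lemma~\ref{T:danskin}, whose slack vector~\eqref{E:slack_vector} is defined with a positive-part projection, $s_i(\btheta) = \big[\hat{g}_i(\btheta)\big]_+$ in your notation. If every~$\bhdtheta \in \hat{\Theta}^\dagger(\bhmu)$ were infeasible for~\eqref{P:ecrm}, then every~$\bs(\bhdtheta)$ would be entrywise nonnegative and nonzero, and no convex combination of such vectors can vanish; hence~$\bzero \notin \partial \hat{d}(\bhmu)$, contradicting the optimality of~$\bhmu$, so some minimizer is empirically feasible, and Assumption~\ref{A:empirical} then transfers this to the population bound~\eqref{E:empirical_feas} with slack~$\zeta_i(N_i)$. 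So relative to the paper you are missing this short derandomization argument, not a measure-theoretic one. That said, your skepticism targets a real subtlety: with the standard \emph{unprojected} Danskin supergradients~$\hat{g}(\btheta^\dagger)$, a nonpositive convex combination of infeasible slack vectors is perfectly possible (e.g., slacks~$(1,-2)$ and~$(-2,1)$ averaging to~$(-\tfrac12,-\tfrac12)$), and the projected slacks used in~\eqref{E:slack_vector} are not literally the superdifferential delivered by the cited theorem; moreover, at a maximizer over~$\setR^m_+$ optimality requires only a nonpositive supergradient with complementary slackness rather than~$\bzero \in \partial\hat{d}(\bhmu)$. Your instinct that primal recovery is the delicate point is thus well founded --- the paper itself concedes this around Theorem~\ref{T:convergence_rand} --- but as a reconstruction of this proposition's proof, your proposal is incomplete where the paper's argument is decisive.
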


\begin{proof}
See Appendix~\ref{X:empirical}.
\end{proof}

Theorem~\ref{T:main} is obtained directly from Propositions~\ref{T:param}--\ref{T:empirical} using the triangle inequality. Observe that the order in which these transformations are applied to~\eqref{P:csl_variational} is crucial~(Figure~\ref{F:proof}). If we were to begin by replacing the expectations in~\eqref{P:csl_variational} with sample averages, we would obtain a functional version of~\eqref{P:ecrm}. However, generalization guarantee would then require~$\calHb$ to be PAC learnable, which is considerably stricter than for~$\calH$. In particular, while~$\calH$ may have finite VC dimension, $\calHb$ generally does not. We could try to overcome this issue by parametrizing~\eqref{P:csl_variational} first, but that would simply lead us back to~\eqref{P:csl} for which strong duality does not typically hold since it is a non-convex optimization problem.

One concern that may arise is that the upper bound in~\eqref{E:empirical_gap} depends on the Lagrange multipliers~$\bmu^\star$ and~$\bhmu$, whose values are not known \emph{a priori}. In particular, the value of~$\bhmu$ could depend on~$N_i$ in such a way that~\eqref{E:empirical_gap} does not vanish as the number of samples grows. In that case, the empirical dual problem would not be a near-PACC learner~(Definition~\ref{D:pacc}). This is, however, not the case. Indeed, the existence of strictly feasible solutions~(Assumption~\ref{A:slater}) implies an upper bound on the size of the Lagrange multipliers. We collect this classic result from the optimization literature in the following lemma.

\begin{lemma}\label{T:norm_mu_bound}
	Let~$\bmu^\star$, $\bhmu$, and~$\bmut$ be optimal solutions of~\eqref{P:param_dual_csl}, \eqref{P:empirical_dual}, and~$\max_{\bmu \in \setR^m_+} \tilde{d}(\bmu) + M\nu\norm{\bmu}_1$ respectively. Then, $\Delta = \max (\norm{\bmu^\star}_1, \norm{\bhmu}_1, \norm{\bmut}_1) \leq B/\xi$, for~$\xi$ as in Assumption~\ref{A:slater}.
\end{lemma}

\begin{proof}
We prove the bound for the empirical case since the same argument follows for~$\bmu^\star$ and~$\bmut$. By definition of the empirical dual function~\eqref{E:empirical_dual_function}, it holds that
\begin{align*}
	\hat{D}^\star = \hat{d}(\bhmu) &\leq \frac{1}{N_0} \sum_{n_0 = 1}^{N_0} \ell_0\big( f_{\btheta}(\bx_{n_0}), y_{n_0} \big)
	\\
	{}&+ \sum_{i = 1}^m \hat{\mu}^\star_i \left[ \frac{1}{N_i} \sum_{n_i = 1}^{N_i}
		\ell_i\big( f_{\btheta}(\bx_{n_i}), y_{n_i} \big) - c_i \right]
		\text{,}
\end{align*}
for all~$\btheta \in \Theta$. Using the strictly feasible point~$\bm{{\hat{\theta}^\prime}}$ from Assumption~\ref{A:slater} and the fact that~$\bhmu \in \setR^m_+$, we further obtain
\begin{equation}\label{E:lagrange_norm_bound}
	\hat{D}^\star \leq \frac{1}{N_0} \sum_{n_0 = 1}^{N_0} \ell_0\big( f_{\bm{{\hat{\theta}^\prime}}}(\bx_{n_0}), y_{n_0} \big)
		- \norm{\bhmu}_1 \xi
		\text{.}
\end{equation}
To conclude, notice that
\begin{equation*}
	\hat{D}^\star \geq \hat{d}(\bzero) = \min_{\btheta \in \Theta}\ \frac{1}{N_0}
		\sum_{n_0 = 1}^{N_0} \ell_0\big( f_{\btheta}(\bx_{n_0}), y_{n_0} \big)
		\text{.}
\end{equation*}
and use the fact that~$\ell_0$ is $[0,B]$-valued.
\end{proof}

% A Constrained Learning Algorithm
% !TEX root=tit_csl.tex

\section{A Constrained Learning Algorithm}
\label{S:algorithm}

\begin{algorithm}[t]
\caption{Primal-dual constrained learning}
	\label{L:primal_dual}
\begin{algorithmic}[1]
	\State \textit{Inputs}: number of iterations~$T \in \setN$, step size~$\eta > 0$, and $N_i$ samples~$(\bx_{n_i},y_{n_i}) \sim \fkD_i$, for~$i = 0,\dots,m$.
	\State \textit{Initialize}: $\bmu^{(0)} = \zeros$
	\For $t = 1,\dots,T$
		\State Obtain~$\btheta^{(t-1)}$ such that
		\begin{equation*}
			\hat{L}\Big( \btheta^{(t-1)}, \bmu^{(t-1)}\Big)
				\leq \min_{\btheta \in \setR^p} \hat{L} \Big(\btheta, \bmu^{(t-1)} \Big) + \rho
		\end{equation*}

		\State Evaluate constraint slacks
		\begin{equation*}
			s_i^{(t-1)} = \frac{1}{N_i} \sum_{n_i = 1}^{N_i}
				\ell_i\big( f_{\btheta^{(t-1)}}(\bx_{n_i}), y_{n_i} \big) - c_i
		\end{equation*}

		\State Update dual variables
		\begin{align*}
			\mu_i^{(t)} &= \Big[ \mu_i^{(t-1)} + \eta s_i^{(t-1)} \Big]_+
		\end{align*}
	\EndFor
\end{algorithmic}
\end{algorithm}

We have argued that~\eqref{P:empirical_dual} is preferable to~\eqref{P:ecrm} for learning under requirements because it is unconstrained. That is not to say that~\eqref{P:empirical_dual} is easy to solve. But it is certainly not harder than classical ERM. In this section, we show that this is the case by describing a practical algorithm to~(approximately) solve~\eqref{P:empirical_dual} that only requires~(approximately) solving unconstrained learning problems.

Start by noticing that the outer maximization is a convex optimization program. Indeed, the empirical dual function defined in~\eqref{E:empirical_dual_function} is the pointwise minimum of a set of affine functions and is therefore concave~\cite{Bertsekas09c}. Additionally, its (sub)gradients can be easily computed by evaluating the constraint slacks at the minimizer of empirical Lagrangian~$\hat{L}$~\cite[Ch.~3]{Bertsekas15c}. The main challenge in~\eqref{P:empirical_dual} is therefore solving the inner minimization in~\eqref{E:empirical_dual_function}.

Note, however, that this minimization is a classical, unconstrained ERM problem. In fact, it is equivalent to solving an instance of a regularized learning problem. Hence, despite the (possible)~non-convexity of the Lagrangian~\eqref{E:empirical_lagrangian}, local minimizers can be found using, e.g., gradient descent, when the losses and parametrizations are differentiable~(i.e., most common machine learning models). In fact, there is ample empirical and theoretical evidence that stochastic gradient descent can find good local minimizers for deep learning models such as (convolutional)~NNs~\cite{Zhang17u, Ge18l, Brutzkus17g, Soltanolkotabi18t}. This is in contrast to~\eqref{P:ecrm} for which even obtaining a feasible~$\btheta$ can be intricate.

Algorithm~\ref{L:primal_dual} takes advantage of this fact to approximate the solution of the constrained learning problem~\eqref{P:csl} by alternating between minimizing the empirical Lagrangian~$\hat{L}$ from~\eqref{E:empirical_lagrangian} with respect to~$\btheta$ for fixed dual variables~$\bmu$ and updating the dual variable using the resulting minimizer. Observe that step~3 requires that we obtain a $\rho$-approximate minimizer of the empirical Lagrangian. The following theorem shows that, if this is possible, then Algorithm~\ref{L:primal_dual} yields a near-optimal solution of~\eqref{P:csl}.

\begin{theorem}\label{T:convergence_quant}
Let~$\calM^\star = \argmax_{\bmu \in \setR^m_+}\ \hat{d}(\bmu)$ be the set of Lagrange multipliers of~\eqref{P:empirical_dual} and define~$U_0 = \inf_{\bmu \in \calM^\star}\ \norm{\bmu}^2$.
Under the conditions of Theorem~\ref{T:main}, $U_0$ is finite and the primal-dual pair~$\big( \btheta^{(T-1)},\bmu^{(T-1)} \big)$ obtained after running Algorithm~\ref{L:primal_dual} for
\begin{equation*}\label{E:number_of_iterations}
	T = \ceil{\frac{U_0}{2 \eta M\nu}} + 1
		\text{ steps}
\end{equation*}
with step-size
\begin{equation}\label{E:eta}
	\eta \leq \frac{2 \bar{\zeta}}{m B^2}
\end{equation}
satisfies
\begin{equation}\label{E:convergence_quant}
	\abs{P^\star - \hat{L}\Big( \btheta^{(T-1)},\bmu^{(T-1)} \Big)}
		\leq \rho + (2 + \Delta ) (M \nu + \bar{\zeta})
\end{equation}
with probability~$1-(3m+2)\delta$ over sample sets, for~$\Delta$ and~$\bar{\zeta}$ defined as in Theorem~\ref{T:main}.
\end{theorem}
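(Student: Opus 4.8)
The plan is to read Algorithm~\ref{L:primal_dual} as an approximate projected supergradient ascent on the concave empirical dual function~$\hat{d}$ and to splice the resulting optimization error onto the statistical bound already secured by Theorem~\ref{T:main}. First I would settle the finiteness of~$U_0$: Lemma~\ref{T:norm_mu_bound} gives~$\norm{\bhmu}_1 \leq B/\xi$ for every maximizer, so~$\calM^\star \subseteq \{\bmu \in \setR^m_+ : \norm{\bmu}_1 \leq B/\xi\}$ is bounded (and closed), whence~$U_0 = \inf_{\bmu \in \calM^\star}\norm{\bmu}^2 \leq (B/\xi)^2 < \infty$ and the infimum is attained at some~$\bmu^\dagger \in \calM^\star$, for which~$\hat{d}(\bmu^\dagger) = \hat{D}^\star$.

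The structural fact driving the argument is that, because~$\hat{L}(\btheta,\cdot)$ is affine in~$\bmu$ with gradient equal to the constraint slacks, the vector~$\bs^{(t-1)}$ of step~5 is a~$\rho$-approximate supergradient of~$\hat{d}$ at~$\bmu^{(t-1)}$: for any~$\bmu$, $\hat{d}(\bmu) \leq \hat{L}(\btheta^{(t-1)},\bmu) = \hat{L}(\btheta^{(t-1)},\bmu^{(t-1)}) + \langle \bs^{(t-1)}, \bmu - \bmu^{(t-1)}\rangle \leq \hat{d}(\bmu^{(t-1)}) + \rho + \langle \bs^{(t-1)}, \bmu - \bmu^{(t-1)}\rangle$, where the last step uses that~$\btheta^{(t-1)}$ is a~$\rho$-approximate minimizer (step~3). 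Next I would run the standard descent recursion. Since projection onto~$\setR^m_+$ is non-expansive and~$\bmu^\dagger \geq \bzero$, expanding~$\norm{\bmu^{(t)} - \bmu^\dagger}^2$ and inserting the supergradient inequality at~$\bmu = \bmu^\dagger$ yields the per-step bound~$2\eta[\hat{D}^\star - \hat{d}(\bmu^{(t-1)})] \leq \norm{\bmu^{(t-1)} - \bmu^\dagger}^2 - \norm{\bmu^{(t)} - \bmu^\dagger}^2 + 2\eta\rho + \eta^2\norm{\bs^{(t-1)}}^2$. As each loss lies in~$[0,B]$ the slacks satisfy~$\norm{\bs^{(t-1)}}^2 \leq mB^2$; telescoping over~$t = 1,\dots,T$ with~$\bmu^{(0)} = \bzero$ and discarding the nonnegative terminal term gives~$\frac{1}{T}\sum_{t=1}^{T}[\hat{D}^\star - \hat{d}(\bmu^{(t-1)})] \leq \frac{U_0}{2\eta T} + \rho + \frac{\eta m B^2}{2}$.

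Substituting the prescribed~$T \geq U_0/(2\eta M\nu)$ and~$\eta \leq 2\bar{\zeta}/(mB^2)$ collapses the two step-size terms to~$M\nu$ and~$\bar{\zeta}$ respectively, so the averaged---hence the best---dual suboptimality is at most~$M\nu + \bar{\zeta} + \rho$. I would then sandwich the reported Lagrangian value through the chain~$\hat{d}(\bmu^{(t-1)}) \leq \hat{L}(\btheta^{(t-1)},\bmu^{(t-1)}) \leq \hat{d}(\bmu^{(t-1)}) + \rho$: the upper side gives~$\hat{L} \leq \hat{D}^\star + \rho$, while the lower side combined with the suboptimality bound gives~$\hat{L} \geq \hat{D}^\star - (M\nu + \bar{\zeta} + \rho)$. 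Feeding both into the statistical estimate~$\abs{P^\star - \hat{D}^\star} \leq (1+\Delta)(M\nu + \bar{\zeta})$ of Theorem~\ref{T:main} via the triangle inequality produces~$\abs{P^\star - \hat{L}} \leq \rho + (2+\Delta)(M\nu + \bar{\zeta})$; the probability~$1-(3m+2)\delta$ is inherited verbatim from Theorem~\ref{T:main}, since dual ascent is deterministic once the samples are fixed.

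The main obstacle is that the telescoping certifies only the \emph{best} (or averaged) iterate, whereas~\eqref{E:convergence_quant} is phrased for the specific terminal pair~$(\btheta^{(T-1)},\bmu^{(T-1)})$. To bridge this I would refine the recursion into a ``descent when far'' statement: whenever~$\hat{D}^\star - \hat{d}(\bmu^{(t-1)}) > M\nu + \bar{\zeta} + \rho$, the choice of~$\eta$ forces~$\norm{\bmu^{(t)} - \bmu^\dagger}^2 \leq \norm{\bmu^{(t-1)} - \bmu^\dagger}^2 - 2\eta M\nu$, so after more than~$U_0/(2\eta M\nu)$ such steps the squared distance would go negative---a contradiction---guaranteeing a qualifying iterate within~$T$ steps. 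Reconciling this existence statement with the nominal terminal index (e.g.\ by reporting the running best, or by a non-increase argument valid once the trajectory is near~$\calM^\star$) is the delicate bookkeeping; everything else follows routinely once the affinity of~$\hat{L}$ in~$\bmu$ and the slack bound~$\norm{\bs^{(t-1)}}^2 \leq mB^2$ are in place.
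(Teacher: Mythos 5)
Your proposal is correct and---once you apply your ``descent when far'' refinement---is essentially the paper's own proof: the paper likewise establishes the $\rho$-approximate supergradient property (its Lemma~\ref{T:subgrad_approx}), uses non-expansiveness of the projection onto $\setR^m_+$ to derive the recursion $U_t \leq U_{t-1} + 2\eta\Delta_{t-1}$ with $\Delta_t = \hat{d}\big(\bmu^{(t)}\big) - \hat{D}^\star + \rho + \eta m B^2/2$, and counts that $\Delta_t \leq -M\nu$ can occur at most $U_0/(2\eta M\nu)$ times before $U_t$ would turn negative, which is exactly your argument with $\beta = M\nu$ (the per-step bounds $\norm{\bs^{(t)}}^2 \leq mB^2$, the sandwich $\hat{d}(\bmu^{(t)}) \leq \hat{L}(\btheta^{(t)},\bmu^{(t)}) \leq \hat{d}(\bmu^{(t)}) + \rho$, and the splice with Theorem~\ref{T:main} all match). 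The terminal-index bookkeeping you flag as delicate is handled in the paper precisely the way you suggest: it defines $T-1 = \min\{t \mid \Delta_t > -\beta\}$ and shows this stopping time is at most $\ceil{U_0/(2\eta M\nu)} + 1$, so the theorem's $T$ is an upper bound on when the qualifying pair $\big(\btheta^{(T-1)},\bmu^{(T-1)}\big)$ appears rather than a guarantee at a literal fixed iteration count---your candid caveat thus identifies an imprecision in the theorem's phrasing, not a gap in your own argument.
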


\begin{proof}
The proof is follow similarly to~\cite[Thm.~3]{Chamon20p}. Though~\cite{Chamon20p} assumes convexity of the~$\ell_i$, the proof itself relies only on strong duality, which holds here due to Proposition~\ref{T:zdg}. For completeness, a revised argument is provided in Appendix~\ref{X:convergence_quant}.
\end{proof}

Theorem~\ref{T:convergence_quant} bounds the error of Algorithm~\ref{L:primal_dual} in estimating~$P^\star$, the value of the learning problem~\eqref{P:csl}. In Theorem~\ref{T:convergence_quant}, the number of iterations~$T$ and the step size~$\eta$ are chosen so as to converge to a neighborhood of size~$O\big(\Delta (\bar{\zeta} + M\nu)\big)$, since this is the statistical error incurred by the dual learner~(Theorem~\ref{T:main}). Solving the empirical dual problem beyond that point would not improve the quality of the estimate. Naturally, while Theorem~\ref{T:convergence_quant} can be used to guide the choice of these parameters, their values are typically determined in practice by trial-and-error and cross-validation. Indeed, \eqref{E:number_of_iterations} and~\eqref{E:eta} depend on parameters of the learning task that are often hard to estimate, such as the Lipschitz constant~$M$~(Assumption~\ref{A:losses}), the empirical errors~$\zeta_i$~(Assumption~\ref{A:empirical}), and the approximation quality~$\nu$~(Assumption~\ref{A:parametrization}).

It is worth noting that~\eqref{E:convergence_quant} is a guarantee on the deterministic primal-dual pair~$\big( \btheta^{(T-1)},\bmu^{(T-1)} \big)$ as opposed to the randomized guarantees typically provided, e.g., in fair learning~\cite{Agarwal18a, Kearns18p, Cotter19o}. Still, Theorem~\ref{T:convergence_quant} only provides guarantees on approximating the value~$P^\star$, which is typically not the goal in learning. It does \emph{not} state that~$\btheta^{(T-1)}$ is near-optimal or even approximately feasible. This issue, known as \emph{primal recovery}, is not specific to constrained learning and is a fundamental limitation of duality in general~\cite{Nedic09a, Bertsekas09c}. While the experiments in Section~\ref{S:sims} suggest this is not a major issue in for typical learning problems, we next provide optimality and feasibility guarantees when randomizing over the iterates Algorithm~\ref{L:primal_dual}.

\begin{theorem}\label{T:convergence_rand}
Let~$\fke_t$ denote the empirical distribution over~$\big\{ \btheta^{(\tau)} \big\}$ generated by Algorithm~\ref{L:primal_dual} for~$\tau = 0,\dots,t-1$, i.e., $\fke_t$ is obtained by sampling from~$\{\btheta^{(0)}, \dots, \btheta^{(\tau)}\}$ uniformly at random. Under Assumptions~\ref{A:empirical} and~\ref{A:slater}, it holds with probability at least~$1-3(m+1)\delta$ that
\begin{subequations}\label{E:convergence_rand}
\begin{equation}
	\E_{\fkD_i,\, \btheta \sim \fke_{T}} \!\Big[ \ell_i\big( f_{\btheta}(\bx), y \big) \Big]
		\leq c_i + \zeta_i(N_i) + \frac{2 C}{\eta T}
		\text{,}
		\label{E:rand_feasibility}
\end{equation}
for all~$i = 1,\dots,m$. If all the conditions of Theorem~\ref{T:main} are met and the step size is chosen as in~\eqref{E:eta}, then we simultaneously have
\begin{equation}
	\E_{\fkD_0,\, \btheta \sim \fke_{T}} \!\Big[ \ell_0\big( f_{\btheta}(\bx), y \big) \Big]
		\leq P^\star  + \rho + (2+\Delta) (M\nu + \bar{\zeta})
		\text{.}
		\label{E:rand_optimality}
\end{equation}
\end{subequations}
\end{theorem}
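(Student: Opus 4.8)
The plan is to treat the dual iteration in Algorithm~\ref{L:primal_dual} as projected supergradient ascent on the concave empirical dual function~\eqref{E:empirical_dual_function}, and to read off both claims from a single online-ascent (regret) analysis; randomizing over the primal iterates through~$\fke_T$ is precisely what converts the value guarantee of Theorem~\ref{T:convergence_quant} into averaged optimality/feasibility statements, sidestepping the primal-recovery obstruction flagged in the text. Throughout, I would write $\hat{\ell}_i(\btheta) = \frac{1}{N_i}\sum_{n_i} \ell_i(f_{\btheta}(\bx_{n_i}),y_{n_i})$ and $s_i^{(t-1)} = \hat{\ell}_i(\btheta^{(t-1)}) - c_i$, so that $\hat{L}(\btheta,\bmu) = \hat{\ell}_0(\btheta) + \bmu^\top \bs(\btheta)$ and the update reads $\bmu^{(t)} = [\bmu^{(t-1)} + \eta\bs^{(t-1)}]_+$. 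The two ingredients feeding the analysis are the $\rho$-approximate primal step $\hat{L}(\btheta^{(t-1)},\bmu^{(t-1)}) \le \hat{d}(\bmu^{(t-1)}) + \rho \le \hat{D}^\star + \rho$ and the slack bound $\norm{\bs^{(t-1)}}^2 \le mB^2$, which follows from $\ell_i \in [0,B]$.

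For the feasibility claim~\eqref{E:rand_feasibility} I would use only that $[\,\cdot\,]_+$ dominates its argument: $\bmu^{(t)} \ge \bmu^{(t-1)} + \eta\bs^{(t-1)}$ componentwise, so telescoping from $\bmu^{(0)} = \zeros$ gives $\eta\sum_{t=1}^T \bs^{(t-1)} \le \bmu^{(T)}$ and hence, for each~$i$, the averaged empirical slack obeys $\frac{1}{T}\sum_{t} s_i^{(t-1)} \le \mu_i^{(T)}/(\eta T)$. It then remains to bound the dual iterate uniformly, and this is where Assumption~\ref{A:slater} enters: feeding the strictly feasible $\bm{{\hat{\theta}^\prime}}$ into the $\rho$-approximate minimization yields $\bmu^{(t-1)\top}\bs^{(t-1)} \le B + \rho - \xi\norm{\bmu^{(t-1)}}_1$, so the energy $\norm{\bmu^{(t)}}^2$ contracts whenever $\norm{\bmu^{(t-1)}}_1$ is large; this drift argument (the mechanism behind Lemma~\ref{T:norm_mu_bound}) caps $\norm{\bmu^{(t)}}_1 \le 2C$ for all~$t$, giving $\frac{1}{T}\sum_t s_i^{(t-1)} \le 2C/(\eta T)$. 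Finally I would pass from the empirical average to $\E_{\fkD_i,\,\btheta\sim\fke_T}[\ell_i]$ using Assumption~\ref{A:empirical} termwise, adding $\zeta_i(N_i)$ and collecting the failure probabilities over~$i$ by a union bound.

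For the optimality claim~\eqref{E:rand_optimality} I would instead telescope $\norm{\bmu^{(t)} - \bmu}^2$ (using non-expansiveness of the projection onto $\setR^m_+$) at the comparator $\bmu = \zeros$. Combining the resulting regret bound with the $\rho$-approximate step and $\hat{d}(\bmu^{(t-1)}) \le \hat{D}^\star$ yields $\frac{1}{T}\sum_t \hat{\ell}_0(\btheta^{(t-1)}) \le \hat{D}^\star + \rho + \tfrac{\eta mB^2}{2}$, and the step-size choice~\eqref{E:eta} turns the last term into $\bar{\zeta}$. Converting the empirical objective average to $\E_{\fkD_0,\,\btheta\sim\fke_T}[\ell_0]$ via Assumption~\ref{A:empirical} and replacing $\hat{D}^\star$ by $P^\star$ through the value bound $\abs{P^\star - \hat{D}^\star} \le (1+\Delta)(M\nu + \bar{\zeta})$ of Theorem~\ref{T:main} produces the advertised $P^\star + \rho + (2+\Delta)(M\nu + \bar{\zeta})$ after collecting the $M\nu$ and $\bar{\zeta}$ terms. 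A final union bound over the uniform-convergence events for $i = 0,\dots,m$ together with the events underlying Theorem~\ref{T:main} accounts for the stated confidence $1-3(m+1)\delta$.

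The main obstacle is the uniform boundedness of the dual iterates needed for~\eqref{E:rand_feasibility}: the averaged-slack identity $\eta\sum_t \bs^{(t-1)} \le \bmu^{(T)}$ is useful only if $\bmu^{(T)}$ cannot drift away, and nothing in the raw ascent recursion prevents this. Establishing the contraction $\norm{\bmu^{(t)}}_1 \le 2C$ rigorously---handling the $\rho$-slack in the primal step and the $\eta^2$ term so that the Slater margin~$\xi$ genuinely dominates---is the delicate part; everything else is bookkeeping of the regret inequality and the uniform-convergence substitutions, following~\cite[Thm.~3]{Chamon20p} with convexity of the~$\ell_i$ replaced by the strong duality furnished by Proposition~\ref{T:zdg}.
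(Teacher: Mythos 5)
Your proposal is correct, and its skeleton is the paper's: feasibility via the componentwise domination~$\bmu^{(t)} \succeq \bmu^{(t-1)} + \eta \bs^{(t-1)}$ and telescoping from~$\bmu^{(0)} = \zeros$; optimality via what you call regret at the comparator~$\zeros$, which is literally the paper's Lemma~\ref{T:rand_complementary_slackness} (telescoping~$\Vert \bmu^{(t)} \Vert^2$ through the non-expansive projection to lower-bound the ergodic complementary slackness by~$-\eta m B^2/2$), followed by the add-and-subtract of~$\sum_i \mu_i^{(t)} s_i^{(t)}$, the~$\rho$-oracle, $\hat{d}(\bmu^{(t)}) \leq \hat{D}^\star$, the step-size choice~\eqref{E:eta} absorbing~$\eta m B^2/2$ into~$\bar{\zeta}$, Propositions~\ref{T:param}--\ref{T:empirical} to trade~$\hat{D}^\star$ for~$P^\star$, and a final union bound giving~$1-3(m+1)\delta$; the uniform-over-$\Theta$ nature of Assumption~\ref{A:empirical} is indeed what lets one event per distribution cover all iterates at once. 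The one genuine divergence is the dual-iterate boundedness step, which you correctly flag as the crux. The paper's Lemma~\ref{T:iterates_norm_bound} bounds~$\inf_{\bhmu \in \calM^\star} \Vert \bmu^{(t)} - \bhmu \Vert$ by splitting on the approximate-multiplier set~$\calD$ in~\eqref{eqn_bounded_lambda}: inside~$\calD$ the Slater argument of Lemma~\ref{T:norm_mu_bound} caps~$\norm{\bmu}_1$, while outside~$\calD$ the drift~$\Delta_t < 0$ from the recursion~\eqref{E:algorithm_update3} (recycled from the proof of Theorem~\ref{T:convergence_quant}) forces the distance to~$\calM^\star$ to shrink; this yields~$\mu_i^{(T)} \leq \Vert \bmu^{(T)} - \bhmu \Vert + \Vert \bhmu \Vert \leq 2C$. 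Your alternative---plugging the strictly feasible~$\bm{{\hat{\theta}^\prime}}$ into the~$\rho$-approximate primal step to get~$\bmu^{(t-1)\top} \bs^{(t-1)} \leq B + \rho - \xi \norm{\bmu^{(t-1)}}_1$ and running a Lyapunov drift on~$\Vert \bmu^{(t)} \Vert^2$---is a standard and perfectly rigorous substitute: the energy is non-increasing whenever~$\norm{\bmu^{(t-1)}}_1 \geq (B + \rho + \eta m B^2/2)/\xi$, and since each update moves~$\bmu$ by at most~$\eta \sqrt{m} B$, induction caps~$\Vert \bmu^{(t)} \Vert$ for all~$t$. What each buys: your version is more self-contained (it needs neither~$\calM^\star \neq \emptyset$ nor any piece of the Theorem~\ref{T:convergence_quant} analysis, and it handles the~$\rho$ and~$\eta^2$ terms explicitly, which is exactly the delicacy you identified), while the paper's version produces the specific constant~$C = (B - \hat{D}^\star)/\xi$ tied to the multiplier set, so that the stated~$2C/(\eta T)$ comes out verbatim; your drift constant is of order~$(B + \rho + \eta m B^2/2)/\xi + \eta\sqrt{m}B$ and matching the theorem's ``$2C$'' is then a matter of renaming, not substance.
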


\begin{proof}
See Appendix~\ref{X:convergence_rand}.
\end{proof}

Whereas Theorem~\ref{T:convergence_quant} only dealt with the value of~\eqref{P:param_dual_csl}, Theorem~\ref{T:convergence_rand} provides guarantees simultaneously on the value and feasibility of a randomized solution obtained by sampling the iterates~$\btheta^{(t)}$ uniformly at random. It is worth contrasting this result with those obtained in the context of rate-constrained learning. In particular, \cite{Kearns18p, Agarwal18a} obtain randomized solutions by directly optimizing a distribution over~$\Theta$. Doing so lifts~\eqref{P:csl} to a linear program for which strong duality holds~\cite{Bertsekas09c}. We showed in Theorem~\ref{T:main}~(more precisely, Proposition~\ref{T:zdg}) that this is, in fact, not necessary in the context of constrained learning~(see Section~\ref{S:rate} for more details on the rate-constrained case).

In fact, this is not necessary in general when considering randomized solutions. This is the approach taken by~\cite{Cotter19o}. However, \cite{Cotter19o} does not consider the issue of generalization, tackling~\eqref{P:ecrm} directly. Here, however, we are interested in solving~\eqref{P:csl}, i.e., obtain a solution that generalizes to the population in the PACC sense~(Definition~\ref{D:pacc}). Thus, we need only solve the dual to within the statistical error described in Theorem~\ref{T:main}, which allows us to use a fixed step size and obtain faster rates.

A common point between these previous works and Algorithm~\ref{L:primal_dual} is the use of an~(approximate) oracle~(step~3). It is often the case that an iterative procedure, such as gradient descent, underlies this oracle and the cost of running this procedure until convergence~(or even until a good solution is obtained) can be prohibitive. A common alternative is to adopt an Arrow-Hurwicz-style approach in which the primal variable~$\btheta^{(t)}$ and the dual variables~$\bmu^{(t)}$ are updated iteratively~\cite{Arrow58s}. While the convergence guarantee of Theorems~\ref{T:convergence_quant}--\ref{T:convergence_rand} no longer holds in this case, good results are observed in practice by performing the primal and dual updates at different timescales, e.g., by performing step~3 once per epoch. We showcase these results next.

% Applications
% !TEX root=tit_csl.tex

\section{Applications}
\label{S:sims}

This section illustrates how constrained learning can be used to formulate and tackle two learning problems: robust and fair learning. The first example~(Section~\ref{S:robust}) showcases how~\eqref{P:csl} can be used to address the nominal accuracy vs.\ adversarial robustness trade-off. While the losses used in this example are convex~(cross-entropy loss), the parametrization is nonlinear~(convolutional NN, CNN), rendering~\eqref{P:csl} non-convex. The second example~(Section~\ref{S:rate}) tackles a rate constrained problem from fairness. In this case, we use a logistic model so that the objective is convex, but the constraints involve a non-convex indicator function~(0/1 loss). The discontinuous nature of this function poses additional issues that we address in under a margin assumption using a smooth surrogate~(sigmoidal function).

\subsection{Constrained learning with convex losses: Robust learning}
\label{S:robust}

Robustness is a well-known issue affecting modern machine learning models, especially CNNs. It is in fact straightforward to construct small input perturbations that drastically change the model output. Indeed, even perturbations as small as~$1\%$ of the pixel range can drop the accuracy of a trained model from above~$85\%$ to below~$15\%$~(see ``Classical training'' in Figure~\ref{F:cifar_adversarial}). To this end, numerous approaches have been proposed based on robust optimization~\cite{Sinha18c} and statistical smoothing~\cite{Salman19p, Cohen19c}. Yet, a growing body of empirical evidence has shown \emph{adversarial training} to be the most effective way to obtain robust classifiers, essentially by training models on perturbed data rather than directly using the sample set~\cite{Szegedy14i, Goodfellow15e, Madry18t, Sinha18c, Shaham18u}. Namely,
\begin{prob}\label{P:adversarial}
	\minimize_{\phi\in\calH}\ \E_{(\bxt,y) \sim \fkA} \!\Big[ \ell_0\big( \phi(\bxt),y \big) \Big]
		\text{,}
\end{prob}
for some adversarial distribution~$\fkA$ induced by taking~$\bxt = \bx + \bdelta$ for some~$\norm{\bdelta}_{\infty} \leq \varepsilon$. While this approach is now ubiquitous, it often results in classifiers with poor nominal performance~\cite{Tsipras19r, Javanmard20p}~(Figure~\ref{F:cifar_adversarial}).

In practice, penalty-based methods combining both clean and perturbed data, i.e., the objectives of~\eqref{P:sl} and~\eqref{P:adversarial}, into a single loss function are often used to overcome this issue~\cite{Zheng16i, Wang19i, Zhang19t}. However, while empirically successful, these methods cannot guarantee nominal or adversarial performance outside of the training samples. As we have mentioned before, classical learning theory~\cite{Vapnik00t, Shalev-Shwartz04u, Mohri18f} provides generalization bounds only for the aggregated objective and not each individual penalty term. Additionally, the choice of the penalty parameter is not straightforward and depends on the underlying learning task, making it difficult to transfer across instances and highly dependent on domain expert knowledge. To complicate things further, it may even be necessary for this parameter to evolve during training, as we illustrate next.

Similar to~\cite{Chamon20p}, we can use constrained learning to tackle this problem by using~\eqref{P:csl} to write
\begin{prob}\label{P:robustness}
	\minimize_{\btheta \in \Theta}&
		&&\E_{(\bx,y) \sim \fkD} \!\Big[ \ell_0\big( f_{\btheta}(\bx),y \big) \Big]
	\\
	\subjectto& &&\E_{(\bxt,y) \sim \fkA} \!\Big[ \ell_0\big( f_{\btheta}(\bxt),y \big) \Big] \leq c
		\text{.}
\end{prob}
In words, \eqref{P:robustness} seeks a model with the best possible performance on nominal data~(distribution~$\fkD$) among those models that have good performance under corrupted data~(distribution~$\fkA$). When the distribution~$\fkA$ is fixed \emph{a priori}, \eqref{P:robustness} formulates a problem of out-of-distribution generalization. For adversarial learning, $\fkA$ depends on the model~$f_{\btheta}$ and can be quite intricate to determine or even sample from~\cite{Carlini19o}. While constrained learning can also be used to tackle this issue, this is beyond the scope of this work~(see~\cite{Robey21a}). Still, though we may not be able to sample from the worst-case~$\fkA$, we can sample from distributions induced by different attacks developed in the literature, such as FGSM~\cite{Goodfellow15e} or \emph{PGD}~\cite{Madry18t}, and use the theory and algorithm developed in this work to obtain robustness guarantees against these distributions~$\fkA$, regardless of whether they are adversarial.

\begin{figure}[t]
	\centering
	\vspace*{3pt}
	\includesvg{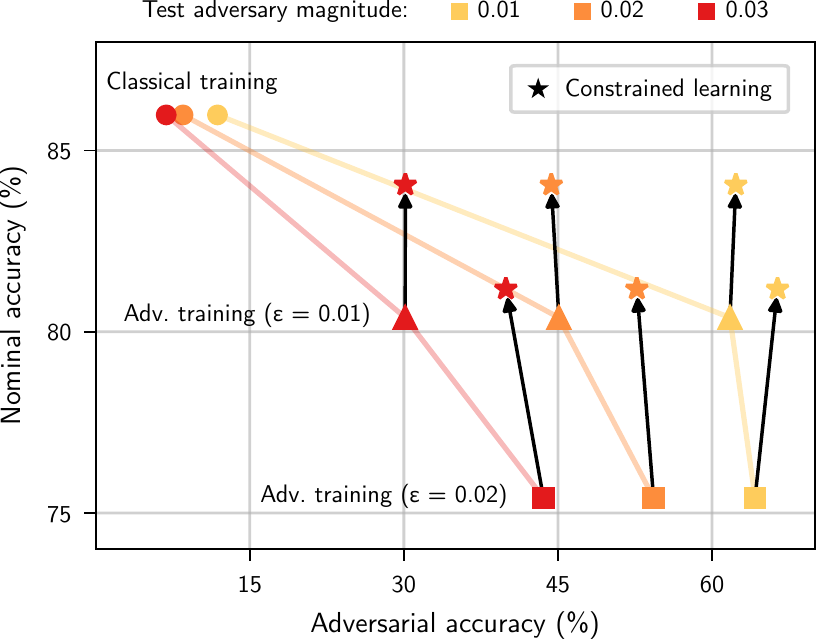}
	\caption{Trade-off between nominal and adversarial accuracy for models trained using classical~[\eqref{P:sl}, circles], adversarial~[\eqref{P:adversarial}, triangles and squares], and constrained~[\eqref{P:robustness}, stars] learning. Here, ``nominal accuracy'' stands for~$\E_{\fkD} \big[ \ell_0( f_{\btheta}(\bx),y ) \big]$, where~$\fkD$ is the natural, unperturbed distribution of the images, and ``adversarial accuracy'' stands for~$\E_{\fkA} \big[ \ell_0( f_{\btheta}(\bx),y ) \big]$, where~$\fkA$ incorporates worst-case perturbations of the input data for the classifier approximated using \emph{PGD}~\cite{Madry18t}.}
	\label{F:cifar_adversarial}
\end{figure}

We begin by training a ResNet18~\cite{He16d} to classify images from the CIFAR-10 dataset using ADAM with the same settings as~\cite{Kingma17a} and batch size~$128$. We reserved~$100$ randomly sampled images from each class for validation. The unconstrained classifier trained over~$100$ epochs reached it best accuracy over the validation set after~$82$ epochs, which corresponds to a nominal test accuracy of~$85.4\%$~(Figure~\ref{F:cifar_adversarial}). However, when the input is attacked using \emph{PGD}~\cite{Madry18t}, the accuracy falls below~$15\%$ already for~$\varepsilon = 0.01$. In all tests below, we apply \emph{PGD} for~$50$ iterations with a step size of~$\varepsilon/30$ and display the worst result over~$10$ restarts. Although adversarial training is able to achieve adversarial accuracy up to four times better~(after~$300$ training epochs), it does so at the cost of deteriorating the nominal performance~(Figure~\ref{F:cifar_adversarial}). Next, we use~\eqref{P:robustness} to illustrate that better trade-offs are possible.

To do so, we use Algorithm~\ref{L:primal_dual} to solve~\eqref{P:robustness} sampling from~$\fkA$ using \emph{PGD} with~$\varepsilon = \{0.01, 0.02\}$, ADAM with step sizes~$\{10^{-2},10^{-3}\}$~(all other settings as in~\cite{Kingma17a}) for step~3, and updating the dual variables~(step~5) once per epoch, using ADAM with a step sizes of~$\{10^{-3},10^{-4}\}$~(all other settings as in~\cite{Kingma17a}) and~$c = \{0.1,0.4\}$. These constraint values~$c$ were chosen by trial-and-error to achieve specific values of adversarial accuracy. Different values would lead to different compromises between nominal and adversarial accuracy that may be more appropriate for different applications. To accelerate training, we use a much weaker attack running \emph{PGD} without restarts for only~$5$ steps with step size~$\varepsilon/3$. The result is a considerably more robust classifier that has better nominal performance than the classifier obtained using adversarial training, i.e.,~\eqref{P:adversarial}. Due to the interactive dynamics of the primal-dual, however, training these classifiers can require~$2$ to~$5$ times the number of epochs needed to perform adversarial training, depending on how hard the constraints are to satisfy~(Figure~\ref{F:cifar_trace}). This gap can be reduced by tuning the parameters of the algorithm and/or using faster optimization methods. Such improvements are left for future work.

Note that, despite the similarities between Algorithm~\ref{L:primal_dual}~(step~3) and penalty-based methods, a key distinction is the fact that the dual variable~$\mu$ is adaptive as opposed to a fixed parameter. This distinction is at the core of the generalization results in Section~\ref{S:empirical_dual} and leads to learning dynamics with more flexibility to explore the optimization landscape. Figure~\ref{F:cifar_trace} illustrates this observation by displaying the evolution of the dual variable~$\mu^{(t)}$.

Figure~\ref{F:cifar_trace}b shows that, while the value of~$\mu$ is small at the end of training~(between~$0$ and~$0.5$), i.e., that the adversary has almost no influence on the training objective by the end of the learning process, its value rises above~$200$ in the first phase of training in order to meet the robustness constraint~(Figure~\ref{F:cifar_trace}a). It has in fact been observed empirically that restricting adversarial training to the early stages of learning can lead to more robust models~\cite{Zhang20a, Sitawarin20i}. Here, however, this behavior is not heuristic and arises naturally from solving the constrained problem using Algorithm~\ref{L:primal_dual}.

Finally, the fact that the dual variable grows only to then approach zero suggests that the adversarial constraint is used to initially guide the model to a favorable region of the optimization landscape, where little to no input from the constraint is required. Figure~\ref{F:cifar_continued} showcases this effect by using the constrained solution as a warm start and training using only the nominal loss, i.e., \eqref{P:sl}~(using ADAM with the same settings as~\cite{Kingma17a}). The resulting model~(\emph{Warm start}), while less robust than the constrained solution, is stronger than a model trained from a random initialization~(\emph{Random initialization}). This shows that, in this particular example, there exist minima of the nominal loss that are more robust to adversarial attacks, though they may not be easy to access by local search from a random initialization.

\begin{figure}[t]
	\centering
	\vspace*{3pt}
	\includesvg[width=0.95\columnwidth]{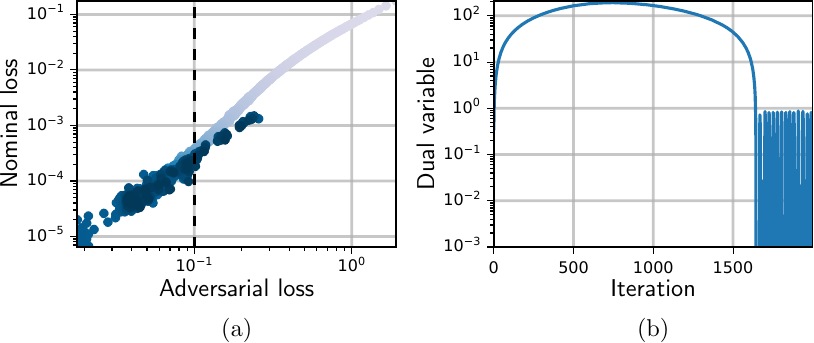}
	\caption{Evolution of losses and dual variable when using Algorithm~\ref{L:primal_dual} to solve~\eqref{P:robustness} for~$\varepsilon = 0.01$ and $c = 0.1$: (a)~nominal vs.\ adversarial loss and~(b)~dual variable~$\mu^{(t)}$.}
	\label{F:cifar_trace}
\end{figure}

\begin{figure}[t]
	\centering
	\includesvg{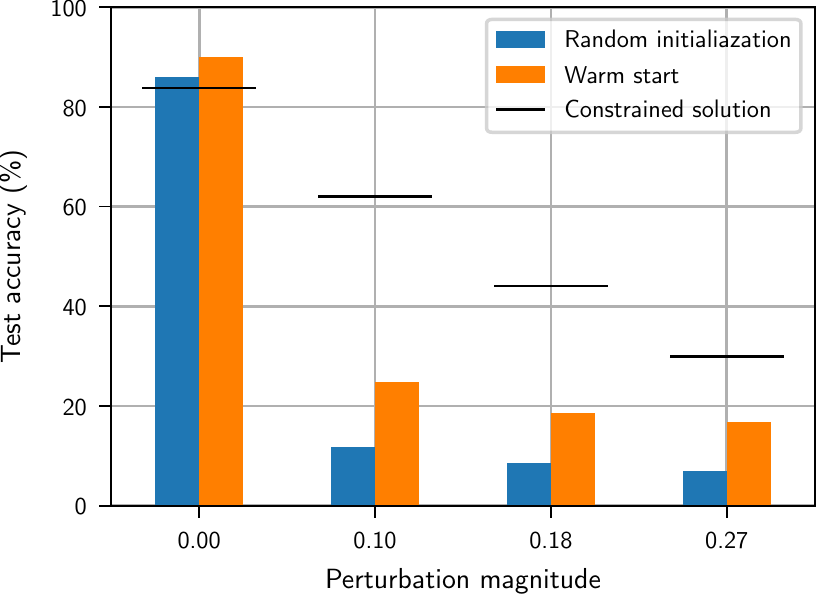}
	\caption{Nominal and adversarial accuracy of models trained from different initializations.}
	\label{F:cifar_continued}
\end{figure}

\subsection{Rate-constrained learning: Fair classification}
\label{S:rate}

Rate constraints, or more precisely probability or chance constraints, have been used in statistics at least since Neyman-Pearson~\cite{Neyman33i}. In learning, they have garnered attention due their central role in fairness, although they have also been used to control classifier performance, such as its coverage, precision, or accuracy~\cite{Goh16s, Kearns18p, Agarwal18a, Cotter19o}. Explicitly, a rate-constrained learning problems is written as
\begin{prob}[\textup{P-RCL}]\label{P:rcl}
	P_R^\star = \min_{\btheta \in \Theta}&
		&&\E_{\fkD_0} \!\Big[ \ell_0\big( f_{\btheta}(\bx),y \big) \Big]
	\\
	\subjectto& &&\E_{\fkD_i} \!\big[
		\indicator \big( g_i(f_{\btheta}(\bx),y) \geq 0 \big)
	\big] \leq c_i
		\text{.}
\end{prob}
Recall that~$\indicator(\calE)$ denotes the indicator of the event~$\calE$, i.e., $\indicator(\calE) = 1$ if~$\calE$ and zero otherwise. The constraint in~\eqref{P:rcl} is therefore equivalent to~$\sideset{}{_{\fkD_i}}\Pr \!\big[ g_i(f_{\btheta}(\bx),y) \geq 0 \big] \leq c_i$.

Rate constraints are challenging due to their non-convexity and non-differentiability. Hence, generalization guarantees are often obtained by modifying~\eqref{P:rcl} to optimize for a distributions over~$\Theta$~(e.g.,~\cite{Kearns18p, Agarwal18a}). The resulting problem is a linear program that lends itself to convex analysis tools. In contrast, the theory and algorithms from Sections~\ref{S:empirical_dual} and~\ref{S:algorithm} can be applied to~\eqref{P:rcl} directly, despite its non-convexity, as long as the hypothesis class~$\calH$ obeys the uniform convergence property from Assumption~\ref{A:empirical}. In the case of binary classification problems, i.e., when~$g_i(z, y) = zy$ or simply~$g_i(z, y) = z$, this is equivalent to having finite VC dimension or being PAC learnable~\cite[Thm.~6.7]{Shalev-Shwartz04u}.

The main obstacle to applying Algorithm~\ref{L:primal_dual} to~\eqref{P:rcl} is that the Lagrangian of~\eqref{P:rcl} is not differentiable. Indeed, the dual variables updates~(steps~4 and~5 in Algorithm~\ref{L:primal_dual}) use an approximate Lagrangian minimizer~(step~3), which can be hard to obtain without differentiability. Additionally, while Theorem~\ref{T:convergence_rand} ensures near-feasibility for Algorithm~\ref{L:primal_dual} even for non-smooth losses, our near-optimality results rely on smoothness. To overcome this issue, we obtain the approximate minimizer in step~3 by using a smooth surrogate of the indicator function, e.g., a sigmoid. Explicitly, consider the empirical dual problem of~\eqref{P:rcl}
\begin{prob}[\textup{D-RCL}]\label{P:rcl_dual}
	\hat{D}_R^\star = \max_{\bmu \in \setR_+^m}& &&\hat{d}_R(\bmu)
		\text{,}
\end{prob}
where~$\hat{d}_R(\bmu) = \min_{\btheta \in \Theta}\ L_R(\btheta,\bmu)$ for the empirical Lagrangian
\begin{equation}\label{E:lagrangian_rate}
\begin{aligned}
	\hat{L}_R(\btheta,\bmu) &= \frac{1}{N_0} \sum_{n_0 = 1}^{N_0} \ell_0\big( f_{\btheta}(\bx_{n_0}), y_{n_0} \big)
	\\
	{}&+ \sum_{i = 1}^m \mu_i \left[ \frac{1}{N_i} \sum_{n_i = 1}^{N_i}
				\indicator \big( g_i(f_{\btheta}(\bx_{n_i}), y_{n_i} ) \big) - c_i \right]
		\text{.}
\end{aligned}
\end{equation}
To overcome the discontinuous nature of the indicator in~\eqref{E:lagrangian_rate}, we replace it in step~3 of Algorithm~\ref{L:primal_dual} by
\begin{equation}\label{E:lagrangian_sigmoid}
\begin{aligned}
	\hat{L}_\sigma(\btheta,\bmu) &= \frac{1}{N_0} \sum_{n_0 = 1}^{N_0} \ell_0\big( f_{\btheta}(\bx_{n_0}), y_{n_0} \big)
		\\
		{}&+ \sum_{i = 1}^m \mu_i \left[ \frac{1}{N_i} \sum_{n_i = 1}^{N_i}
			\sigma \big( g_i(f_{\btheta}(\bx_{n_i}), y_{n_i} ) \big) - c_i \right]
		\text{,}
\end{aligned}
\end{equation}
for some surrogate~$\sigma$, such as the sigmoid
\begin{equation}\label{E:sigmoid}
	\sigma(x) = \left[ 1 + e^{-ax} \right]^{-1}
		\text{,} \quad a \geq 1
		\text{.}
\end{equation}

This is a typical approach in the statistics~(e.g., logistic models) and learning literature~\cite{Goh16s, Jang17c, Maddison17t, Cotter19o}. In particular, it was applied to rate-constrained learning in~\cite{Cotter19o}. In contrast to Algorithm~\ref{L:primal_dual}, their algorithm is based on Fritz-John conditions~\cite{Mangasarian67t} and a more complicated no-swap-regret dual update~(replacing steps~4 and~5). They obtain feasibility results similar to those in~\eqref{E:rand_feasibility}~(Theorem~\ref{T:convergence_rand}), but prove near-optimality only with respect to the value of the surrogate Lagrangian~\eqref{E:lagrangian_sigmoid}. Under a margin assumption, it is possible to derive a guarantee directly with respect to the value of~\eqref{P:rcl}.

\begin{assumption}\label{A:margin}
For all~$n_i = 1,\dots,N_i$, $i = 1,\dots,m$, and~$\bmu \in \setR_+^m$, it holds that
\begin{equation}\label{E:margin}
	\max \left\{ \big\vert g_i\big(f_{\btheta_R^\dagger(\bmu)}(\bx_{n_i}),y_{n_i}\big) \big\vert,
		\big\vert g_i\big(f_{\btheta_\sigma^\dagger(\bmu)}(\bx_{n_i}),y_{n_i}\big) \big\vert
	\right\} \geq \tau
		\text{.}
\end{equation}
for the Lagrangian minimizers~$\btheta_R^\dagger(\bmu) \in \argmin_{\btheta \in \Theta} \hat{L}_R(\btheta,\bmu)$ and~$\btheta_\sigma^\dagger(\bmu) \in \argmin_{\btheta \in \Theta} \hat{L}_\sigma(\btheta,\bmu)$.
\end{assumption}

The following proposition shows that minimizing the surrogate Lagrangian~\eqref{E:lagrangian_sigmoid} yields an approximate minimizer of the Lagrangian~\eqref{E:lagrangian_rate}.

\begin{proposition}\label{T:indicator_approx}
Under Assumption~\ref{A:margin} it holds for all~$\bmu \in \setR^m_+$ and~$\btheta_\sigma^\dagger(\bmu) \in \argmin_{\btheta \in \Theta} \hat{L}_\sigma(\btheta,\bmu)$ that
\begin{multline}\label{E:indicator_approx}
	\hat{L}_R\big( \btheta_\sigma^\dagger(\bmu),\bmu \big) \leq \min_{\btheta \in \Theta} \hat{L}_R(\btheta,\bmu)
		+ 2 \norm{\bmu}_1 \big( 1-\sigma(\tau) \big)
		\text{.}
\end{multline}
\end{proposition}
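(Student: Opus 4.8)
The plan is to establish the bound via a short sandwich argument comparing $\hat{L}_R$ and $\hat{L}_\sigma$ at their respective minimizers, exploiting the fact that these two Lagrangians differ \emph{only} in their constraint terms---the indicator $\indicator(\cdot \geq 0)$ versus its sigmoidal surrogate $\sigma$. Writing $\btheta_R^\dagger = \btheta_R^\dagger(\bmu)$ and $\btheta_\sigma^\dagger = \btheta_\sigma^\dagger(\bmu)$ for brevity, the objective term $\frac{1}{N_0}\sum_{n_0} \ell_0$ cancels in the difference $\hat{L}_R(\btheta,\bmu) - \hat{L}_\sigma(\btheta,\bmu)$, leaving only $\sum_i \mu_i \frac{1}{N_i}\sum_{n_i} [\indicator(g_i \geq 0) - \sigma(g_i)]$. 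The goal is to control this difference uniformly by $\norm{\bmu}_1 (1 - \sigma(\tau))$ wherever the margin condition holds, and then chain through the optimality of $\btheta_\sigma^\dagger$.

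The key ingredient is a pointwise bound: for any $x$ with $\abs{x} \geq \tau$, I claim $\abs{\indicator(x \geq 0) - \sigma(x)} \leq 1 - \sigma(\tau)$. This follows from the monotonicity of the sigmoid together with the symmetry $\sigma(-x) = 1 - \sigma(x)$, which is immediate from $\sigma(x) = [1 + e^{-ax}]^{-1}$. Indeed, if $x \geq \tau$ then $\indicator(x \geq 0) = 1$ and $\sigma(x) \geq \sigma(\tau)$, so the gap is $1 - \sigma(x) \leq 1 - \sigma(\tau)$; if $x \leq -\tau$ then $\indicator(x \geq 0) = 0$ and $\sigma(x) \leq \sigma(-\tau) = 1 - \sigma(\tau)$. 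Assumption~\ref{A:margin} guarantees precisely that every sample evaluation $g_i(f_{\btheta_R^\dagger}(\bx_{n_i}), y_{n_i})$ and $g_i(f_{\btheta_\sigma^\dagger}(\bx_{n_i}), y_{n_i})$ has magnitude at least $\tau$, so the pointwise bound applies termwise at \emph{both} minimizers.

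With the pointwise bound in hand, I would assemble the estimate in three steps. First, since each summand in the constraint term is bounded in magnitude by $1 - \sigma(\tau)$ at $\btheta_\sigma^\dagger$, averaging over samples and summing against the nonnegative weights $\mu_i$ gives $\hat{L}_R(\btheta_\sigma^\dagger, \bmu) \leq \hat{L}_\sigma(\btheta_\sigma^\dagger, \bmu) + \norm{\bmu}_1 (1 - \sigma(\tau))$. Second, optimality of $\btheta_\sigma^\dagger$ for $\hat{L}_\sigma$ yields $\hat{L}_\sigma(\btheta_\sigma^\dagger, \bmu) \leq \hat{L}_\sigma(\btheta_R^\dagger, \bmu)$. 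Third, applying the same pointwise bound at $\btheta_R^\dagger$ gives $\hat{L}_\sigma(\btheta_R^\dagger, \bmu) \leq \hat{L}_R(\btheta_R^\dagger, \bmu) + \norm{\bmu}_1 (1 - \sigma(\tau))$. Concatenating these three inequalities and using $\hat{L}_R(\btheta_R^\dagger, \bmu) = \min_{\btheta \in \Theta} \hat{L}_R(\btheta, \bmu)$ produces exactly the claimed bound, with the factor of $2$ arising from crossing between the two Lagrangians twice.

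I do not anticipate a serious obstacle: the argument is a purely deterministic approximation estimate requiring no probabilistic or convexity machinery. The only point demanding care is that the margin must be invoked at the indicator-minimizer and the surrogate-minimizer \emph{simultaneously}---this is exactly why Assumption~\ref{A:margin} is phrased as a maximum over the two sets of evaluations and quantified over all $\bmu \in \setR^m_+$. The factor of $2$ is then unavoidable, and the approximation error vanishes as $a \to \infty$, since $\sigma(\tau) \to 1$ for any fixed $\tau > 0$.
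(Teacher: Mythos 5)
Your proof is correct and takes essentially the same route as the paper's: both combine the optimality of $\btheta_\sigma^\dagger$ for $\hat{L}_\sigma$ with the termwise indicator--sigmoid discrepancy bound $1-\sigma(\tau)$ applied at each of the two Lagrangian minimizers, your three-step sandwich being just the paper's single chained inequality written out, with the pointwise bound (via $\sigma(-x)=1-\sigma(x)$) made explicit where the paper leaves it implicit. One small caution: invoking the margin at \emph{both} minimizers simultaneously really requires the lower bound in Assumption~\ref{A:margin} to hold for each evaluation separately (i.e., reading the $\max$ as a $\min$), which is how the paper's own proof uses it as well, so your parenthetical claim that the $\max$ phrasing is ``exactly why'' both bounds apply is a misreading of the stated assumption rather than a flaw in your argument.
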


\begin{proof}
Let~$\btheta_R^\dagger(\bmu) \in \argmin_{\btheta \in \Theta} \hat{L}_R(\btheta,\bmu)$. Then, by optimality, notice that~$\hat{L}_\sigma(\btheta_R^\dagger,\bmu) - \hat{L}_\sigma(\btheta_\sigma^\dagger,\bmu) \geq 0$ to get
\begin{align*}
	\hat{L}_R(\btheta_\sigma^\dagger,\bmu) - \min_{\btheta} \hat{L}_R(\btheta,\bmu)
		&\leq \hat{L}_R(\btheta_\sigma^\dagger,\bmu) - \hat{L}_\sigma(\btheta_\sigma^\dagger,\bmu)
	\\
	{}&+ \hat{L}_{\sigma}(\btheta_R^\dagger,\bmu) - \hat{L}_R(\btheta_R^\dagger,\bmu)
		\text{.}
\end{align*}
Using the definition of the Lagrangians in~\eqref{E:lagrangian_rate} and~\eqref{E:lagrangian_sigmoid} then yields
\begin{multline*}
	L_R(\btheta_\sigma^\dagger,\bmu) - \min_{\btheta} \hat{L}_R(\btheta,\bmu) \leq{}
	\\
	\sum_{i=1}^m \frac{\mu_i}{N_i} \sum_{n_i = 1}^{N_i} \!\Big[
	\sigma \big( g_i(f_{\btheta_R^\dagger}(\bx_n),y_n) \big)
		- \indicator \big( g_i(f_{\btheta_R^\dagger}(\bx_n),y_n) \big)
	\\
	+ \indicator \big( g_i(f_{\btheta_\sigma^\dagger}(\bx_n),y_n) \big)
		- \sigma \big( g_i(f_{\btheta_\sigma^\dagger}(\bx_n),y_n) \big)
		\Big]
		\text{.}
\end{multline*}
Using~\eqref{E:margin} yields the desired bound.
\end{proof}

\begin{figure}[t]
	\centering
	\includesvg{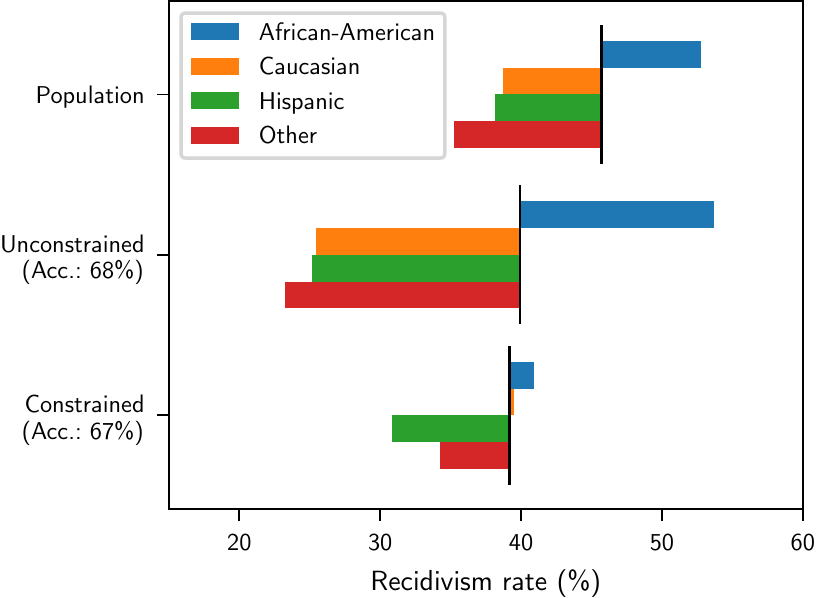}
	\caption{Dual variable relative to the fairness constraint.}
	\label{F:compas_prevalence}
\end{figure}

To illustrate the use of rate constraints in learning, we consider a fair classification application around the COMPAS dataset~\cite{ProPublica16c}~(data preprocessing details can be found in~\cite{Chamon20p}). The goal is to predict recidivism based on a person's characteristics and past offenses. Yet, while the overall recidivism rate in the dataset is~$45.5\%$, this rate is~$52.3\%$ for African-Americans, which compose more than half the sample set~(Figure~\ref{F:compas_prevalence}). Considering how this data was collected~(based on arrests), we may expect this disparity to be due to sampling bias.

Unconstrained, a logistic classifier exacerbates this skewness. While its test accuracy is~$68.5\%$, it predicts an overall recidivism rate of~$38.4\%$~(the actual rate on the test set is~$45.7\%$) while maintaining the African-American group rate at~$52.2\%$~(Figure~\ref{F:compas_prevalence}). This classifier was trained over a random sample containing~$80\%$ of the dataset using ADAM~\cite{Kingma17a} for~$1000$ epochs with batch size of~$128$ samples, learning rate~$0.2$, and all other parameters as in the original paper. While we use a logistic classifier, the same results are obtained for a single layer feed-forward neural network.

To overcome this issue, we impose fairness requirements during learning. More specifically, we use an asymmetric form of statistical parity that only upper bounds the difference between each protected group and the overall recidivism rate. Explicitly,
\begin{prob}\label{P:compas}
	\minimize_{\btheta \in \Theta}&
		&&\E \!\Big[ \ell_0\big( f_{\btheta}(\bx),y \big) \Big]
	\\
	\subjectto& &&\E \!\Big[
		\indicator \big[ f_{\btheta}(\bx) \geq 0.5 \big] \,\Big\vert\, \text{Race} = r
	\Big]
	\\
	&&&\qquad \leq \E \!\Big[ \indicator \big[ f_{\btheta}(\bx) \geq 0.5 \big] \Big] + 0.01
		\text{,}
\end{prob}
where~$\ell_0$ is the negative log-likelihood of the logistic distribution and~$r = \{\text{African-American}, \text{Caucasian}, \text{Hispanic}, \text{Other}\}$. In other words, the final classifier is required to predict recidivism within each group at most~$1\%$ above the rate at which it predicts recidivism in the overall population.

\begin{figure}[t]
\centering
{\footnotesize
\begin{tabular}{r @{\hspace{1.5em}}c @{\hspace{0.7em}}c @{\hspace{0.5em}}c @{\hspace{0.3em}}c%
	@{\hspace{0.6em}}c @{\hspace{0.3em}}c}
	& & & \multicolumn{4}{c}{\hspace{-0.7em}\emph{Prediction}} \\[0.2em]
	& & & 0 & 1 & 0 & 1\\
	\multirow{4}{*}{African-American}
	& \multirow{10}{*}{\rotatebox{90}{\hspace{-3em}\emph{True label}}}
	& 0 &
		\fcolorbox{black!20}{white}{\color{black!20}\parbox[c][7mm][c]{7mm}{\centering 31\%}} &
		\fcolorbox{black}{Set1.orange!20}{\parbox[c][7mm][c]{7mm}{\centering 16\%}} &
		\fcolorbox{black!20}{white}{\color{black!20}\parbox[c][7mm][c]{7mm}{\centering 36\%}} &
		\fcolorbox{black}{Set1.red!20}{\parbox[c][7mm][c]{7mm}{\centering 11\%}}
		\\[1.4em]
	& & 1 &
		\fcolorbox{black}{Set1.orange!20}{\parbox[c][7mm][c]{7mm}{\centering 16\%}} &
		\fcolorbox{black!20}{white}{\color{black!20}\parbox[c][7mm][c]{7mm}{\centering 37\%}} &
		\fcolorbox{black}{Set1.red!20}{\parbox[c][7mm][c]{7mm}{\centering 23\%}} &
		\fcolorbox{black!20}{white}{\color{black!20}\parbox[c][7mm][c]{7mm}{\centering 30\%}}
	\\[2em]
	\multirow{4}{*}{Caucasian}
	& & 0 &
		\fcolorbox{black!20}{white}{\color{black!20}\parbox[c][7mm][c]{7mm}{\centering 52\%}} &
		\fcolorbox{black}{Set1.red!20}{\parbox[c][7mm][c]{7mm}{\centering 9\%}} &
		\fcolorbox{black!20}{white}{\color{black!20}\parbox[c][7mm][c]{7mm}{\centering 44\%}} &
		\fcolorbox{black}{Set1.orange!20}{\parbox[c][7mm][c]{7mm}{\centering 17\%}}
		\\[1.4em]
	& & 1 &
		\fcolorbox{black}{Set1.red!20}{\parbox[c][7mm][c]{7mm}{\centering 23\%}} &
		\fcolorbox{black!20}{white}{\color{black!20}\parbox[c][7mm][c]{7mm}{\centering 16\%}} &
		\fcolorbox{black}{Set1.orange!20}{\parbox[c][7mm][c]{7mm}{\centering 16\%}} &
		\fcolorbox{black!20}{white}{\color{black!20}\parbox[c][7mm][c]{7mm}{\centering 23\%}}
	\\[2em]
	& & & \multicolumn{2}{c}{\hspace{-0.5em}Unconstrained}
	& \multicolumn{2}{c}{\hspace{-0.5em}Constrained}
\end{tabular}}

	\caption{Confusion matrix for unconstrained and fairness-constrained classifiers.}
		\label{F:compas_confusion}
\end{figure}

We solve~\eqref{P:compas} using a logistic classifier for~$f_{\btheta}$ trained with Algorithm~\ref{L:primal_dual} over~$T = 1000$ epochs. For step~3, we used ADAM with the same hyperparameters as above and a sigmoidal approximation for the indicator function. Explicitly, we replaced~$\indicator \big[ f_{\btheta}(\bx)\geq 0.5 \big]$ by~$\sigma\big[8(f_{\btheta}(\bx) - 0.5)\big]$, where~$\sigma$ denotes the sigmoid function. After each epoch, we updated the dual variables~(step~5) also using ADAM with step size~$0.001$. The results are shown in the last row of Figure~\ref{F:compas_prevalence}.

Notice that compared to the unconstrained model, the predicted recidivism rate over the test set remains almost the same~($38.9\%$), but the rates within each group are now more homogeneous. For instance, the rate for African-Americans is now only~$1.5\%$ above the cross-race average. In contrast, the model now predicts recidivism for Caucasians at a higher rate, from~$24.07\%$ in the unconstrained model to~$39.8\%$, closer to the actual rate in the data set~($39.1\%$). In fact, the main difference between the constrained and unconstrained models is their distribution of false negatives~(Figure~\ref{F:compas_confusion}). Indeed, while the unconstrained model \emph{implicitly} inflates the false negative rate for Caucasians, the constrained model \emph{explicitly} does so for African-Americans instead. Doing so balances the predicted recidivism rates while maintaining essentially the same overall accuracy~(Figure~\ref{F:compas_prevalence}).

Using a logistic classifier allows us to interpret its coefficients as odds ratio and analyze the difference in predictive behavior between the constrained and unconstrained models. The coefficients with largest changes are displayed in Figure~\ref{F:compas_coeffs}. Note that while the original model estimates that being African-American increases your chances of recidivism by almost~$30\%$, the constrained model compensates for the dataset biases by instead decreasing the probability by~$40\%$. The opposite effect occurs in the Caucasian group~(leading to the difference in false positives displayed in Figure~\ref{F:compas_confusion}). The model also compensates for the individual having a large number of priors, a group composed mostly of African-Americans in the sample~($69\%$).

\begin{figure}[t]
	\centering
	\includesvg{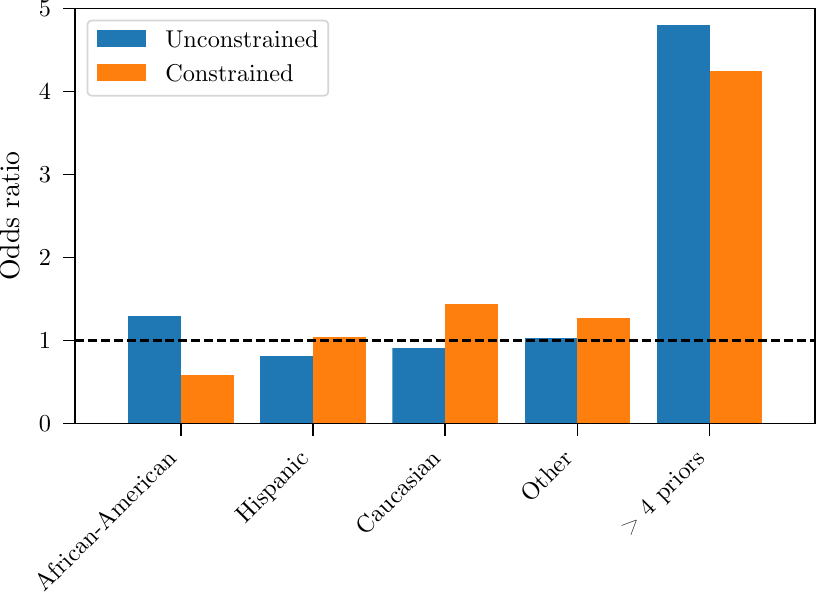}
	\caption{Dual variable relative to the fairness constraint.}
	\label{F:compas_coeffs}
\end{figure}

\section{Conclusion}

This work developed a constrained counterpart of classical learning theory and showed that statistical requirements can be explicitly imposed on learning problems by means of empirical dual learning. In doing so, it reduced the original constrained, statistical problem~(the one we want to solve) into an unconstrained, deterministic problem~(one we can solve). In contrast to penalty-based approaches, it proves that this method yields near-optimal, near-feasible solutions by bounding the duality, parametrization, and empirical errors under mild conditions. Based on this result, it then proposed a primal-dual algorithm to tackle the empirical dual problem. Robust and fair learning applications are used to showcase the usefulness of these developments. We believe this work provides a principled, practical framework for tackling learning under requirements, crucial for critical applications in the social, industrial, and medical fields. Some of its theoretical results may also be useful in the analysis of other forms of learning~(see~\cite{Paternain19c} for examples in reinforcement learning).

\bibliographystyle{IEEEtran}
\bibliography{IEEEabrv,af,bayes,control,gsp,math,ml,rkhs,rl,sp,stat}

\begin{IEEEbiography}{Luiz F.\ O.\ Chamon} received the B.Sc. and M.Sc. degrees in electrical engineering from the University of São Paulo, São Paulo, Brazil, in 2011 and 2015 and the Ph.D. degree in electrical and systems engineering from the University of Pennsylvania~(Penn), Philadelphia, in 2020. He is currently a postdoc at the Simons Institute of the University of California, Berkeley. In 2009, he was an undergraduate exchange student of the Masters in Acoustics of the École Centrale de Lyon, Lyon, France, and worked as an Assistant Instructor and Consultant on nondestructive testing at INSACAST Formation Continue. From 2010 to 2014, he worked as a Signal Processing and Statistics Consultant on a research project with EMBRAER. In 2018, he was recognized by the IEEE Signal Processing Society for his distinguished work for the editorial board of the IEEE Transactions on Signal Processing. He also received both the best student paper and the best paper awards at IEEE ICASSP 2020. His research interests include optimization, signal processing, machine learning, statistics, and control.
\end{IEEEbiography}

\begin{IEEEbiography}{Santiago Paternain} received the B.Sc. degree in electrical engineering from Universidad de la República Oriental del Uruguay, Montevideo, Uruguay in 2012, the M.Sc. in Statistics from the Wharton School in 2018 and the Ph.D. in Electrical and Systems Engineering from the Department of Electrical and Systems Engineering, the University of Pennsylvania in 2018. He is currently an Assistant Professor in the Department of Electrical Computer and Systems Engineering at the Rensselaer Polytechnic Institute. Prior to joining Rensselaer, Dr. Paternain was a postdoctoral Researcher at the University of Pennsylvania. His research interests lie at the intersection of machine learning and control of dynamical systems. Dr. Paternain was the recipient of the 2017 CDC Best Student Paper Award and the 2019 Joseph and Rosaline Wolfe Best Doctoral Dissertation Award from the Electrical and Systems Engineering Department at the University of Pennsylvania.
\end{IEEEbiography}

\begin{IEEEbiography}{Miguel Calvo-Fullana} received his B.Sc. degree in electrical engineering from the Universitat de les Illes Balears (UIB), in 2012 and the M.Sc. and Ph.D. degrees in electrical engineering from the Universitat Polit\`ecnica de Catalunya (UPC), in 2013 and 2017, respectively. From September 2012 to July 2013 he was a research assistant with Nokia Siemens Networks (NSN) and Aalborg University (AAU). From December 2013 to July 2017, he was with the Centre Tecnol\`ogic de Telecomunicacions de Catalunya (CTTC) as a research assistant. From September 2017 to September 2020, he was a postdoctoral researcher at the University of Pennsylvania. Since September 2020, he is a postdoctoral researcher at the Massachusetts Institute of Technology. His research interests lie in the broad areas of learning and optimization for autonomous systems. In particular, he is interested in multi-robot systems with an emphasis on wireless communication and network connectivity.
 \end{IEEEbiography}

\begin{IEEEbiography}{Alejandro Ribeiro} received the B.Sc. degree in electrical engineering from the Universidad de la República Oriental del Uruguay in 1998 and the M.Sc. and Ph.D. degrees in electrical engineering from the Department of Electrical and Computer Engineering at the University of Minnesota in 2005 and 2007. He joined the University of Pennsylvania (Penn) in 2008 where he is currently Professor of Electrical and Systems Engineering. His research is in wireless autonomous networks, machine learning on network data and distributed collaborative learning. Papers coauthored by Dr. Ribeiro received the 2021 Cambridge Ring Publication of the Year Award, the 2020 IEEE Signal Processing Society Young Author Best Paper Award, the 2014 O. Hugo Schuck best paper award, and paper awards at EUSIPCO 2021, ICASSP 2020, EUSIPCO 2019, CDC 2017, SSP Workshop 2016, SAM Workshop 2016, Asilomar SSC Conference 2015, ACC 2013, ICASSP 2006, and ICASSP 2005. His teaching has been recognized with the 2017 Lindback award for distinguished teaching and the 2012 S. Reid Warren, Jr. Award presented by Penn’s undergraduate student body for outstanding teaching. Dr. Ribeiro received an Outstanding Researcher Award from Intel University Research Programs in 2019.  He is a Penn Fellow class of 2015 and a Fulbright scholar class of 2003.
\end{IEEEbiography}

%%%%% START OF APPENDIX %%%%%
\appendices

%%%%%%%%%%%%%%%%%%%%%%%%%%%%%%%%%%%%%%%%%%%%%%%%%%%%%%%
%% DUALITY GAP FOR CLASSIFICATION                    %%
%%%%%%%%%%%%%%%%%%%%%%%%%%%%%%%%%%%%%%%%%%%%%%%%%%%%%%%
% Proof of Proposition~\ref{T:zdg}
% !TEX root=tit_csl.tex

\section{Proof of Proposition~\ref{T:zdg}: The Duality Gap}
\label{X:zdg}

When the function~$\ell_i$, $i = 0,\dots,m$, are convex~[(a)], \eqref{P:csl_variational} is a convex optimization problem. Under Assumption~\ref{A:slater}, known in this context as Slater's condition, its strong duality is a classical result from convex optimization theory~\cite[Prop.~5.3.1]{Bertsekas09c}. The following therefore focuses on the proof for the case in which these functions are not convex~[(b)].

Start by recalling that the dual problem~\eqref{P:dual_csl_variational} is a relaxation of its primal~\eqref{P:csl_variational} and therefore provides a lower bound on its optimal value. Explicitly, $\tilde{D}^\star \leq \tilde{P}^\star$~\cite[Chap.~5]{Bertsekas09c}. Hence, it suffices to prove that~$\tilde{D}^\star \geq \tilde{P}^\star$. We do so by showing that even though~\eqref{P:csl_variational} is a non-convex program, the range of its cost and constraints forms a convex set under the hypotheses of the proposition. Explicitly, define the cost-constraints epigraph as
\begin{multline}\label{E:cost_constraint}
	\calC = \Big\{ (s_0,\bs) \in \setR^{m+1} \,\Big\vert\,  \exists\, \phi \in \calHb
	\text{ such that }
	\\
	\E \left[\ell_0(\phi(\bx),y)\right] \leq s_0
		\text{ and }
		\E \left[\ell_i(\phi(\bx),y)\right] \leq s_i
	\Big\}
		\text{,}
\end{multline}
where the vector~$\bs \in \setR^m$ collects the~$s_i$, $i = 1,\dots,m$. For conciseness, we omit the distributions over which the expectations are taken whenever they can be inferred from the context. Then, the following holds:

\begin{lemma}\label{T:convexC}
If the conditional random variables~$\bx \vert y$ induced by the distributions~$\fkD_i$ are non-atomic and $\calY$ is finite, then the cost-constraints set~$\calC$ in~\eqref{E:cost_constraint} is a non-empty convex set.
\end{lemma}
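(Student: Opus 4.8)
The plan is to prove the two assertions separately, with non-emptiness being immediate and convexity being the crux. Throughout I work in case (b), so I may use that $\cconv(\calH) = \calHb$ is decomposable and that the conditionals $\bx \vert y$ are non-atomic with $\calY$ finite. For non-emptiness, I would note that $\calHb$ is non-empty whenever $\calH$ is, and that since each loss is $[0,B]$-valued, the expectation $\E_{\fkD_i}[\ell_i(\phi(\bx),y)]$ is finite for every $\phi \in \calHb$. Fixing any such $\phi$ and setting $s_i = \E_{\fkD_i}[\ell_i(\phi(\bx),y)]$ exhibits a point of $\calC$.

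For convexity, I would take two points $(s_0,\bs),(s_0',\bs') \in \calC$ with witnesses $\phi,\phi' \in \calHb$ satisfying $\E_{\fkD_i}[\ell_i(\phi(\bx),y)] \le s_i$ and $\E_{\fkD_i}[\ell_i(\phi'(\bx),y)] \le s_i'$, and fix $\lambda \in [0,1]$. The obstruction is that the losses are non-convex, so the naive combination $\lambda\phi + (1-\lambda)\phi'$ --- which does lie in the convex set $\calHb$ --- need not produce loss values below the convex combination of the $s_i$. Instead, I would construct a single $\bar\phi$ equal to $\phi$ on a measurable set $\calZ \subseteq \calX$ and to $\phi'$ on its complement, with $\calZ$ chosen so that \emph{every} constraint is split in the proportion $\lambda : (1-\lambda)$ at once.

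To locate $\calZ$, set $h_i(\bx) = \E_{\fkD_i}[\ell_i(\phi(\bx),y) - \ell_i(\phi'(\bx),y) \mid \bx]$, which is bounded and measurable, and let $\fkD_i^{\calX}$ be the marginal of $\fkD_i$ on $\calX$. Because $\bar\phi$ depends only on $\bx$, a direct computation shows $\E_{\fkD_i}[\ell_i(\bar\phi(\bx),y)] = \E_{\fkD_i}[\ell_i(\phi'(\bx),y)] + \int_{\calZ} h_i \, d\fkD_i^{\calX}$, so it suffices to find one $\calZ$ with $\int_{\calZ} h_i \, d\fkD_i^{\calX} = \lambda \int_{\calX} h_i \, d\fkD_i^{\calX}$ simultaneously for all $i = 0,\dots,m$. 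The key observation is that finite $\calY$ together with non-atomicity of each conditional $\bx \vert y$ forces each marginal $\fkD_i^{\calX}$ to be non-atomic: given $A$ with $\fkD_i^{\calX}(A)>0$ some $y^\star$ has $\fkD_i(\bx \in A \mid y^\star)>0$, and splitting $A$ within that conditional produces a subset strictly smaller in $\fkD_i^{\calX}$-measure. I would then form the vector measure $\calZ \mapsto \nu(\calZ) = \big( \int_{\calZ} h_i \, d\fkD_i^{\calX} \big)_{i=0}^m \in \setR^{m+1}$, which is absolutely continuous with respect to the non-atomic dominating measure $\fkm = \tfrac{1}{m+1}\sum_{i} \fkD_i^{\calX}$ and is therefore itself non-atomic. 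By Lyapunov's convexity theorem the range of $\nu$ is convex (and compact); as it contains $\nu(\emptyset) = \bzero$ and $\nu(\calX)$, it contains $\lambda\,\nu(\calX)$, which is exactly the required $\calZ$.

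Finally, decomposability of $\calHb$ guarantees $\bar\phi \in \calHb$, and the computation above gives $\E_{\fkD_i}[\ell_i(\bar\phi(\bx),y)] = \lambda\,\E_{\fkD_i}[\ell_i(\phi(\bx),y)] + (1-\lambda)\,\E_{\fkD_i}[\ell_i(\phi'(\bx),y)] \le \lambda s_i + (1-\lambda)s_i'$ for each $i$, so the point $\lambda(s_0,\bs) + (1-\lambda)(s_0',\bs')$ admits $\bar\phi$ as a witness and lies in $\calC$. I expect the main obstacle to be the simultaneous splitting of all $m+1$ constraints by a single set: this is precisely what Lyapunov's theorem provides, but care is needed to reduce to $\calX$-marginals (since $\phi$ ignores $y$) and to verify that non-atomicity transfers from the conditionals to these marginals.
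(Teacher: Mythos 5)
Your proof is correct, and it rests on the same two pillars as the paper's: Lyapunov's convexity theorem to produce a single set splitting all constraints in the ratio $\lambda:(1-\lambda)$, and decomposability of $\calHb$ to glue $\phi$ and $\phi'$ along that set into the witness. Where you genuinely diverge is in the construction of the vector measure. The paper conditions on the label: it builds a $2\abs{\calY}(m+1)$-dimensional measure whose entries are $\calZ \mapsto \int_\calZ \ell_i(\phi(\bx),y) f_i(\bx \mid y)\, d\bx$ (and likewise for $\phi'$) for every $i$ and every $y \in \calY$, splits each of these integrals separately, and recovers the unconditional bound via the tower property; non-atomicity of the measure is immediate from non-atomicity of $\bx \mid y$. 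You instead condition on $\bx$: you reduce everything to the difference $h_i(\bx) = \E\left[\ell_i(\phi(\bx),y) - \ell_i(\phi'(\bx),y) \mid \bx\right]$ integrated against the $\calX$-marginals, obtaining an $(m+1)$-dimensional measure, and you transfer non-atomicity from the conditionals to the marginals $\fkD_i^{\calX}$ --- your transfer argument is sound, since both pieces of the split retain positive marginal mass through the label $y^\star$ of positive probability, and your absolute-continuity route to non-atomicity of $\nu$ (domination by the non-atomic $\fkm$) is a standard and valid substitute for the paper's direct argument. Your version buys economy: the dimension of the measure is independent of $\abs{\calY}$, only the difference is split rather than both loss profiles, finiteness of $\calY$ enters solely through marginal non-atomicity (on a Borel subset of $\setR^d$ it could in fact be dropped there, since a mixture of measures with no point masses has no point masses), and your non-emptiness argument matches the lemma's stated hypotheses, whereas the paper borrows the strictly feasible point from the enclosing proposition. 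What the paper's heavier construction buys in exchange is twofold: it never needs the disintegration $y \mid \bx$ (only $\bx \mid y$, which is the object the hypothesis constrains), and it yields the stronger per-label identity $\E\left[\ell_i(\phi_\lambda(\bx),y) \mid y\right] = \lambda \E\left[\ell_i(\phi(\bx),y) \mid y\right] + (1-\lambda)\E\left[\ell_i(\phi'(\bx),y) \mid y\right]$, which is precisely the structure the paper reuses when it discretizes $\calY$ to extend the result to regression in Appendix~B.
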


Before proving Lemma~\ref{T:convexC}, let us show how it implies strong duality for~\eqref{P:csl_variational} by leveraging the following result from convex geometry:

\begin{proposition}[{Supporting hyperplane theorem~\cite[Prop.~1.5.1]{Bertsekas09c}}]\label{T:hyperplane}
Let~$\calA \subset \setR^n$ be a nonempty convex set. If~$\bxt \in \setR^n$ is not in the interior of~$\calA$, then there exists a hyperplane passing through~$\bxt$ such that~$\calA$ is in one of its closed halfspaces, i.e., there exists~$\bp \neq \bzero$ such that~$\bp^\top \bxt \leq \bp^\top \bx$ for all~$\bx \in \calA$.

\end{proposition}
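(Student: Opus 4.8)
The plan is to obtain the supporting hyperplane at the boundary point~$\bxt$ by combining the \emph{strict} separation of a point lying strictly outside a closed convex set with a compactness/limiting argument. First I would record the auxiliary separation fact via projection: if~$C \subseteq \setR^n$ is closed and convex and~$\bz \notin C$, then the nearest point~$\bc^\star = \argmin_{\bc \in C} \norm{\bc - \bz}$ exists and is unique, since~$\bc \mapsto \norm{\bc - \bz}^2$ is coercive and strictly convex on the closed convex set~$C$. Its optimality yields the variational inequality~$(\bc - \bc^\star)^\top (\bz - \bc^\star) \leq 0$ for every~$\bc \in C$, so that the unit vector~$\bq = (\bc^\star - \bz)/\norm{\bc^\star - \bz}$ satisfies~$\bq^\top \bc \geq \bq^\top \bc^\star = \bq^\top \bz + \norm{\bc^\star - \bz} > \bq^\top \bz$ for all~$\bc \in C$. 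This is a normalized vector that strictly separates~$\bz$ from~$C$.

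Next I would pass from strict separation to support at~$\bxt$. Since~$\bxt$ is not in the interior of~$\calA$, I would pick a sequence~$\bxt_k \to \bxt$ with~$\bxt_k \notin \overline{\calA}$; such exterior points exist because~$\bxt$ fails to be interior, so every neighborhood of~$\bxt$ meets the complement of the closed convex set~$\overline{\calA}$. Applying the auxiliary fact with~$C = \overline{\calA}$ and~$\bz = \bxt_k$ produces unit vectors~$\bq_k$ obeying~$\bq_k^\top \bxt_k \leq \bq_k^\top \bx$ for all~$\bx \in \calA \subseteq \overline{\calA}$. Because the~$\bq_k$ live on the compact unit sphere, a subsequence converges to some~$\bp$ with~$\norm{\bp} = 1$, hence~$\bp \neq \bzero$; passing to the limit in the inequality, using~$\bxt_k \to \bxt$ and continuity of the inner product, gives~$\bp^\top \bxt \leq \bp^\top \bx$ for every~$\bx \in \calA$, which is precisely the claimed supporting hyperplane through~$\bxt$.

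The hard part is the boundary case rather than the generic exterior one: when~$\bxt \in \overline{\calA}$ the separation gap collapses, and it is only the normalization~$\norm{\bq_k} = 1$ together with compactness of the sphere that lets me extract a \emph{nontrivial} limiting normal~$\bp$ while the strict inequalities degrade into the equality-permitting inequality of a supporting hyperplane. I would also justify the existence of the exterior sequence~$\bxt_k$ in two regimes: if~$\calA$ has empty interior then~$\overline{\calA}$ has empty interior and its complement is dense, so exterior points accumulate at any~$\bxt$; otherwise the interiors of~$\calA$ and~$\overline{\calA}$ coincide, making~$\bxt$ a genuine boundary point of~$\overline{\calA}$, which again yields nearby exterior points. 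The projection step—existence, uniqueness, and the obtuse-angle variational inequality—is standard but is the technical core on which the whole argument rests.
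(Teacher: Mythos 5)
Your proof is correct: the projection-based strict separation of exterior points, the two-regime justification for the existence of a sequence $\bxt_k \to \bxt$ with $\bxt_k \notin \overline{\calA}$, and the compactness argument on the unit sphere to extract a nonzero limiting normal $\bp$ are all sound. The paper does not prove this proposition at all---it cites it as a classical result from \cite[Prop.~1.5.1]{Bertsekas09c}---and your argument is essentially the standard proof given in that reference, so there is nothing further to reconcile.
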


To proceed, observe from~\eqref{P:csl_variational} that~$(\tilde{P}^\star,\bc)$, where~$\bc \in \setR^m$ collects the values of the constraint requirements~$c_i$, cannot be in the interior of~$\calC$, otherwise there would exist~$\epsilon > 0$ such that~$(\tilde{P}^\star-\epsilon,\bc) \in \calC$, violating the optimality of~$\tilde{P}^\star$. Proposition~\ref{T:hyperplane} then implies that there exists a non-zero vector~$(\mu_0,\bmu) \in \setR^{m+1}$ such that
\begin{equation}\label{E:hyperplane_bound}
	\mu_0 s_0 + \bmu^\top \bs \geq \mu_0 \tilde{P}^\star + \bmu^\top \bc
		\text{,} \quad \text{for all } (s_0,\bs) \in \calC
		\text{.}
\end{equation}
Observe that the hyperplanes in~\eqref{E:hyperplane_bound} are defined using the same notation as the dual problem~\eqref{P:dual_csl_variational} to foreshadow the fact that they actually span the values of the Lagrangian~\eqref{E:lagrangian_csl_variational}.

To proceed, note from~\eqref{E:cost_constraint} that~$\calC$ is unbounded above, i.e., if~$(s_0,\bs) \in \calC$ then~$(s_0^\prime,\bs^\prime) \in \calC$ for all~$(s_0^\prime,\bs^\prime) \succeq (s_0,\bs)$. Hence, \eqref{E:hyperplane_bound} can only hold if~$\mu_i \geq 0$, $i = 0,\dots,m$. Otherwise, there exists a vector in~$\calC$ such that the left-hand side of~\eqref{E:hyperplane_bound} evaluates to an arbitrarily negative number, eventually violating Proposition~\ref{T:hyperplane}. Let us now show that furthermore~$\mu_0 \neq 0$.

Indeed, suppose~$\mu_0 = 0$. Then~\eqref{E:hyperplane_bound} reduces to
\begin{equation}\label{E:mu0_inequality}
	\bmu^\top \bs \geq \bmu^\top \bc
	\Leftrightarrow
	\bmu^\top (\bs - \bc) \geq 0
		\text{,} \quad \text{for all } (s_0,\bs) \in \calC
		\text{.}
\end{equation}
Recall from Proposition~\ref{T:hyperplane} that there exists at least one~$\bmu \neq \zeros$ for which this inequality must hold. However, this is contradicted by the existence of the strictly feasible point~$\phi^\prime$. Explicitly, for every~$\bmu \neq \zeros$, there exists~$(s_0^\prime,\bs^\prime) \in \calC$, achieved by~$\phi^\prime$ from the hypotheses of the proposition, such that~$s_i^\prime < c_i$ for all~$i$, contradicting~\eqref{E:mu0_inequality}.

However, if~$\mu_0 \neq 0$, \eqref{E:hyperplane_bound} can be written as
\begin{equation*}
	s_0 + \bm{{\bar{\mu}}}^\top \bs \geq \tilde{P}^\star + \bm{{\bar{\mu}}}^\top \bc
		\text{,} \quad \text{for all } (s_0,\bs) \in \calC
		\text{,}
\end{equation*}
where~$\bm{{\bar{\mu}}} = \bmu/\mu_0$, which from the definition of~$\calC$ in~\eqref{E:cost_constraint} implies that
\begin{multline}\label{E:case_ii}
	\E \left[\ell_0(\phi(\bx),y)\right] + \sum_{i = 1}^m \bar{\mu}_i
		\left( \E \left[ \ell_i(\phi(\bx),y) \right] - c_i \right) \geq \tilde{P}^\star
		\text{,}
\end{multline}
for all~$\phi \in \calHb$. Note, however, that the left-hand side of~\eqref{E:case_ii} is the Lagrangian~\eqref{E:lagrangian_csl_variational}, i.e., \eqref{E:case_ii} implies that~$L(\phi, \bm{{\bar{\mu}}}) \geq \tilde{P}^\star$. In particular, this hold for the minimum of~$L(\phi, \bm{{\bar{\mu}}})$, implying that~$\tilde{D}^\star \geq \tilde{P}^\star$ and therefore, that strong duality holds for~\eqref{P:csl}.
\hfill$\blacksquare$

\vspace{\baselineskip}

All that remains now is proving that the cost-constraint set~$\calC$ in~\eqref{E:cost_constraint} is convex.

\begin{proof}[Proof of Lemma~\ref{T:convexC}]

This proof follows along the lines of~\cite{Chamon20f}. Let~$(s_0,\bs),(s_0^\prime,\bs^\prime) \in \calC$ be arbitrary points satisfied by~$\phi,\phi^\prime \in \calHb$, i.e., for~$i = 0,\dots,m$,
\begin{equation}\label{E:phi_achieve}
	\E \left[\ell_i(\phi(\bx),y)\right] \leq s_i
		\quad \text{and} \quad
	\E \left[\ell_i(\phi^\prime(\bx),y)\right] \leq s^\prime_i
		\text{.}
\end{equation}
It suffices then to show that~$\lambda (s_0,\bs) + (1-\lambda) (s_0^\prime,\bs^\prime) \in \calC$ for all~$\lambda \in [0,1]$ to obtain that~$\calC$ is convex. Equivalently, we must obtain~$\phi_\lambda \in \calHb$ such that
\begin{equation}\label{E:convexC}
	\E \left[\ell_i(\phi_\lambda(\bx),y)\right] \leq \lambda s_i + (1-\lambda) s_i^\prime
		\text{,} \quad i = 0,\dots,m
		\text{,}
\end{equation}
for all~$0 \leq \lambda \leq 1$. To do so, we rely on the following classical theorem about the range of non-atomic vector measures:

\begin{theorem}[{Lyapunov's convexity theorem~\cite[Chap.~IX, Cor.~5]{Diestel77v}}]
\label{T:lyapunov}

Let~$\fkq: \calB \to \setR^n$ be a finite dimensional vector measure over the measurable space~$(\Omega,\calB)$. If~$\fkq$ is non-atomic, then its range is convex, i.e., the set~$\{ \fkq(\calZ) : \calZ \in \calB \}$ is a convex set.

\end{theorem}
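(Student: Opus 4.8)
The plan is to prove Theorem~\ref{T:lyapunov} via the functional-analytic argument of Lindenstrauss, which is the route taken in the cited reference~\cite{Diestel77v}. The idea is to realize the range of~$\fkq$ as the image of the order interval of~$L^\infty$ under a weak-$*$ continuous finite-rank linear map, and then to identify the preimages that matter with indicator functions by means of an extreme-point argument in which non-atomicity plays the decisive role. The first step is to pass to a common control measure. Writing~$\fkq = (q_1,\dots,q_n)$ with each~$q_i$ a finite non-atomic signed measure, set~$\lambda = \sum_{i=1}^n \abs{q_i}$, the sum of the total variations. This is a finite non-negative measure that dominates every~$q_i$, and it inherits non-atomicity from~$\fkq$. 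By the Radon--Nikodym theorem each~$q_i$ admits a density~$g_i = dq_i/d\lambda \in L^1(\lambda)$, so that~$\fkq(\calZ) = \big( \int_{\calZ} g_1\,d\lambda, \dots, \int_{\calZ} g_n\,d\lambda \big)$ for every~$\calZ \in \calB$.

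Next I would introduce the convex set~$W = \{ f \in L^\infty(\lambda) : 0 \leq f \leq 1 \}$ together with the linear map~$T: L^\infty(\lambda) \to \setR^n$ given by~$T(f) = \big( \int f g_1\,d\lambda, \dots, \int f g_n\,d\lambda \big)$. Since~$W$ is a weak-$*$ closed subset of the unit ball of~$L^\infty(\lambda)$, which is the dual of~$L^1(\lambda)$ (as~$\lambda$ is finite, hence~$\sigma$-finite), it is weak-$*$ compact by Banach--Alaoglu; and~$T$ is weak-$*$ continuous because each~$g_i \in L^1(\lambda)$. Consequently~$T(W)$ is a convex compact subset of~$\setR^n$. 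Because~$T(\mathds{1}_{\calZ}) = \fkq(\calZ)$, the range of~$\fkq$ is exactly~$T(\calE)$ for~$\calE = \{ \mathds{1}_{\calZ} : \calZ \in \calB \} \subseteq W$, and trivially~$T(\calE) \subseteq T(W)$. The whole theorem therefore reduces to the reverse inclusion~$T(W) \subseteq T(\calE)$, i.e., to showing that every value~$T$ takes on~$W$ is already attained at an indicator.

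To establish this I would fix~$\by \in T(W)$ and consider the fiber~$F = W \cap T^{-1}(\by)$, which is nonempty, convex, and weak-$*$ compact, and hence has an extreme point~$f^\star$ by the Krein--Milman theorem. The crux, and the step I expect to be the main obstacle, is to show that~$f^\star$ is~$\{0,1\}$-valued~$\lambda$-almost everywhere: once this is known,~$f^\star = \mathds{1}_{\calZ^\star}$ and~$\by = T(f^\star) = \fkq(\calZ^\star) \in T(\calE)$, finishing the argument. Suppose instead that~$S = \{ \eps \leq f^\star \leq 1-\eps \}$ has~$\lambda(S) > 0$ for some~$\eps > 0$. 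Non-atomicity of~$\lambda$ lets me split~$S$ into~$n+1$ disjoint subsets~$S_1,\dots,S_{n+1}$ of positive measure; the~$n+1$ vectors~$T(\mathds{1}_{S_j}) \in \setR^n$ are then linearly dependent, producing a nontrivial~$h = \sum_j c_j \mathds{1}_{S_j}$ with~$T(h) = \bzero$, supported on~$S$, and after rescaling satisfying~$\abs{h} \leq \eps$. Then~$f^\star \pm h \in F$ with~$f^\star + h \neq f^\star - h$ and~$f^\star = \tfrac12\big( (f^\star + h) + (f^\star - h) \big)$, contradicting extremality. This is precisely where non-atomicity is indispensable: it is what makes~$L^\infty$ restricted to any positive-measure set infinite dimensional, guaranteeing that a nonzero perturbation annihilated by the finite-rank map~$T$ exists. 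With the reverse inclusion established,~$T(W) = T(\calE)$ is the range of~$\fkq$, and it is convex (and in fact compact) as the continuous linear image of a convex compact set.

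An alternative I would keep in reserve is the elementary ``halving'' route: one first proves directly, again using non-atomicity, that for any~$\calZ$ there is a measurable~$\calZ' \subseteq \calZ$ with~$\fkq(\calZ') = \tfrac12\fkq(\calZ)$, then iterates to realize all dyadic-rational convex combinations of range points and closes the argument with a separate compactness/closedness step. I expect this to be more laborious than the extreme-point proof, since establishing closedness of the range requires its own work, whereas the Lindenstrauss argument delivers convexity and compactness simultaneously; I would therefore present the functional-analytic version as the main proof.
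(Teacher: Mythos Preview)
Your proof is correct and is precisely the Lindenstrauss extreme-point argument that appears in the cited source~\cite{Diestel77v}. Note, however, that the paper itself does not prove Theorem~\ref{T:lyapunov}: it is quoted as a classical result and invoked as a black box inside the proof of Lemma~\ref{T:convexC}, so there is no ``paper's own proof'' to compare against beyond the reference you already matched.
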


To see how Theorem~\ref{T:lyapunov} allows us to construct the desired~$\phi_\lambda$, let~$\Omega = \calX$, a subset of~$\setR^d$, and~$\calB$ be its Borel $\sigma$-algebra. Define the~$2\abs{\calY}(m+1) \times 1$ vector measure~$\fkp$ such that for every set~$\calZ \in \calB$ we have
\begin{equation}\label{E:vector_meas}
	\renewcommand*{\arraystretch}{1.5}
	\fkp(\calZ) =
	\begin{bmatrix}
		\int_{\calZ} \ell_i(\phi(\bx),y) f_i(\bx \vert y) d\bx
		\\
		\int_{\calZ} \ell_i(\phi^\prime(\bx),y) f_i(\bx \vert y) d\bx
	\end{bmatrix}_{i = 0,\dots,m;\, y \in \calY}
		\text{,}
\end{equation}
where~$f_i(\bx \vert y)$ denotes the conditional density of~$\bx$ given~$y$ induced by the joint distributions~$\fkD_i$%
\footnote{We assume here that these density exist only to simplify the notation. The integrals in~\eqref{E:vector_meas} can be taken against the conditional~(Radon-Nikodym) measures as long as the law of~$\bx$ is absolutely continuous with respect to the law of~$y$.}.
Hence, each entry of~$\fkp$ is an integral of the losses~$\ell_i$ of~$\phi$ or~$\phi^\prime$ with respect to a value of~$y \in \calY$. Immediately, we note that~$\fkp(\emptyset) = \bzero$ and
\begin{equation}\label{E:vector_meas_omega}
	\renewcommand*{\arraystretch}{1.5}
	\fkp(\Omega) =
	\begin{bmatrix}
		\E \left[ \ell_i(\phi(\bx),y) \mid y \right]
		\\
		\E \left[ \ell_i(\phi^\prime(\bx),y) \mid y \right]
	\end{bmatrix}_{i = 0,\dots,m;\, y \in \calY}
		\text{.}
\end{equation}

Due to the additive property of the Lebesgue integral, $\fkp$ in~\eqref{E:vector_meas} is a proper vector measure. What is more, the~$\ell_i$ are bounded functions, so the fact that~$\bx \vert y$ is non-atomic implies that~$\fkp$ is also non-atomic. Hence, from Theorem~\ref{T:lyapunov}, there exists a set~$\calT_\lambda \in \calB$ such that
\begin{equation}\label{E:calT}
	\fkp(\calT_\lambda)
		= \lambda \fkp(\Omega) + (1-\lambda) \fkp(\emptyset)
		= \lambda \fkp(\Omega)
		\text{,}
\end{equation}
for~$\lambda \in [0,1]$. Since~$\calB$ is a $\sigma$-algebra, it holds that~$\Omega \setminus \calT_\lambda \in \calB$ and by additivity we obtain
\begin{equation}\label{E:calTc}
	\fkp(\Omega \setminus \calT_\lambda) =
		(1 - \lambda) \fkp(\Omega)
		\text{.}
\end{equation}
From~\eqref{E:calT} and~\eqref{E:calTc},  we then construct~$\phi_\lambda$ as
\begin{equation}\label{E:phi_lambda}
	\phi_\lambda(\bx) =
	\begin{cases}
		\phi(\bx) \text{,}
			&\text{for } \bx \in \calT_\lambda
		\\
		\phi^\prime(\bx) \text{,}
			&\text{for } \bx \in \Omega \setminus \calT_\lambda
	\end{cases}
\end{equation}
It is straightforward from the decomposability of~$\calHb$, that~$\phi_\lambda \in \calHb$. We claim that it also satisfies~\eqref{E:convexC}.

To see this is the case, use the construction in~\eqref{E:phi_lambda} to obtain
\begin{align*}
	\E \big[\ell_i(\phi_\lambda(\bx),y) \mid y \big] &=
	\int_{\calT_\lambda} \ell_i(\phi(\bx),y) f_i(\bx \vert y) d\bx
	\\
	{}&+ \int_{\Omega \setminus \calT_\lambda} \ell_i(\phi^\prime(\bx),y) f_i(\bx \vert y) d\bx
		\text{.}
\end{align*}
Note from~\eqref{E:vector_meas}, that these integral can be written as entries of the vector measure~$\fkp$, namely
\begin{equation}\label{E:machinery}
	\E \big[\ell_i(\phi_\lambda(\bx),y) \mid y \big]
		= \big[ \mathfrak{p}(\calT_\lambda) \big]_{\phi;\,i;\,y} +
			\big[ \mathfrak{p}(\Omega \setminus \calT_\lambda) \big]_{\phi^\prime;\,i;\,y}
		\text{.}
\end{equation}
In~\eqref{E:machinery}, we use~$\big[ \fkp \big]_{\phi;\,i;\,y}$ to denote the entry of~$\fkp$ relative to the function~$\phi$, the~$i$-th loss, and the label~$y \in \calY$. From~\eqref{E:calT} and~\eqref{E:calTc}, we know that~\eqref{E:machinery} evaluates to
\begin{align*}
	\E \left[\ell_i(\phi_\lambda(\bx),y) \mid y \right]
		&= \left[ \lambda \mathfrak{p}(\Omega) \right]_{\phi;\,i;\,y} +
		\left[ (1 - \lambda) \mathfrak{p}(\Omega) \right]_{\phi^\prime;\,i;\,y}
	\\
	{}&= \lambda \E \left[\ell_i(\phi(\bx),y) \mid y \right]
	\\
	{}&+ (1 - \lambda) \E \left[\ell_i(\phi^\prime(\bx),y) \mid y \right]
		\text{,}
\end{align*}
for all~$i = 0,\dots,m$ and~$y \in \calY$. Using the tower~(total expectation) property, we immediately conclude that for~$i = 0,\dots,m$,
\begin{align*}
	\E \left[\ell_i(\phi_\lambda(\bx),y) \right] &=
		\E_y \bigg[ \lambda \E \left[\ell_i(\phi(\bx),y) \mid y \right]
	\\
	{}&+ (1 - \lambda) \E \left[\ell_i(\phi^\prime(\bx),y) \mid y \right] \bigg]
	\\
	{}&= \lambda \E \left[\ell_i(\phi(\bx),y) \right]
		+ (1 - \lambda) \E \left[\ell_i(\phi^\prime(\bx),y) \right]
		\text{,}
\end{align*}
which from~\eqref{E:phi_achieve} yields
\begin{equation*}
	\E \left[\ell_i(\phi_\lambda(\bx),y) \right] \leq \lambda s_i + (1 - \lambda) s_i^\prime
		\text{.}
\end{equation*}

Hence, there exists~$\phi_\lambda \in \calHb$ such that~\eqref{E:convexC} holds for all~$\lambda \in [0,1]$ and~$(s_0,\bs),(s_0^\prime,\bs^\prime) \in \calC$. The set~$\calC$ is therefore convex. Moreover, the strictly feasible~$\phi^\prime$ from the hypotheses of the proposition implies that~$\calC$ is not be empty.
\end{proof}

%%%%%%%%%%%%%%%%%%%%%%%%%%%%%%%%%%%%%%%%%%%%%%%%%%%%%%%
%% DUALITY GAP FOR REGRESSION                        %%
%%%%%%%%%%%%%%%%%%%%%%%%%%%%%%%%%%%%%%%%%%%%%%%%%%%%%%%
% Duality Gap for Regression
% !TEX root=tit_csl.tex

\section{Duality Gap for Regression}
\label{X:zdg_regression}

The Lyapunov convexity theorem~(Theorem~\ref{T:lyapunov}) turns out to be quite sensitive to the hypothesis that the vector measure takes values in a finite dimensional Banach space~\cite[Ch.~IX]{Diestel77v}. Yet, for compact~$\calY$, we can overcome this issue without resorting to super-atomless~(saturated) spaces by assuming the losses are uniformly continuous in~$y$ and slicing~$\calY$ to approximate the regression problem by a sequence of increasingly finer classification problems. Explicitly, we use the following assumption:

\begin{assumption}\label{A:regression}
The functions~$y \mapsto \ell_i(\phi(\cdot),y) f_i(\cdot \vert y)$, $i = 1,\dots,m$, are uniformly continuous in the total variation topology for each~$\phi \in \calHb$, where~$f_i(\bx \vert y)$ denotes the density of the conditional random variable induced by~$\fkD_i$. Explicitly, for each~$\phi \in \calHb$ and every~$\epsilon > 0$ there exists~$\delta_{\phi,i} > 0$ such that for all~$\abs{y - \tilde{y}} \leq \delta_{\phi,i}$ it holds that
\begin{equation*}
	\sup_{\calZ \in \calB} \int_\calZ \big\vert \ell_i(\phi(\bx),y) f_i(\bx \mid y)
		- \ell_i(\phi(\bx),\tilde{y}) f_i(\bx \mid \tilde{y}) \big\vert d\bx
			\leq \epsilon
		\text{,}
\end{equation*}
\end{assumption}

Once again, we consider the measurable space~$(\Omega,\calB)$ where~$\Omega = \setR^d$ and~$\calB$ is a Borel $\sigma$-algebra.

\begin{proposition}\label{T:zdg_regression}
Consider the dual problem~\eqref{P:dual_csl_variational} and let~$\calY$ be bounded and the conditional distributions~$\bx \vert y$ induced by the~$\fkD_i$ are non-atomic. Under assumptions~\ref{A:losses}, \ref{A:slater}, and~\ref{A:regression}, \eqref{P:csl} is strongly dual, i.e., $\tilde{P}^\star = \tilde{D}^\star$.
\end{proposition}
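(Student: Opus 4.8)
The plan is to reduce the regression statement to the already-established classification case (Proposition~\ref{T:zdg}) by discretizing the label space and then passing to a limit, exactly as the slicing heuristic suggests. Since weak duality $\tilde{D}^\star \le \tilde{P}^\star$ is automatic, the whole task is the reverse inequality, and I would obtain it by sandwiching $\tilde{P}^\star$ and $\tilde{D}^\star$ between the values of a sequence of finite-label problems that are strongly dual by the classification result.

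Concretely, for each $K$ I would partition the bounded set $\calY$ into $K$ cells $\calY_1,\dots,\calY_K$ of vanishing diameter, pick representatives $y_k \in \calY_k$, and define the surrogate losses $\ell_i^{(K)}(z,y)=\ell_i(z,y_k)$ for $y\in\calY_k$. Because these losses depend on $y$ only through the cell index, the resulting problem is a bona fide classification problem on the $K$-point label set, and the cost--constraints set attached to it is built from a $2K(m+1)$-dimensional vector measure. Lyapunov's theorem (Theorem~\ref{T:lyapunov}) therefore applies verbatim and, through Lemma~\ref{T:convexC}, that set is convex; invoking Proposition~\ref{T:zdg} then gives $\tilde{P}^\star_K=\tilde{D}^\star_K$ for every $K$. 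The one point needing attention here is that the strict feasibility in Assumption~\ref{A:slater} must survive discretization: using Assumption~\ref{A:regression} at the fixed Slater point $\phi^\prime$, its discretized constraint values differ from the true ones by at most a quantity that vanishes with the mesh, so $\phi^\prime$ remains strictly feasible for all large $K$ and the surrogate duals are well posed.

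It then remains to show $\tilde{P}^\star_K\to\tilde{P}^\star$ and $\tilde{D}^\star_K\to\tilde{D}^\star$. For any fixed $\phi\in\calHb$, Assumption~\ref{A:regression} controls $\lvert \E_{\fkD_i}[\ell_i(\phi(\bx),y)]-\E_{\fkD_i}[\ell_i^{(K)}(\phi(\bx),y)]\rvert$ by the total-variation modulus evaluated at the mesh size, which tends to $0$; this bounds the gap between the surrogate Lagrangian $\tilde{L}_K(\phi,\bmu)$ and $\tilde{L}(\phi,\bmu)$ by $(1+\norm{\bmu}_1)$ times that modulus. Comparing each dual function at the minimizer of the other (the standard two-sided $\min$--$\min$ estimate) and using the Slater bound on the multipliers (as in Lemma~\ref{T:norm_mu_bound}), I would deduce $\tilde{d}_K(\bmu)\to\tilde{d}(\bmu)$ and, after the same manipulation on the primal side, $\tilde{P}^\star_K\to\tilde{P}^\star$. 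Sending $K\to\infty$ in $\tilde{P}^\star_K=\tilde{D}^\star_K$ and combining with weak duality yields $\tilde{P}^\star=\tilde{D}^\star$.

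The main obstacle is precisely the uniformity of this last limit: Assumption~\ref{A:regression} furnishes a modulus $\delta_{\phi,i}$ that depends on $\phi$, whereas the minimizers realizing $\tilde{d}_K(\bmu)$ and $\tilde{P}^\star_K$ drift with $K$, so the pointwise-in-$\phi$ estimate does not obviously close the argument. I would handle this either by arguing at near-minimizers and extracting, along a subsequence, a single limiting $\phi$ at which the modulus can be applied, or---more cleanly---by replacing the representative substitution with bin-integration of the \emph{true} loss, $\int_{\calY_k}\ell_i(\phi(\bx),y)f_i(\bx,y)\,dy$, so that the Lyapunov split reproduces the convex combination of the original expectations exactly and the uniform-continuity hypothesis is only needed to guarantee measurability and integrability of these bin integrals rather than to control an approximation error. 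Either route removes the $\phi$-dependence that is the crux of the difficulty.
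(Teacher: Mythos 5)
Your main route---discretize the label space, invoke Proposition~\ref{T:zdg} on the surrogate classification problems, and send $K\to\infty$ in $\tilde{P}^\star_K=\tilde{D}^\star_K$---is not the paper's proof, and the obstacle you flag at the end is exactly where it breaks: establishing $\tilde{P}^\star_K\to\tilde{P}^\star$ and $\tilde{d}_K\to\tilde{d}$ requires the discretization error $\vert\E[\ell_i(\phi(\bx),y)]-\E[\ell_i^{(K)}(\phi(\bx),y)]\vert$ to vanish \emph{uniformly} over $\calHb$, or at least along the surrogate (near-)minimizers, which drift with $K$; Assumption~\ref{A:regression} only supplies a modulus $\delta_{\phi,i}$ for each fixed $\phi$. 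Your first repair does not close this gap: no compactness or metrizability of $\calHb$ is assumed (its closure is taken only in the total-variation sense of~\eqref{E:param_approximation}), so extracting a convergent subsequence of near-minimizers is unjustified, and in any case a modulus valid at a limit point controls nothing at the approximants. The paper sidesteps the difficulty by never discretizing the optimization problem. It instead proves convexity of the cost-constraints set~$\calC$ in~\eqref{E:cost_constraint} directly (Lemma~\ref{T:convexC_b}): given two points of $\calC$ achieved by $\phi,\phi'\in\calHb$, it slices $\calY$ at a mesh $\delta$ chosen via Assumption~\ref{A:regression} \emph{at those two fixed functions}, builds the finite-dimensional midpoint vector measure~\eqref{E:vector_meas_b}, applies Lyapunov's theorem (Theorem~\ref{T:lyapunov}) and decomposability to glue a $\phi_{\lambda,\epsilon}\in\calHb$ satisfying $\E[\ell_i(\phi_{\lambda,\epsilon}(\bx),y)]\le\lambda s_i+(1-\lambda)s_i'+\epsilon$ for every $\epsilon>0$, and removes the $\epsilon$ by a contradiction argument using closedness of $\calHb$; the supporting-hyperplane argument of Appendix~\ref{X:zdg} then applies verbatim. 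Because $\phi$ and $\phi'$ are fixed \emph{before} $\delta$ is chosen, the $\phi$-dependence of the modulus is harmless---this is precisely the structural move your primary route is missing.

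Your second repair, by contrast, is sound, and it effectively reconstructs the paper's argument in a cleaner form: with bin-integrated entries of the form $\int_{\calZ}\big[\int_{\calI_k}\ell_i(\phi(\bx),y)\,f_i(\bx\vert y)\,d\fkD_i(y)\big]d\bx$ (which remain non-atomic, since mixtures of the non-atomic conditionals $\bx\vert y$ are non-atomic), the Lyapunov split reproduces $\lambda\,\E[\ell_i(\phi(\bx),y)]+(1-\lambda)\,\E[\ell_i(\phi'(\bx),y)]$ \emph{exactly}, giving exact convexity of $\calC$ with no approximation error---at which point the entire sequence of surrogate problems, the Slater-survival check, and the limit in $K$ become unnecessary, and strong duality follows in one step from the hyperplane argument, just as in Appendix~\ref{X:zdg}. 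So the honest summary is: your headline discretize-and-limit scheme has a genuine gap (which you correctly diagnose), your first fallback fails for lack of compactness, and your second fallback abandons the scheme and lands on (a variant of) the paper's convexity lemma; only that last route yields a complete proof.
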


The proof of Proposition~\ref{T:zdg_regression} follows that of the finite~$\calY$ case in Appendix~\ref{X:zdg} by replacing Lemma~\ref{T:convexC} by the following result.

\begin{lemma}\label{T:convexC_b}
Under the assumptions of Proposition~\ref{T:zdg_regression}, the cost-constraints set~$\calC$ in~\eqref{E:cost_constraint} is a non-empty convex set.
\end{lemma}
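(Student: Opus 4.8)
The plan is to mirror the proof of Lemma~\ref{T:convexC}, reducing the regression problem to a sequence of classification problems with finite label sets by slicing~$\calY$. Fix two points~$(s_0,\bs),(s_0^\prime,\bs^\prime) \in \calC$ achieved by~$\phi,\phi^\prime \in \calHb$ as in~\eqref{E:phi_achieve}, and fix~$\lambda \in [0,1]$. As in the finite case, it suffices to produce~$\phi_\lambda \in \calHb$ with~$\E_{\fkD_i}[\ell_i(\phi_\lambda(\bx),y)] \leq \lambda s_i + (1-\lambda) s_i^\prime$ for~$i = 0,\dots,m$, since decomposability of~$\calHb$ will let~$\phi_\lambda$ be built by switching between~$\phi$ and~$\phi^\prime$ on a measurable subset of~$\calX$.

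First I would discretize the label space. Since~$\calY$ is bounded, for every~$\epsilon > 0$ Assumption~\ref{A:regression} provides a finite Borel partition~$\calY = \bigcup_{k=1}^{K} \calY_k$ with representatives~$y_k \in \calY_k$ such that, for all~$\bx$ and each~$i$, replacing~$y$ by~$y_k$ on~$\calY_k$ perturbs the integrand~$\ell_i(\phi(\cdot),y) f_i(\cdot \mid y)$ by at most~$\epsilon$ in total variation~(and likewise for~$\phi^\prime$). This defines a proxy \emph{classification} problem on the finite label set~$\{y_1,\dots,y_K\}$, whose cost-constraint set~$\calC_\epsilon$ has exactly the form treated in Lemma~\ref{T:convexC}. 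Applying that lemma---i.e., building the finite-dimensional vector measure analogous to~\eqref{E:vector_meas} (now indexed by the cell representatives~$y_k$), which is non-atomic because each conditional~$\bx \vert y_k$ is, and invoking Lyapunov's theorem (Theorem~\ref{T:lyapunov})---yields a switching set~$\calT_\lambda$ and a function~$\phi_\lambda^\epsilon \in \calHb$ attaining the convex combination of the \emph{proxy} losses.

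It then remains to transfer this back to the true losses and pass to the limit. Summing the total-variation bound over the~$K$ cells, weighting by the probabilities~$\Pr_{\fkD_i}[\calY_k]$, and using that the proxy losses of~$\phi,\phi^\prime$ are each within~$\epsilon$ of their true values controls both the replacement and approximation errors, yielding for each~$i$
\begin{equation*}
	\E_{\fkD_i}\!\big[\ell_i(\phi_\lambda^\epsilon(\bx),y)\big]
		\leq \lambda s_i + (1-\lambda) s_i^\prime + 2\epsilon
		\text{.}
\end{equation*}
Thus the convex combination lies within~$2\epsilon$ of~$\calC$ for every~$\epsilon$. The main obstacle is upgrading this to exact membership, i.e., closing the~$\epsilon \to 0$ gap. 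I expect to resolve it by noting that the separation argument in Appendix~\ref{X:zdg} only uses the supporting hyperplane~\eqref{E:hyperplane_bound} over~$\calC$ together with the strictly feasible point from Assumption~\ref{A:slater}; it therefore suffices to establish convexity of the closure~$\overline{\calC}$, which the above approximation delivers directly. Alternatively, since~$\calHb$ is closed and the loss functionals are continuous, one extracts a limit of~$\phi_\lambda^\epsilon$ as~$\epsilon \to 0$ that attains the combination exactly, giving~$\phi_\lambda \in \calHb$ and hence~$\lambda(s_0,\bs)+(1-\lambda)(s_0^\prime,\bs^\prime) \in \calC$. Non-emptiness of~$\calC$ follows, as in Lemma~\ref{T:convexC}, from the strictly feasible~$\phi^\prime$.
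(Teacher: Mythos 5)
Your proof follows the paper's argument essentially step for step in its core: you slice $\calY$ into finitely many cells using the uniform continuity in Assumption~\ref{A:regression}, build the finite-dimensional vector measure indexed by the cell representatives exactly as in~\eqref{E:vector_meas_b}, invoke Lyapunov's theorem~(Theorem~\ref{T:lyapunov}) to obtain a switching set, and use decomposability of $\calHb$ to construct a $\phi_\lambda^\epsilon$ achieving the convex combination up to $O(\epsilon)$ --- the paper's bookkeeping gives $+\epsilon$ via $\epsilon/3$ splits where you get $+2\epsilon$, an immaterial difference. The only divergence is how the $\epsilon \to 0$ gap is closed. The paper argues by contradiction: if no $\phi_\lambda \in \calHb$ attains the combination exactly, there is a uniform margin $\tau > 0$, violated by taking $\epsilon = \tau$, invoking closedness of $\calHb$. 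Your second route --- ``extract a limit of $\phi_\lambda^\epsilon$'' --- is the same in spirit but, as stated, needs more than closedness: closedness of $\calHb$ in the total variation topology does not by itself supply the sequential compactness needed to extract a convergent subsequence (the paper's own finish implicitly faces the same attainment subtlety, since non-attainment of the infimum need not produce a uniform $\tau$ without compactness). Your first route is the genuinely different, and arguably cleaner, one: the separation argument in Appendix~\ref{X:zdg} only needs a supporting hyperplane, so convexity of $\overline{\calC}$ suffices, and your approximate combination together with the upward-closedness of $\calC$ delivers exactly that (convex combinations of limit points remain in $\overline{\calC}$). Two caveats: this proves a statement weaker than the lemma as literally phrased (convexity of $\overline{\calC}$, not of $\calC$), and it requires one extra check you gloss over --- that $(\tilde{P}^\star,\bc)$ is not in the interior of $\overline{\calC}$. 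That check is easy: an interior point would place genuine points of $\calC$ with objective strictly below $\tilde{P}^\star$ and constraints strictly below $\bc$, i.e., a feasible $\phi \in \calHb$ beating the optimum, a contradiction; the hyperplane inequality over $\overline{\calC}$ then restricts to $\calC$, and the step ruling out $\mu_0 = 0$ uses the strictly feasible point in $\calC$ exactly as before. With that made explicit, your closure-based finish is sound and sidesteps the attainment issue that the paper's contradiction argument leaves implicit.
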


\begin{proof}[Proof of Lemma~\ref{T:convexC_b}]

Without loss of generality, assume~$\calY = [0,1]$. Once again, let~$(s_0,\bs),(s_0^\prime,\bs^\prime) \in \calC$ be achieved by~$\phi,\phi^\prime \in \calHb$. Our goal, as before, is to construct~$\phi_{\lambda}$ such that
\begin{equation}\label{E:convexC_regression}
	\E \left[\ell_i(\phi_\lambda(\bx),y)\right] \leq \lambda s_i + (1-\lambda) s_i^\prime
		\text{,} \quad i = 0,\dots,m
		\text{,}
\end{equation}
holds for all~$\lambda \in [0,1]$.

To do so, fix~$\epsilon > 0$ and let~$\delta > 0$ be such that
\begin{gather*}
	\sup_{\calZ \in \calB} \int_\calZ \big\vert \ell_i(\phi(\bx),y) f_i(\bx \vert y)
		- \ell_i(\phi(\bx),\tilde{y}) f_i(\bx \vert \tilde{y}) \big\vert d\bx \leq \frac{\epsilon}{3}
	\\
	\text{and}
	\\
	\sup_{\calZ \in \calB} \int_\calZ \big\vert \ell_i(\phi^\prime(\bx),y) f_i(\bx \vert y)
		- \ell_i(\phi(\bx),\tilde{y}) f_i(\bx \vert \tilde{y}) \big\vert d\bx \leq \frac{\epsilon}{3}
\end{gather*}
for all~$\abs{y - \tilde{y}} \leq \delta$ and all~$i = 1,\dots,m$. Assumption~\ref{A:regression} guarantees such a~$\delta$ exists since we can take it to be the minimum of the~$2m$ positive~$\delta_{\phi,i}$ and~$\delta_{\phi^\prime,i}$. Then, partition~$\calY$ into the intervals
\begin{equation}\label{E:ytilde_intervals}
	\calI_k = \big[ (k-1)\delta, k\delta \big]
		\text{.}
\end{equation}
with midpoint~$\tilde{y}_k \triangleq ( k - 1/2 )\delta$ and let~$\tilde{\calY} = \{ \tilde{y}_k \}$. Since~$\calY$ is bounded, $\abs{\tilde{\calY}} < \infty$.

To proceed, construct the~$2\abs{\tilde{\calY}}(m+1) \times 1$ vector measure
\begin{equation}\label{E:vector_meas_b}
	\renewcommand*{\arraystretch}{1.5}
	\fkp_\epsilon(\calZ) =
	\begin{bmatrix}
		\int_{\calZ} \ell_i(\phi(\bx),\tilde{y}) f_i(\bx \vert \tilde{y}) d\bx
		\\
		\int_{\calZ} \ell_i(\phi^\prime(\bx),\tilde{y}) f_i(\bx \vert \tilde{y}) d\bx
	\end{bmatrix}_{i = 0,\dots,m;\, \tilde{y} \in \tilde{\calY}}
\end{equation}
and, using the non-atomicity of~$\fkp_\epsilon$, obtain from Theorem~\ref{T:lyapunov} a set~$\calT_{\lambda,\epsilon} \in \calB$ such that
\begin{equation}\label{E:calT_b}
	\fkp(\calT_{\lambda,\epsilon}) = \lambda \fkp_\epsilon(\Omega)
	\quad \text{and} \quad
	\fkp(\Omega \setminus \calT_{\lambda,\epsilon}) =
		(1 - \lambda) \fkp_\epsilon(\Omega)
		\text{.}
\end{equation}
From~\eqref{E:calT_b}, construct~$\phi_{\lambda,\epsilon}$ as
\begin{equation}\label{E:phi_mu_b}
	\phi_{\lambda,\epsilon}(\bx) =
	\begin{cases}
		\phi(\bx) \text{,}
			&\text{for } \bx \in \calT_{\lambda,\epsilon}
		\\
		\phi^\prime(\bx) \text{,}
			&\text{for } \bx \in \Omega \setminus \calT_{\lambda,\epsilon}
	\end{cases}
\end{equation}
Since~$\calHb$ is decomposable, we again have~$\phi_{\lambda,\epsilon} \in \calHb$. Let us show that it satisfies~\eqref{E:convexC_regression} up to an additive error~$\epsilon$.

Indeed, notice from~\eqref{E:phi_mu_b} that
\begin{equation}\label{E:expected_phi_mu_b}
\begin{aligned}
	\E \!\big[\ell_i(\phi_{\lambda,\epsilon}(\bx),y) \big] &=
		\E_y \!\bigg[\E \big[\ell_i(\phi_{\lambda,\epsilon}(\bx),y) \mid y \big] \bigg]
	\\
	{}&= \E_y \!\bigg[ \int_{\calT_{\lambda,\epsilon}} \ell_i(\phi(\bx),y) f_i(\bx \vert y) d\bx \bigg]
	\\
	{}&+ \E_y \!\bigg[ \int_{\Omega \setminus \calT_{\lambda,\epsilon}}
		\ell_i(\phi^\prime(\bx),y) f_i(\bx \vert y) d\bx \bigg]
		\text{.}
\end{aligned}
\end{equation}
Focusing on the first expectation, start by building a simple function approximation of the integrand using the intervals~$\calI_k$ from~\eqref{E:ytilde_intervals}. From our choice of~$\delta$ and Assumption~\ref{A:regression}, it holds that
\begin{multline}\label{E:first_bound_b}
	\E_y \!\bigg[ \int_{\calT_{\lambda,\epsilon}} \ell_i(\phi(\bx),y) f_i(\bx \vert y) d\bx \bigg] \leq{}
	\\
	\sum_{k = 0}^{\abs{\tilde{\calY}}} \E_y \!\Big[ \indicator[ y \in \calI_k ] \Big]
		\left[
			\int_{\calT_{\lambda,\epsilon}} \ell_i(\phi(\bx),\tilde{y}_k) f_i(\bx \vert \tilde{y}_k) d\bx
				+ \frac{\epsilon}{3}
		\right]
			\text{.}
\end{multline}
Notice from the definition of the vector measure in~\eqref{E:vector_meas_b} that the value of the integrals in~\eqref{E:first_bound_b} are entries of~$\fkp_\epsilon(\calT_{\lambda,\epsilon})$. Using the property of~$\calT_{\lambda,\epsilon}$ in~\eqref{E:calT_b}, we then get
\begin{multline}\label{E:second_bound_b}
	\E_y \!\bigg[ \int_{\calT_{\lambda,\epsilon}} \ell_i(\phi(\bx),y) f_i(\bx \vert y) d\bx \bigg] \leq{}
	\\
	\sum_{k = 0}^{\abs{\tilde{\calY}}} \E_y \!\Big[ \indicator[ y \in \calI_k ] \Big]
		\left[
			\lambda \int \ell_i(\phi(\bx),\tilde{y}_k) f(\bx \vert \tilde{y}_k) d\bx
				+ \frac{\epsilon}{3}
		\right]
			\text{.}
\end{multline}
Using Assumption~\ref{A:regression} once again, together with the fact that~$\fkD_i$ is a probability measure, i.e., $\sum_{k = 0}^{\abs{\tilde{\calY}}} \E_y \big[ \indicator[ y \in \calI_k ] \big] = 1$, yields
\begin{multline}\label{E:third_bound_b}
	\E_y \!\bigg[ \int_{\calT_{\lambda,\epsilon}} \ell_i(\phi(\bx),y) f_i(\bx \vert y) d\bx \bigg]
	\\
	{}\leq \lambda \bigg[ \E_y \!\Big[ \E \!\big[\ell_i(\phi(\bx),y) \mid y \big]  \Big] + \frac{\epsilon}{3} \bigg]
		+ \frac{\epsilon}{3}
	\\
	{}= \lambda \E \!\big[\ell_i(\phi(\bx),y) \big] + \frac{(1+\lambda) \epsilon}{3}
			\text{.}
\end{multline}
A similar argument yields
\begin{multline}\label{E:fourth_bound_b}
	\E_y \!\bigg[ \int_{\Omega \setminus \calT_{\lambda,\epsilon}} \ell_i(\phi^\prime(\bx),y) f_i(\bx \vert y) d\bx \bigg] \leq{}
	\\
	\leq (1-\lambda) \E \!\big[\ell_i(\phi^\prime(\bx),y) \big] + \frac{(2-\lambda) \epsilon}{3}
			\text{.}
\end{multline}
Using~\eqref{E:third_bound_b} and~\eqref{E:fourth_bound_b} in~\eqref{E:expected_phi_mu_b}, we obtain that for all~$\epsilon > 0$ and~$\lambda \in [0,1]$, there exists~$\phi_{\lambda,\epsilon} \in \calHb$ such that
\begin{equation}\label{E:approx_convexity}
	\E \!\big[ \ell_i(\phi_{\lambda,\epsilon}(\bx),y) \big]
		\leq \lambda s_i + (1-\lambda) s_i^\prime + \epsilon
		\text{,}
\end{equation}
for all~$i = 0,\dots,m$.

Suppose now that there is no~$\phi_\lambda \in \calHb$ such that~$\E\left[\ell_i(\phi_{\lambda}(\bx),y) \right] \leq \lambda s_i + (1-\lambda) s_i^\prime$. Then, there exists~$\tau > 0$ such that
\begin{equation*}
	\E\left[\ell_i(\phi(\bx),y) \right] > \lambda s_i + (1-\lambda) s_i^\prime + \tau
\end{equation*}
for~$\phi \in \calHb$. For instance, if~$\E\left[\ell_i(\phi(\bx),y) \right] \geq \lambda s_i + (1-\lambda) s_i^\prime + \gamma$, let~$\tau = \gamma/2$. However, this violates~\eqref{E:approx_convexity} for~$\epsilon = \tau$ leading to a contradiction. Since~$\calHb$ is closed, we therefore obtain that for all~$(s_0,\bs),(s_0^\prime,\bs^\prime) \in \calC$ and~$\lambda \in [0,1]$, there exists~$\phi_\lambda \in \calHb$ such that~\eqref{E:convexC_regression}, showing that~$\calC$ is convex. The strictly feasible~$\phi^\prime$ from Assumption~\ref{A:slater} implies that~$\calC$ is also not empty.
\end{proof}

%%%%%%%%%%%%%%%%%%%%%%%%%%%%%%%%%%%%%%%%%%%%%%%%%%%%%%%
%% PARAMETRIZED GAP                                  %%
%%%%%%%%%%%%%%%%%%%%%%%%%%%%%%%%%%%%%%%%%%%%%%%%%%%%%%%
% Proof of Proposition~\ref{T:param}
%!TEX root=../tit_csl

\section{Proof of Proposition~\ref{T:param}: The Approximation Gap}
\label{X:param}

We first prove that there exists a~$f_{\btheta^\dagger}$ feasible for~\eqref{P:csl} and then bound the gap between~$D^\star$ and~$P^\star$ using the functional problems~\eqref{P:csl_variational} and~\eqref{P:dual_csl_variational}.

\vspace{0.5\baselineskip}\noindent
\textbf{Feasibility.}
The proof relies on the following lemma characterizing the superdifferential of the dual function~\eqref{E:param_dual_function_csl}. Explicitly, we say~$\bp \in \setR^m$ is a \emph{supergradient} of~$d$ at~$\bmu$ if
\begin{equation}\label{E:supergrad}
	d(\bmu^\prime) \leq d(\bmu) + \bp^\top (\bmu^\prime - \bmu)
		\text{, for all } \bmu^\prime \in \setR^m_+
		\text{.}
\end{equation}
The set of all supergradients of~$d$ at~$\bmu$ is called the \emph{superdifferential} of~$d$ at~$\bmu \in \setR^m_+$ and is denoted~$\partial d(\bmu)$. Additionally, let
\begin{equation}\label{E:lagrangian_minimizers}
	\Theta^\dagger(\bmu) = \argmin_{\btheta \in \Theta} L(\btheta, \bmu)
\end{equation}
for the Lagrangian defined in~\eqref{E:param_dual_function_csl} and define the constraint slack vector~$\bs \in \setR^m$ with entries
\begin{equation}\label{E:slack_vector}
	s_i(\btheta) = \Big[ \E \!\big[ \ell_{i}\big( f_{\btheta}(\bx), y \big) \big] - c_{i} \Big]_+
		\text{,}
\end{equation}
where~$[z]_+ = \max(0,z)$ denotes the projection onto~$\setR_+$. For clarity, we now omit the distribution~$\fkD_{i}$ over which the expected value is taken. The following is sometimes known as Danskin's theorem.

\begin{lemma}\label{T:danskin}
Under Assumption~\ref{A:losses}, it holds that
\begin{equation}\label{E:danskin}
	\partial d(\bmu) = \conv\left(
		\bigcup_{\btheta^\dagger \in \Theta^\dagger(\bmu)} \bs\big( \btheta^\dagger \big)
	\right)
		\text{.}
\end{equation}
\end{lemma}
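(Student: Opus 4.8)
The plan is to prove the two inclusions in \eqref{E:danskin} separately, recognizing it as the dual‑function instance of Danskin's theorem in which the inner problem is the minimization over $\btheta$ defining \eqref{E:param_dual_function_csl} and the outer variable is $\bmu$. The inclusion $\supseteq$ is the routine direction. For any $\btheta^\dagger \in \Theta^\dagger(\bmu)$ as in \eqref{E:lagrangian_minimizers}, the map $\bmu' \mapsto L(\btheta^\dagger, \bmu')$ is affine in $\bmu'$ with gradient given by the constraint slack \eqref{E:slack_vector}, and since $d(\bmu') = \min_{\btheta} L(\btheta,\bmu') \leq L(\btheta^\dagger, \bmu')$ with equality at $\bmu' = \bmu$, this affine map is a minorant of $d$ touching it at $\bmu$. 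Verifying \eqref{E:supergrad} directly then shows $\bs(\btheta^\dagger) \in \partial d(\bmu)$, where the restriction of \eqref{E:supergrad} to $\bmu' \in \setR^m_+$ is precisely what accommodates the projection $[\cdot]_+$ in \eqref{E:slack_vector}. Because $d$ is concave (a pointwise minimum of affine functions), $\partial d(\bmu)$ is closed and convex, so it contains the convex hull of all such $\bs(\btheta^\dagger)$.

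For the reverse inclusion $\subseteq$, I would argue through the one‑sided directional derivative of $d$. First I would establish that $\Theta^\dagger(\bmu)$ is nonempty and compact and that $\btheta \mapsto \bs(\btheta)$ is continuous on it: the Lipschitz continuity of the losses (Assumption~\ref{A:losses}) makes $L(\cdot,\bmu)$ continuous, so minimizers are attained and the minimizing set is closed, and continuity of $\bs$ then makes its image compact (so its convex hull is already closed, matching the $\conv$ in \eqref{E:danskin}). The key identity to prove is
\begin{equation*}
	d'(\bmu; \bv) = \min_{\btheta^\dagger \in \Theta^\dagger(\bmu)} \bs(\btheta^\dagger)^\top \bv,
\end{equation*}
i.e.\ that the directional derivative of the min‑value function equals the minimum of the slacks over active minimizers in the direction $\bv$. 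Granting this, the standard support‑function description of the superdifferential of a concave function gives $\partial d(\bmu) = \{\bp : \bp^\top \bv \geq d'(\bmu;\bv)\text{ for all }\bv\}$, and this is exactly the set whose support function is $\bv \mapsto \min_{\btheta^\dagger} \bs(\btheta^\dagger)^\top \bv$, namely $\conv\big(\bigcup_{\btheta^\dagger} \bs(\btheta^\dagger)\big)$. Combining the two inclusions yields \eqref{E:danskin}.

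I expect the directional‑derivative identity to be the main obstacle. The bound $d'(\bmu;\bv) \leq \min_{\btheta^\dagger} \bs(\btheta^\dagger)^\top \bv$ is immediate from the minorant property, but the reverse inequality requires ruling out that minimizers of $L(\cdot,\bmu + t\bv)$ escape $\Theta^\dagger(\bmu)$ as $t \downarrow 0$. This is where compactness enters: taking $t_n \downarrow 0$ with (near‑)minimizers $\btheta_n$ of $L(\cdot,\bmu+t_n\bv)$, one must extract a limit point lying in $\Theta^\dagger(\bmu)$, using joint continuity in $(\btheta,\bmu)$ together with compactness of the minimizing set. An alternative that avoids the explicit directional‑derivative computation is a separating‑hyperplane argument: suppose some $\bp \in \partial d(\bmu)$ is not in the closed convex set $\conv\big(\bigcup \bs(\btheta^\dagger)\big)$, separate them by a vector $\bv$, and derive a contradiction by exhibiting a direction along which $d$ grows strictly faster than $\bp$ permits. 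Either route rests on the same compactness and continuity input from Assumption~\ref{A:losses}, which I would isolate as a short preliminary step before the main argument.
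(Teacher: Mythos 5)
Your overall route is sound but genuinely different from the paper's: the paper proves this lemma by citation, invoking \cite[Thm.~2.87]{Ruszczynski06n} after checking its three hypotheses ($L(\btheta,\cdot)$ affine, $L(\cdot,\bmu)$ continuous via Assumption~\ref{A:losses}, $\Theta$ compact) and flipping signs through $d(\bmu) = -\max_{\btheta \in \Theta} \,(-L(\btheta,\bmu))$. You instead reconstruct that theorem from scratch: the easy inclusion via affine minorants, the hard inclusion via the Danskin directional-derivative identity together with the support-function description of the superdifferential, and a separation argument as fallback. This is essentially the proof of the cited result, so nothing is wasted; what it buys is self-containedness and an explicit accounting of where compactness and continuity enter, which the paper leaves hidden inside the citation.

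Two points in your write-up need repair. First, and most substantively, your parenthetical claiming that the restriction of \eqref{E:supergrad} to $\bmu' \in \setR^m_+$ ``accommodates'' the projection $[\cdot]_+$ in \eqref{E:slack_vector} is false in general. The gradient of the affine map $\bmu' \mapsto L(\btheta^\dagger,\bmu')$ is the \emph{unprojected} slack $g_i(\btheta^\dagger) = \E\big[\ell_i\big(f_{\btheta^\dagger}(\bx),y\big)\big] - c_i$, and replacing it by $[g_i]_+$ changes the majorant by $\sum_{i:\, g_i<0}(-g_i)(\mu_i'-\mu_i)$, which has the wrong sign whenever $\mu_i > 0$ and the feasible point $\bmu'$ has $\mu_i' < \mu_i$. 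Concretely, with $m=1$, $\mu>0$, and a unique Lagrangian minimizer with $g_1<0$, the dual function is differentiable at $\mu$ with slope $g_1 < 0$, yet the projected formula would assert $\partial d(\mu) = \{0\}$. The restriction to $\setR^m_+$ absorbs the projection only in coordinates where $\mu_i = 0$. Note that the theorem the paper cites likewise delivers the unprojected slacks, so the $[\cdot]_+$ in \eqref{E:slack_vector} is best read as a slip in the paper's statement; your argument is correct precisely when run with the unprojected slack vector, and you should say so rather than attempt to justify the projection. Second, compactness: continuity of $L(\cdot,\bmu)$ alone does not give attainment of minimizers or compactness of $\Theta^\dagger(\bmu)$; you need compactness of $\Theta$ itself, which the paper asserts explicitly inside its proof even though it appears in no numbered assumption. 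Moreover, your limit-point extraction for the perturbed minimizers $\btheta_n$ of $L(\cdot,\bmu+t_n\bv)$ requires compactness of $\Theta$, not merely of ``the minimizing set'' $\Theta^\dagger(\bmu)$, since the $\btheta_n$ need not lie in the latter. (Both you and the paper implicitly use continuity of $\btheta \mapsto f_{\btheta}(\bx)$ to pass from Assumption~\ref{A:losses} to continuity of $L(\cdot,\bmu)$; that gloss is shared, so it does not count against you.)
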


\begin{proof}
	See, e.g., \cite[Thm.~2.87]{Ruszczynski06n}, noting that~$L(\btheta, \cdot)$ is affine for all~$\btheta \in \Theta$, $L(\cdot,\bmu)$ is continuous for all~$\bmu \in \setR^m_+$~(Assumption~\ref{A:losses}), and~$\Theta$ is compact. Hence, we meet conditions~(i)--(iii) of the theorem. Considering that~$d(\bmu) = \min_{\btheta \in \Theta} L(\btheta,\bmu) = -\max_{\btheta \in \Theta} -L(\btheta,\bmu)$ yields the desired result.
\end{proof}

The proof then follows by contradiction. Indeed, suppose that all elements of~$\Theta^\dagger(\bmu^\star)$ are infeasible for~\eqref{P:csl}. Then, for all~$\btheta^\dagger \in \Theta^\dagger(\bmu^\star)$ there exists~$i(\btheta^\dagger)$ such that~$\E \!\Big[ \ell_{i(\btheta^\dagger)}\big( f_{\btheta^\dagger}(\bx), y \big) \Big] - c_{i(\btheta^\dagger)} > 0$. From Lemma~\ref{T:danskin}, $\bzero \notin \partial d(\bmu^\star)$. However, this contradicts the optimality of~$\bmu^\star$. Hence, there must be~$\btheta^\dagger \in \Theta^\dagger(\bmu^\star)$ feasible for~\eqref{P:csl}.

\vspace{0.5\baselineskip}\noindent
\textbf{Near-optimality.}
The upper bound is trivial from weak duality~\cite{Bertsekas09c}. For the lower bound, consider the functional problem
\begin{prob}\label{P:csl_perturbed}
	\tilde{P}_{\nu}^\star = \min_{\phi \in \calHb}&
		&&\E \!\Big[ \ell_0\big( \phi(\bx),y \big) \Big]
	\\
	\subjectto& &&\E \!\Big[ \ell_i\big( \phi(\bx),y \big) \Big]
		\leq c_i - M \nu
		\text{,}
	\\
	&&&\qquad i = 1,\ldots,m
\end{prob}
and let~$\tilde{\phi}_{\nu}^\star \in \calHb$ be a solution. Such a solution exists by Assumption~\ref{A:slater}. Notice that strong duality holds for~\eqref{P:csl_perturbed}~(by Proposition~\ref{T:zdg} and Assumption~\ref{A:slater}), so that
\begin{equation}\label{E:dual_perturbed}
	\tilde{P}_\nu^\star = \max_{\bmu \in \setR^m_+}\ \min_{\phi \in \calHb}\ \tilde{L}_\nu(\phi,\bmu)
		= \tilde{L}_\nu(\tilde{\phi}_\nu^\star,\bmut)
		\text{,}
\end{equation}
where~$\bmut$ achieves the maximum in~\eqref{E:dual_perturbed} for the Lagrangian
\begin{equation}\label{E:lagrangian_perturbed}
\begin{aligned}
	\tilde{L}_\nu(\phi,\bmu) &=
		\E \!\Big[ \ell_0\big( \phi(\bx),y \big) \Big]
	\\
	{}&+ \sum_{i = 1}^m \mu_i \Big[
		\E \!\big[ \ell_i\big( \phi(\bx),y \big) \big]
			- c_i + M \nu
	\Big]
		\text{.}
\end{aligned}
\end{equation}

To proceed, note from~\eqref{P:param_dual_csl} that
\begin{equation*}
	D^\star \geq \min_{\btheta \in \Theta} L( \btheta, \bmu )
		\text{,} \quad \text{for all } \bmu \in \setR^m_+
		\text{.}
\end{equation*}
Immediately, we obtain that
\begin{equation}\label{E:lower_bound1}
	D^\star	\geq \min_{\btheta \in \Theta} L( \btheta, \bmut )
		\geq \min_{\phi \in \calHb} \tilde{L}( \phi, \bmut )
		\text{,}
\end{equation}
where the second inequality comes from the fact that~$\calH \subseteq \calHb$. Then, note that~$\tilde{L}_\nu$ in~\eqref{E:lagrangian_perturbed} is related to~$\tilde{L}$ in~\eqref{E:lagrangian_csl_variational} by
\begin{equation*}
	\tilde{L}_\nu(\phi,\bmu) = \tilde{L}(\phi,\bmu) + M\nu \norm{\bmu}_1
		\text{,}
\end{equation*}
where we used the fact that~$\bmu \in \setR^m_+$ to write that~$\sum_i \mu_i = \norm{\bmu}_1$. From~\eqref{E:lower_bound1} we then get
\begin{equation*}
	D^\star	\geq \min_{\phi \in \calHb} \tilde{L}( \phi, \bmut )
		= \min_{\phi \in \calHb} \tilde{L}_\nu( \phi, \bmut ) - M\nu \norm{\bmut}_1
		\text{,}
\end{equation*}
which using the strong duality of~\eqref{P:csl_perturbed} yields
\begin{equation}\label{E:lower_bound2}
	D^\star	\geq \tilde{P}_\nu^\star
		= \E \!\Big[ \ell_0\big( \tilde{\phi}_\nu^\star(\bx),y \big) \Big] - M\nu \norm{\bmut}_1
		\text{.}
\end{equation}
To obtain the lower bound in~\eqref{E:param_gap}, suffices it to show that~$\E \!\big[ \ell_0\big( \tilde{\phi}_\nu^\star(\bx),y \big) \big] \geq P^\star - M\nu$.

Explicitly, it holds from Assumptions~\ref{A:losses} and~\ref{A:parametrization} that there exists~$\tilde{\btheta}^\star_\nu \in \Theta$ such that
\begin{multline}\label{E:bound_lipschitz}
	\Big\vert \E \!\Big[ \ell_i\big( \tilde{\phi}_\nu^\star(\bx),y \big) \Big] -
		\E \!\Big[ \ell_i\big( f_{\tilde{\btheta}^\star_\nu}(\bx),y \big) \Big]
	 \Big\vert
	\\
	{}\leq
	\E \!\Big[ \big\vert
		\ell_i\big( \tilde{\phi}_\nu^\star(\bx),y \big) - \ell_i\big( f_{\tilde{\btheta}^\star_\nu}(\bx),y \big)
	\big\vert \Big]
	\\
	{}\leq M \E \!\Big[ \big\vert  f_{\tilde{\btheta}^\star_\nu}(\bx) - \tilde{\phi}_\nu^\star(\bx) \big\vert \Big]
		\leq M \nu
		\text{.}
\end{multline}
Since~$\tilde{\phi}_\nu^\star(\bx)$ is feasible for the perturbed~\eqref{P:csl_perturbed}, \eqref{E:bound_lipschitz} implies that~$\tilde{\btheta}^\star_\nu$ is feasible for~\eqref{P:csl}. By optimality, $P^\star \leq \E \!\big[ \ell_0\big( f_{\tilde{\btheta}^\star_\nu}(\bx),y \big) \big]$. Going back to~\eqref{E:lower_bound2}, we conclude that
\begin{align*}
	D^\star	&\geq \E \!\Big[ \ell_0\big( \tilde{\phi}_\nu^\star(\bx),y \big) \Big] - M\nu \norm{\bmut}_1
	\\
	{}&\geq P^\star + \E \!\Big[ \ell_0\big( \tilde{\phi}_\nu^\star(\bx),y \big)
		- \ell_0\big( f_{\tilde{\btheta}^\star_\nu}(\bx),y \big) \Big]
		- M\nu \norm{\bmut}_1
	\\
	{}&\geq P^\star - (1 + \norm{\bmut}_1) M \nu
		\text{,}
\end{align*}
where the last inequality stems from~\eqref{E:bound_lipschitz}.
\hfill$\blacksquare$

%%%%%%%%%%%%%%%%%%%%%%%%%%%%%%%%%%%%%%%%%%%%%%%%%%%%%%%
%% EMPIRICAL GAP                                     %%
%%%%%%%%%%%%%%%%%%%%%%%%%%%%%%%%%%%%%%%%%%%%%%%%%%%%%%%
% Proof of Proposition~\ref{T:empirical}
% !TEX root=tit_csl.tex

\section{Proof of Proposition~\ref{T:empirical}: The Estimation Gap}
\label{X:empirical}

\vspace{0.5\baselineskip}\noindent
\textbf{Feasibility.}
The proof follows by first showing that~$\bhtheta$ must be feasible for~\eqref{P:ecrm} using the same argument as in Appendix~\ref{X:param}. Then, leveraging the fact that~$\calH$ is PAC learnable~(Assumption~\ref{A:parametrization}), we can apply generalization bounds from classical learning theory.

Again by contradiction, suppose that all elements of~$\hat{\Theta}^\dagger(\bhmu)$ are infeasible for~\eqref{P:ecrm}. Then, for all~$\bhdtheta \in \Theta^\dagger(\bhmu)$ there exists~$i$ such that
\begin{equation*}
	\frac{1}{N_i} \sum_{n_i = 1}^{N_i} \ell_i\big( f_{\bhdtheta}(\bx_{n_i}), y_{n_i} \big) > c_i
		\text{,}
\end{equation*}
for~$(\bx_{n_i},y_{n_i}) \sim \fkD_i$. Then, from Lemma~\ref{T:danskin}, $\bzero \notin \partial d(\bhmu)$, which contradicts the optimality of~$\bhmu$. Hence, there must be~$\bhdtheta \in \hat{\Theta}^\dagger(\bhmu)$ feasible for~\eqref{P:ecrm}.

From the uniform bound in Assumption~\ref{A:empirical}, it holds with probability~$1-\delta$ over the data that
\begin{equation}\label{E:feasbility_vc}
\begin{aligned}
	\E_{(\bx,y) \sim \fkD_i} \!\Big[ \ell_i\big( f_{\btheta}(\bx),y \big) \Big] &\leq
		\frac{1}{N_i} \sum_{n_i = 1}^{N_i} \ell_i\big( f_{\btheta}(\bx_{n_i}), y_{n_i} \big)
		+ \zeta_i(N_i)
\end{aligned}
\end{equation}
for each~$i = 1,\dots,m$.

\vspace{0.5\baselineskip}\noindent
\textbf{Near-optimality.}
Let~$\bmu^\star$ and~$\bhmu$ be solutions of~\eqref{P:param_dual_csl} and~\eqref{P:empirical_dual} respectively and consider the set of dual minimizers
\begin{equation*}
	\Theta^\dagger(\bmu) = \argmin_{\theta \in \Theta} L(\btheta,\bmu)
	\quad\text{and}\quad
	\hat{\Theta}^\dagger(\hat{\bmu}) = \argmin_{\theta \in \Theta} \hat{L}(\btheta,\hat{\bmu})
\end{equation*}
for the Lagrangians defined in~\eqref{E:param_dual_function_csl} and~\eqref{E:empirical_lagrangian} respectively. Using the optimality~$\bmu^\star$, it holds that
\begin{align*}
	D^\star - \hat{D}^\star &= \min_{\btheta \in \Theta} L(\btheta,\bmu^\star)
		- \min_{\btheta \in \Theta} \hat{L}(\btheta,\bhmu)
	\\
	{}&\leq \min_{\btheta \in \Theta} L(\btheta,\bmu^\star) - \min_{\btheta \in \Theta} \hat{L}(\btheta,\bmu^\star)
		\text{.}
\end{align*}
Since~$\bhdtheta \in \hat{\Theta}^\dagger(\bmu^\star)$ is suboptimal for~$L(\btheta,\bmu^\star)$, we get
\begin{equation}\label{E:empirical_upper_bound}
	D^\star - \hat{D}^\star \leq L(\bhdtheta,\bmu^\star) - \hat{L}(\bhdtheta,\bmu^\star)
		\text{.}
\end{equation}
Using a similar argument yields
\begin{equation}\label{E:empirical_lower_bound}
	D^\star - \hat{D}^\star \geq L(\btheta^\dagger,\bhmu) - \hat{L}(\btheta^\dagger,\bhmu)
\end{equation}
for~$\btheta^\dagger \in \Theta^\dagger(\bhmu)$. Thus, we obtain that
\begin{multline}\label{E:gap_bound}
	\abs{D^\star - \hat{D}^\star} \leq
	\max \bigg\{
		\abs{L(\bhdtheta,\bmu^\star) - \hat{L}(\bhdtheta,\bmu^\star)},
	\\
		\abs{L(\btheta^\dagger,\bhmu) - \hat{L}(\btheta^\dagger,\bhmu)}
	\bigg\}
\end{multline}
Using the empirical bound from Assumption~\ref{A:empirical}, we obtain that
\begin{align}\label{E:vc_F_bound}
	\abs{L(\btheta,\bmu) - \hat{L}(\btheta,\bmu)} &\leq \zeta_0(N_0) + \sum_{i = 1}^m \mu_i \zeta_i(N_i)
	\notag\\
	{}&\leq (1 + \norm{\bmu}_1) \max_i \zeta_i(N_i)
		\text{,}
\end{align}
holds uniformly over~$\btheta$ with probability~$1-(m+1)\delta$. We omit the dependency of~$\zeta_i$ on~$\delta$ for conciseness.

\vspace{0.5\baselineskip}\noindent
\textbf{Union bound.}
To conclude, we use the union bound to combine~\eqref{E:feasbility_vc} for~$i=1,\dots,m$ with~\eqref{E:gap_bound} and~\eqref{E:vc_F_bound}. Doing so, we obtain that
\begin{align*}
	\abs{D^\star - \hat{D}^\star} &\leq (1 + \bar{\Delta}) \overline{\zeta}
	\\
	\E_{(\bx,y) \sim \fkD_i} \!\Big[ \ell_i\big( f_{\bhtheta}(\bx),y \big) \Big] &\leq
		c_i + \zeta_i(N_i)
\end{align*}
occur simultaneously with probability at least~$1-(3m+2)\delta$ for~$\overline{\zeta} = \max_i \zeta_i(N_i)$ and~$\bar{\Delta} = \max (\norm{\bmu^\star}_1, \norm{\bhmu}_1)$.
\hfill$\blacksquare$

%%%%%%%%%%%%%%%%%%%%%%%%%%%%%%%%%%%%%%%%%%%%%%%%%%%%%%%
%% PRIMAL-DUAL                                       %%
%%%%%%%%%%%%%%%%%%%%%%%%%%%%%%%%%%%%%%%%%%%%%%%%%%%%%%%
% Proof of Theorem~\ref{T:convergence_quant}
% !TEX root=tit_csl.tex

\section{Proof of Theorem~\ref{T:convergence_quant}}
\label{X:convergence_quant}

\vspace{0.5\baselineskip}\noindent
\textbf{Deterministic duality gap.}
We proceed by proving that, for all~$\beta > 0$,
\begin{equation}\label{E:convergence_quant1}
	\hat{D}^\star - \rho - \eta\frac{mB^2}{2} - \beta
		\leq \hat{L}\big( \btheta^{(T)},\bmu^{(T)} \big)
		\leq \hat{D}^\star + \rho
		\text{,}
\end{equation}
from which we obtain~\eqref{E:convergence_quant} by using the near-PACC bound on~$\hat{D}^\star$ from Theorem~\ref{T:main} and choosing~$\beta = M\nu$ and~$\eta$ as in the statement of the theorem. Start by noticing from the definition of the empirical dual function~$\hat{d}$ in~\eqref{E:empirical_dual_function}, that the upper bound in~\eqref{E:convergence_quant1} holds trivially from the fact that~$\hat{d}(\bmu) \leq \hat{D}^\star$ for all~$\bmu \in \setR^m_+$. Thus, from the fact that~$\btheta^{(t)}$ is an approximate minimizer of the empirical Lagrangian~$\hat{L}$~(step~3 of Algorithm~\ref{L:primal_dual}), we obtain that
\begin{equation}\label{E:convergence_bounded}
	\hat{L}\big( \btheta^{(t)},\bmu^{(t)} \big)	\leq \hat{D}^\star + \rho
		\text{,}\quad \text{for all } t \geq 0
		\text{.}
\end{equation}

For the lower bound, we rely on the following relaxation of Danskin's classical theorem~\cite[Ch.~3]{Bertsekas15c}:

\begin{lemma}\label{T:subgrad_approx}

Let~$\btheta^\dagger$ be a $\rho$-\emph{approximate} minimizer of the empirical Lagrangian~\eqref{E:empirical_lagrangian} at~$\bmu$, i.e.,
\begin{equation}\label{E:approximate_minimizer}
	\hat{L}\big( \btheta^\dagger, \bmu\big) \leq \min_{\btheta \in \setR^p} \hat{L} \big(\btheta, \bmu \big) + \rho
		\text{,}
\end{equation}
for~$\rho \geq 0$. Then, the constraint slacks evaluate at~$\btheta^\dagger$
\begin{equation}\label{E:slacks}
	s_i(\btheta^\dagger) = \frac{1}{N_i} \sum_{n_i = 1}^{N_i}
		\ell_i\big( f_{\btheta^\dagger}(\bx_{n_i}), y_{n_i} \big) - c_i
		\text{, } i = 1,\dots,m
		\text{,}
\end{equation}
are \emph{approximate} supergradients of the dual function~\eqref{E:empirical_dual_function}. Explicitly,
\begin{equation}\label{E:subgrad_approx}
	\hat{d}(\bmu) \geq \hat{d}(\bmu^\prime)	+ \sum_{i = 1}^m (\mu_i - \mu_i^\prime) s_i(\btheta^\dagger)
		- \rho
\end{equation}
for all~$\bmu^\prime \in \setR^m_+$.
\end{lemma}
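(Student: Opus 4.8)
The plan is to exploit the affinity of the empirical Lagrangian in the dual variable, which reduces the claim to a one-line substitution into the definition of $\hat d$. First I would decompose $\hat L(\btheta,\bmu) = \hat L_0(\btheta) + \sum_{i=1}^m \mu_i\, s_i(\btheta)$, where $\hat L_0(\btheta) = \frac{1}{N_0} \sum_{n_0=1}^{N_0} \ell_0\big(f_\btheta(\bx_{n_0}),y_{n_0}\big)$ is the empirical objective and $s_i(\btheta)$ is the slack in~\eqref{E:slacks}. The key consequence is the identity $\hat L(\btheta,\bmu') = \hat L(\btheta,\bmu) + \sum_{i=1}^m (\mu_i' - \mu_i)\, s_i(\btheta)$, valid for every $\btheta \in \setR^p$ and every pair $\bmu,\bmu' \in \setR^m_+$.

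Next I would bound the dual function at $\bmu'$ from above. Because $\hat d(\bmu')$ is a minimum over $\btheta$, evaluating at the (generally suboptimal) point $\btheta^\dagger$ gives $\hat d(\bmu') \leq \hat L(\btheta^\dagger,\bmu')$. Substituting the affine identity at $\btheta = \btheta^\dagger$ turns this into $\hat d(\bmu') \leq \hat L(\btheta^\dagger,\bmu) + \sum_{i=1}^m (\mu_i' - \mu_i)\, s_i(\btheta^\dagger)$.

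Finally I would invoke the $\rho$-approximate optimality~\eqref{E:approximate_minimizer}, i.e.\ $\hat L(\btheta^\dagger,\bmu) \leq \hat d(\bmu) + \rho$, to deduce $\hat d(\bmu') \leq \hat d(\bmu) + \rho + \sum_{i=1}^m (\mu_i' - \mu_i)\, s_i(\btheta^\dagger)$. Rearranging and flipping the sign of the slack sum produces exactly~\eqref{E:subgrad_approx}.

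I do not anticipate a genuine obstacle: the whole argument rests on the affinity of $\hat L$ in $\bmu$ together with the pointwise-minimum definition of $\hat d$, and in particular needs neither convexity nor smoothness of the $\ell_i$. The only care required is bookkeeping the direction of the inequality when passing from the approximate-minimizer bound back to $\hat d(\bmu)$, and ensuring the additive $\rho$ enters exactly once (through~\eqref{E:approximate_minimizer} and not again). Setting $\rho = 0$ recovers the exact supergradient characterization, confirming that the statement is the natural relaxation of Danskin's theorem to $\rho$-approximate minimizers.
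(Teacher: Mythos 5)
Your proposal is correct and matches the paper's proof in all essentials: both arguments combine the suboptimal-point bound $\hat{d}(\bmu^\prime) \leq \hat{L}(\btheta^\dagger,\bmu^\prime)$, the $\rho$-approximate optimality $\hat{L}(\btheta^\dagger,\bmu) \leq \hat{d}(\bmu) + \rho$, and the affinity of $\hat{L}$ in $\bmu$ (which the paper phrases as the cancellation of the identical $\ell_0$ terms in $\hat{L}(\btheta^\dagger,\bmu^\prime) - \hat{L}(\btheta^\dagger,\bmu)$, exactly your affine identity). Your bookkeeping of the inequality direction and the single appearance of $\rho$ is also exactly how the paper's chain of inequalities resolves.
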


\begin{proof}

Recalling from~\eqref{E:empirical_dual_function} that~$\hat{d}(\bmu) = \min_{\btheta \in \setR^p} \hat{L} \big(\btheta, \bmu \big)$, we obtain from~\eqref{E:approximate_minimizer} that
\begin{equation}\label{E:subgrad_ineq1}
	\hat{d}(\bmu^\prime) \leq \hat{d}(\bmu^\prime) + \hat{d}(\bmu) - \hat{L} \big( \btheta^\dagger,	\bmu\big) + \rho
		\text{.}
\end{equation}
Additionally, we can upper bound~\eqref{E:subgrad_ineq1} by noticing that~$\hat{d}(\bmu^\prime) \leq \hat{L}(\btheta^\dagger,\bmu^\prime)$ for the suboptimal~$\btheta^\dagger$, yielding
\begin{equation}\label{E:subgrad_ineq2}
	\hat{d}(\bmu^\prime) \leq \hat{L} \big( \btheta^\dagger,\bmu^\prime \big) + \hat{d}(\bmu) - \hat{L} \big( \btheta^\dagger, \bmu \big)
			+ \rho
		\text{.}
\end{equation}
From~\eqref{E:empirical_lagrangian}, notice that the first term of the Lagrangians in~\eqref{E:subgrad_ineq2} are identical and cancel out, leading to~\eqref{E:subgrad_approx}.
\end{proof}

To proceed, let~$\calM^\star$ be the set of solutions of the dual problem~\eqref{P:empirical_dual}, i.e.,
\begin{equation}\label{E:empirical_lagrange_multipliers}
	\calM^\star = \argmax_{\bmu \in \setR^m_+}\ \hat{d}(\bmu)
		\text{.}
\end{equation}
We show next that for at least~$O(1/\beta)$ steps, the distance
\begin{equation}\label{E:lyapunov}
	U_t = \inf_{\bmu^\star \in \calM^\star} \big\Vert \bmu^{(t)} - \bmu^\star \big\Vert^2
\end{equation}
decreases by more than~$O(\beta)$. To do so, it is convenient to collect the constraint slacks from step~4 of Algorithm~\ref{L:primal_dual} into a vector~$\bs^{(t)} = \big[ s_i^{(t)} \big]_{i=1,\dots,m}$. Then, using the update in Algorithm~\ref{L:primal_dual}~(step~5), we write~\eqref{E:lyapunov} as
\begin{equation*}
	U_t = \inf_{\bmu^\star \in \calM^\star} \norm{\Big[ \bmu^{(t-1)} + \eta \bs^{(t-1)} \Big]_+ - \bmu^\star}^2
		\text{.}
\end{equation*}
Since~$\calM^\star \subset \setR^m_+$, we can use the non-expansiveness of the projection~$[\cdot]_+$~\cite{Bertsekas09c} to obtain
\begin{equation}\label{E:algorithm_update}
	U_t \leq \inf_{\bmu^\star \in \calM^\star} \norm{\bmu^{(t-1)} + \eta \bs^{(t-1)} - \bmu^\star}^2
		\text{.}
\end{equation}

To proceed, expand the norms in~\eqref{E:algorithm_update} to get
\begin{align*}
	U_t &\leq \inf_{\bmu^\star \in \calM^\star} \Big[
		\big\Vert\bmu^{(t-1)} - \bmu^\star \big\Vert^2
			+ 2 \eta \left( \bmu^{(t-1)} - \bmu^\star \right)^\top \bs^{(t-1)}
	\Big]
	\\
	{}&+ \eta^2 \big\Vert\bs^{(t-1)}\big\Vert^2
		\text{.}
\end{align*}
Using Lemma~\ref{T:subgrad_approx} and the fact that the~$\ell_i$ are bounded, we then obtain
\begin{equation*}
	U_t \leq U_{t-1} + 2 \eta \Big[ \hat{d}\big( \bmu^{(t-1)} \big) - \hat{D}^\star + \rho \Big] + m \eta^2 B^2
		\text{.}
\end{equation*}
Since~$\hat{D}^\star = \hat{d}( \bmu^\star )$ for all~$\bmu^\star \in \calM^\star$, the second term no longer depends on the choice of~$\bmu^\star$, so that the infimum applies only to the distance between~$\bmu^{(t-1)}$ and~$\bmu^\star$, which we write as~$U_{t-1}$ using the definition in~\eqref{E:lyapunov}. Solving this recursion yields
\begin{equation}\label{E:algorithm_update3}
	U_t \leq U_0 + 2 \eta \sum_{t = 0}^{t-1} \Delta_t
		\text{,}
\end{equation}
for
\begin{equation}\label{E:delta_t}
	\Delta_t = \hat{d}\big( \bmu^{(t)} \big) - \hat{D}^\star + \rho + \eta\frac{m B^2}{2}
		\text{.}
\end{equation}

To conclude, notice that~$\hat{d}(\bmu) \leq \hat{D}^\star$ for all~$\bmu \in \setR^m_+$. Hence, when~$\bmu^{(t)}$ is sufficiently far from the optimum and the step size~$\eta$ is sufficiently small, we have~$\Delta_t < 0$ and~\eqref{E:algorithm_update3} shows that the distance to the optimum~$U_t$ decreases. Formally, fix a precision~$\beta > 0$ and let~$(T-1) = \min \{ t \mid \Delta_t > -\beta\}$. Then, from the definition of~$\Delta_t$ we obtain the desired lower bound in~\eqref{E:convergence_quant1} by noting that
\begin{equation*}
	\Delta_{T-1} > -\beta \Leftrightarrow \hat{d}\big( \bmu^{(T-1)} \big) > \hat{D}^\star - \rho - \eta\frac{m B^2}{2} - \beta
\end{equation*}
and that~$\hat{d}\big( \bmu^{(T-1)} \big) \leq \hat{L}(\btheta,\bmu^{(T-1)})$. What is more, \eqref{E:algorithm_update3} yields
\begin{equation*}
	T \leq \frac{U_{0}}{2 \eta \beta} + 1 = O\big( \beta^{-1} \big)
		\text{.}
\end{equation*}

% Proof of Theorem~\ref{T:convergence_rand}
% !TEX root=tit_csl.tex

\section{Proof of Theorem~\ref{T:convergence_rand}}
\label{X:convergence_rand}

We will proceed by showing that the randomized solution is feasible and near-optimal for the empirical problem~\eqref{P:ecrm}. We can then leverage Assumption~\ref{A:empirical} and Propositions~\ref{T:param} and~\ref{T:empirical} and apply the union bound to obtain the result with respect to the statistical losses in~\eqref{E:convergence_rand}. We start by proving~\eqref{E:rand_feasibility} and then proceed with~\eqref{E:rand_optimality}.

\vspace{0.5\baselineskip}\noindent
\textbf{Randomized feasibility.}
We begin with a technical lemma showing that the~$\bmu^{(t)}$ generated by Algorithm~\ref{L:primal_dual} do not move too far away from the set of Lagrange multipliers.

\begin{lemma}\label{T:iterates_norm_bound}
	Let~$\calM^\star$ denote the set of solutions of the dual problem~\eqref{P:empirical_dual}, i.e., $\calM^\star = \argmax_{\bmu \in \setR^m_+}\ \hat{d}(\bmu)$, and~$\{\bmu^{(t)}\}$ denote the sequence of dual variables generated by Algorithm~\ref{L:primal_dual}. Then, under Assumption~\ref{A:slater}, there exists a constant~$C < \infty$ such that
	\begin{equation}\label{E:algorithm_mu_bound}
		\inf_{\bhmu \in \calM^\star} \big\Vert \bmu^{(t)} - \bhmu \big\Vert \leq C
			\text{.}
	\end{equation}
\end{lemma}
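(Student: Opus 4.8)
The plan is to show that the projected dual-ascent iterates of Algorithm~\ref{L:primal_dual} cannot escape a bounded neighborhood of the (bounded) optimal dual set~$\calM^\star$, by combining two ingredients: the boundedness of~$\calM^\star$ furnished by Lemma~\ref{T:norm_mu_bound}, and the \emph{coercivity} of the empirical dual function~$\hat{d}$ guaranteed by the strictly feasible point~$\bm{{\hat{\theta}^\prime}}$ of Assumption~\ref{A:slater}. The argument is a ``descent outside a ball'' estimate: whenever $\dist(\bmu^{(t)}, \calM^\star)$ is large the squared distance $U_t = \inf_{\bhmu \in \calM^\star} \norm{\bmu^{(t)} - \bhmu}^2$ can only decrease, while inside the ball the per-step increase is uniformly bounded, so~$U_t$ stays bounded for all~$t$.

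First I would establish the one-step recursion on $U_t$, reusing the computation already carried out for Theorem~\ref{T:convergence_quant}. Fixing $\bhmu = \proj_{\calM^\star}(\bmu^{(t)})$, non-expansiveness of the projection $[\cdot]_+$ onto $\setR^m_+$ (which contains $\bhmu$) gives $U_{t+1} \le \norm{\bmu^{(t)} + \eta \bs^{(t)} - \bhmu}^2$; expanding the square, controlling the cross term through the approximate-supergradient inequality of Lemma~\ref{T:subgrad_approx}, namely $(\bmu^{(t)} - \bhmu)^\top \bs^{(t)} \le \hat{d}(\bmu^{(t)}) - \hat{D}^\star + \rho$, and bounding $\norm{\bs^{(t)}}^2 \le m B^2$ since $\ell_i \in [0,B]$, yields
\[
	U_{t+1} \le U_t + 2\eta\big[ \hat{d}(\bmu^{(t)}) - \hat{D}^\star + \rho + \eta m B^2/2 \big].
\]

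Next I would make the bracketed term strictly negative for large $U_t$. Evaluating $\hat{d}(\bmu) = \min_{\btheta} \hat{L}(\btheta,\bmu)$ at the strictly feasible $\bm{{\hat{\theta}^\prime}}$, whose empirical slacks satisfy $\frac{1}{N_i}\sum_{n_i} \ell_i(f_{\bm{{\hat{\theta}^\prime}}}(\bx_{n_i}),y_{n_i}) - c_i \le -\xi$, and using $\ell_0 \in [0,B]$ together with $\bmu \in \setR^m_+$, I obtain the coercive bound $\hat{d}(\bmu) \le B - \xi\norm{\bmu}_1$. Combining this with $\hat{D}^\star \ge \hat{d}(\bzero) \ge 0$ gives $\hat{d}(\bmu^{(t)}) - \hat{D}^\star \le B - \xi\norm{\bmu^{(t)}}_1$. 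The key reconciliation step is to pass from the $\ell_1$-norm appearing here to the Euclidean distance $\sqrt{U_t}$ appearing in the recursion: since every $\bhmu \in \calM^\star$ satisfies $\norm{\bhmu}_2 \le \norm{\bhmu}_1 \le B/\xi$ by Lemma~\ref{T:norm_mu_bound}, the triangle inequality gives $\norm{\bmu^{(t)}}_1 \ge \norm{\bmu^{(t)}}_2 \ge \sqrt{U_t} - B/\xi$. Substituting shows the bracket is at most $2B + \rho + \eta m B^2/2 - \xi\sqrt{U_t}$, hence strictly negative once $U_t$ exceeds the explicit threshold $U_{\max} = \big[(2B + \rho + \eta m B^2/2)/\xi\big]^2$.

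Finally, I would close the argument by induction. When $U_t > U_{\max}$ the recursion forces $U_{t+1} \le U_t$, whereas when $U_t \le U_{\max}$ dropping the negative term $-\xi\sqrt{U_t}$ gives $U_{t+1} \le U_{\max} + 2\eta(2B + \rho + \eta m B^2/2)$. In either case $U_{t+1} \le \max\{U_t,\, U_{\max} + 2\eta(2B + \rho + \eta m B^2/2)\}$, so with the base case $U_0 = \dist(\bzero,\calM^\star)^2 \le (B/\xi)^2$ (recall $\bmu^{(0)} = \bzero$) one concludes $U_t \le \max\{U_0,\, U_{\max} + 2\eta(2B + \rho + \eta m B^2/2)\}$ for all $t$, giving the claimed finite constant $C$ as the square root of this bound. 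I expect the main obstacle to be exactly the reconciliation between the $\ell_1$-coercivity of $\hat{d}$ and the Euclidean distance measured against $\calM^\star$; the remaining effort is routine bookkeeping on the supergradient recursion and the Slater margin.
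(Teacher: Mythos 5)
Your proof is correct, and while it rests on exactly the same two ingredients as the paper's---the approximate-supergradient recursion $U_{t+1} \leq U_t + 2\eta\big[\hat{d}\big(\bmu^{(t)}\big) - \hat{D}^\star + \rho + \eta m B^2/2\big]$ obtained from non-expansiveness of $[\cdot]_+$ and Lemma~\ref{T:subgrad_approx}, and the Slater-based coercivity $\hat{d}(\bmu) \leq B - \xi \norm{\bmu}_1$ underlying Lemma~\ref{T:norm_mu_bound}---it organizes the drift argument differently. The paper partitions dual space by dual \emph{value}: it introduces the sublevel set $\calD = \big\{\bmu \in \setR^m_+ \mid \hat{D}^\star - \hat{d}(\bmu) \leq \rho + \eta m B^2/2\big\}$ of approximate multipliers, bounds $\norm{\bmu}_1$ inside $\calD$ by the coercivity argument, notes that $U_t$ strictly decreases while the iterate remains outside $\calD$ (since $\Delta_t < 0$ there), and handles boundary crossings by observing that one step from $\calD$ lands in $\calD \oplus \calB_{\eta B}$, reducing ``without loss of generality'' to the case $\bmu^{(0)} \notin \calD$. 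You instead partition by \emph{distance}: converting the $\ell_1$-coercivity into a Euclidean drift condition via $\norm{\bmu^{(t)}}_1 \geq \sqrt{U_t} - B/\xi$ (valid since $\norm{\bhmu}_1 \leq B/\xi$ for every $\bhmu \in \calM^\star$), you get a uniform one-step inequality $U_{t+1} \leq \max\big\{U_t,\, U_{\max} + 2\eta(2B + \rho + \eta m B^2/2)\big\}$ that closes by induction from $U_0 \leq (B/\xi)^2$. Your version buys an explicit, fully quantitative constant $C$ and an airtight treatment of the entry/exit bookkeeping that the paper only sketches through its stopping-time reduction; the paper's version buys economy, since $\calD$ and $\Delta_t$ are precisely the objects already built for the proof of Theorem~\ref{T:convergence_quant}, so the lemma falls out of machinery on hand. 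One housekeeping remark applying to both arguments: taking $\bhmu = \proj_{\calM^\star}\big(\bmu^{(t)}\big)$ (or attaining the infimum in $U_t$) requires $\calM^\star$ to be nonempty and closed, which follows from the coercivity bound together with continuity of $\hat{d}$ as a minimum of the Lagrangian over compact $\Theta$; the paper asserts nonemptiness in passing and you use it implicitly, so nothing is lost, but it merits a sentence.
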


\begin{proof}
Start by defining the set of approximate Lagrange multipliers
\begin{equation}\label{eqn_bounded_lambda}
    \calD \triangleq \left\{\bmu \in \setR^m_+ \mid \hat{D}^\star - \hat{d}(\bmu) \leq \rho + \eta \frac{m B^2}{2} \right\}
    	\text{.}
\end{equation}
This is set is not empty since there exists at least one~$\bmu$ that achieves~$\hat{D}^\star$. Indeed, while~\eqref{P:empirical_dual} optimizes over the open set~$\setR^m_+$, Lemma~\ref{T:norm_mu_bound} shows that the existence of a strictly feasible point means it is equivalent to a problem over a compact set~($\norm{\bmu^\star}_1$ is bounded). The remainder of the proof is separated in the cases~$\bmu \in \calD$ and~$\bmu \notin \calD$.

For~$\bmu \in \calD$, note that the same argument used to prove Lemma~\ref{T:norm_mu_bound} yields~$\norm{\bmu}_1 \leq (B - \hat{D}^\star)/\xi \triangleq C$. Since~$\cal\calM^\star \subseteq \calD$, it holds that
\begin{equation}\label{E:bound_in_D}
	\inf_{\bhmu \in \calM^\star} \norm{\bmu - \bhmu} \leq C
		\text{.}
\end{equation}

For the~$\bmu \notin \calD$ case, denote by~$\calB_{\eta B}$ the~$l_2$-ball centered at the origin with radius~$\eta B$ and by~$\oplus$ the Minkowski sum. Observe that if~$\bmu^{(t-1)} \in \calD$, then~$\bmu^{(t)} \in \calD \oplus \calB_{\eta B}$. This is due to the fact that, since the losses are bounded,~$s^{(t-1)}_i \leq B$ in step~5 of Algorithm~\ref{L:primal_dual}. This means that once the iterates step into~$\calD$, they must have bounded norm once they step outside. Hence, we can consider without loss of generality the case in which~$\bmu^{(0)} \notin \calD$. Indeed, after~$T_0 = \min \{t \mid \bmu^{(t)} \in \calD \}$, the first iterate \emph{outside}~$\calD$ will have bounded norm and the study of the sequence reduces to the case in which~$\bmu^{(0)} \notin \calD$.

To proceed, let us therefore study the sequence~$\bmu^{(t \wedge T_0)}$, where~$a \wedge b$ denotes the minimum between~$a$ and~$b$. Notice from~\eqref{E:delta_t} that~$\Delta_t < 0$ for any~$\bmu \notin \calD$, so that~\eqref{E:algorithm_update3} implies
\begin{equation}\label{E:bound_in_Dc}
	\inf_{\bhmu \in \calM^\star} \big\Vert \bmu^{(t \wedge T_0)} - \bhmu \big\Vert < \inf_{\bhmu \in \calM^\star} \big\Vert \bmu^{(0)} - \bhmu \big\Vert
		\text{.}
\end{equation}
Given that~$\bmu^{(0)} = \zeros$ in Algorithm~\ref{L:primal_dual}, the right-hand side reduces to~$\inf_{\bhmu \in \calM^\star} \big\Vert \bhmu \big\Vert \leq C$, since~$\calM^\star \subseteq \calD$. Combining~\eqref{E:bound_in_D} with~\eqref{E:bound_in_Dc} yields the desired result.
\end{proof}

Let us now show that for~$i=1,\dots,m$ it holds that
\begin{equation}\label{E:rand_feasibility_empirical}
	\E_{\btheta \sim \fke_{T}} \left[ \frac{1}{N_i} \sum_{n_i = 1}^{N_i}
		\ell_i\big( f_{\btheta}(\bx_{n_i}), y_{n_i} \big) - c_i \right] \leq \frac{2 C}{\eta T}
		\text{,}
\end{equation}
which reduces to
\begin{equation}
	\frac{1}{T} \sum_{t = 0}^{T-1} s_i^{(t)} \leq \frac{2 C}{\eta T}
		\text{,}\quad i=1,\dots,m
		\text{,}
\end{equation}
for~$s_i^{(t)}$ defined in step~4 of Algorithm~\ref{L:primal_dual} and the constant~$C$ from Lemma~\ref{T:iterates_norm_bound}. To do so, start by noticing from step~5 of Algorithm~\ref{L:primal_dual} that for~$i = 1,\dots,m$ it holds that
\begin{equation*}
	\mu_i^{(t)} \geq \mu_i^{(t-1)} + \eta s_i^{(t-1)}
		\text{,} \quad \text{for all } t
		\text{.}
\end{equation*}
Solving the recursion and recalling that~$\bmu^{(0)} = \zeros$ then yields
\begin{equation*}
	\mu_i^{(t)} \geq \eta \sum_{j = 0}^{t-1} s_i^{(j)}
		\text{,} \quad i = 1,\dots,m
		\text{.}
\end{equation*}
Hence,
\begin{equation*}
	\frac{1}{T} \sum_{t = 0}^{T-1} s_i^{(t)} \leq \frac{\mu_i^{(t)}}{\eta T}
		\text{,} \quad i = 1,\dots,m
		\text{.}
\end{equation*}
Since~$\mu_i^{(t)} \geq 0$, it holds for any~$\bmu^\star \in \calM^\star$ that
\begin{equation*}
	\frac{1}{T} \sum_{t = 0}^{T-1} s_i^{(t)} \leq \frac{\big\vert \mu_i^{(T)} - \hat{\mu}_i^\star \big\vert + \big\vert \hat{\mu}_i^\star \big\vert}{\eta T}
		\leq \frac{\big\Vert \bmu^{(T)} - \bhmu \big\Vert + \big\Vert \bhmu \big\Vert}{\eta T}
		\text{.}
\end{equation*}
Applying Lemma~\ref{T:iterates_norm_bound} yields~\eqref{E:rand_feasibility_empirical}. Applying the uniform empirical bound from Assumption~\ref{A:empirical} yields that for each~$i = 1,\dots,m$, with probability at least~$1-\delta$,
\begin{equation}\label{E:rand_feasbility_isolated}
	\E_{\substack{(\bx,y) \sim \fkD_i \\ \btheta \sim \fke_{T}}} \!\big[ \ell_i\big( f_{\btheta}(\bx), y \big) \big]
		\leq c_i + \frac{2 C}{\eta T} + \zeta_i(N_i)
		\text{.}
\end{equation}

\vspace{0.5\baselineskip}\noindent
\textbf{Randomized optimality.}
As in the feasibility proof, we begin with a lemma bounding the ergodic complementary slackness for the joint sequence~$\{(s_i^{(t)}, \bmu^{(t)})\}$.

\begin{lemma}\label{T:rand_complementary_slackness}
	Consider sequence~$\{(s_i^{(t)},\bmu^{(t)})\}$ for~$t = 0,\dots,T-1$ obtained from Algorithm~\ref{L:primal_dual}. It holds that
	\begin{equation}\label{E:complementary_slackness}
		\frac{1}{T} \sum_{t = 0}^{T=1} \left[ \sum_{i = 1}^{m} \mu_i^{(t)} s_i^{(t)} \right] \geq -\eta \frac{m B^2}{2}
			\text{.}
	\end{equation}
\end{lemma}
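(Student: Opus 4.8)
The plan is to control the ergodic complementary slackness by tracking the squared norm of the dual iterates and telescoping, which is the standard device for projected (sub)gradient ascent. I would first collect the slacks into the vector $\bs^{(t)} = [s_i^{(t)}]_{i=1,\dots,m}$ and recall from step~5 of Algorithm~\ref{L:primal_dual} that $\bmu^{(t)} = [\bmu^{(t-1)} + \eta \bs^{(t-1)}]_+$. Since projection onto $\setR^m_+$ is non-expansive relative to the origin, i.e., $\norm{[\bv]_+} \leq \norm{\bv}$, expanding the square yields the one-step inequality
\[
	\norm{\bmu^{(t)}}^2 \leq \norm{\bmu^{(t-1)}}^2
		+ 2 \eta \sum_{i=1}^m \mu_i^{(t-1)} s_i^{(t-1)}
		+ \eta^2 \norm{\bs^{(t-1)}}^2
		\text{.}
\]

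The next step would be to bound the quadratic term. Because the losses $\ell_i$ take values in $[0,B]$, each slack satisfies $\vert s_i^{(t)} \vert \leq B$, so that $\norm{\bs^{(t-1)}}^2 \leq m B^2$. Rearranging the one-step inequality then lower-bounds the instantaneous complementary slackness $\sum_i \mu_i^{(t-1)} s_i^{(t-1)}$ by the telescoping difference $(\norm{\bmu^{(t)}}^2 - \norm{\bmu^{(t-1)}}^2)/(2\eta)$ minus $\eta m B^2/2$. Summing over $t = 1,\dots,T$ collapses the norm differences into $\norm{\bmu^{(T)}}^2 - \norm{\bmu^{(0)}}^2$.

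To conclude, I would invoke the initialization $\bmu^{(0)} = \bzero$ and the trivial inequality $\norm{\bmu^{(T)}}^2 \geq 0$, so that the surviving telescoped term is nonnegative and may be discarded from the lower bound, leaving exactly $-\eta T m B^2/2$; dividing by $T$ gives~\eqref{E:complementary_slackness}. The only point requiring care is the orientation of the projection inequality: non-expansiveness furnishes an \emph{upper} bound on $\norm{\bmu^{(t)}}^2$, which is precisely what converts, after rearrangement, into a \emph{lower} bound on the complementary slackness. Everything else is routine once the uniform slack bound $\vert s_i \vert \leq B$ is in place.
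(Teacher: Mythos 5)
Your proposal is correct and is essentially the paper's own argument: both expand $\norm{\bmu^{(t)}}^2$ via the update $\bmu^{(t)} = [\bmu^{(t-1)} + \eta\, \bs^{(t-1)}]_+$, use non-expansiveness of the projection onto $\setR^m_+$, bound $\norm{\bs^{(t-1)}}^2 \leq m B^2$ from the $[0,B]$-bounded losses, telescope with $\bmu^{(0)} = \bzero$, and discard $\norm{\bmu^{(T)}}^2 \geq 0$. The only difference is cosmetic---you rearrange each one-step inequality before summing, whereas the paper unrolls the recursion first---so there is nothing to add.
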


\begin{proof}
To simplify the notation, define the vector~$\bs^{(t)} = [s_i^{(t)}]$ that collects the slacks from step~4 of Algorithm~\ref{L:primal_dual}. Then, the complementary slackness can be written as
\begin{equation*}
	\sum_{i = 1}^{m} \mu_i^{(t)} s_i^{(t)} = \bmu^{(t)\top} \bs^{(t)}
		\text{,}
\end{equation*}
where~${}^\top$ denotes the transposition operation. To bound~\eqref{E:complementary_slackness}, we once again use the update in step~5 of Algorithm~\ref{L:primal_dual} together with the non-expansiveness of the projection to obtain
\begin{align*}
	\big\Vert \bmu^{(t)} \big\Vert^2 &\leq \norm{\bmu^{(t-1)} + \eta \bs^{(t-1)}}^2
	\\
	{}&= \big\Vert \bmu^{(t-1)} \big\Vert^2 + \eta^2 \big\Vert \bs^{(t-1)} \big\Vert^2 + 2 \eta \bmu^{(t-1)\top} \bs^{(t-1)}
\end{align*}
Applying this relation recursively from~$T$ and using the fact that the losses are bounded~(and, thus, so are the constraint slacks~$s_i$) and~$\bmu^{(0)} = \zeros$, we obtain
\begin{equation}\label{E:rand_complementary_slackness1}
	\big\Vert \bmu^{(T)} \big\Vert^2 \leq T \eta^2 (m B^2) + 2 \eta \sum_{j = 0}^{T-1} \bmu^{(j)\top} \bs^{(j)}
\end{equation}
Noticing that~$\big\Vert \bmu^{(T)} \big\Vert^2 \geq 0$ and dividing~\eqref{E:rand_complementary_slackness1} by~$T$ yields the desired result.
\end{proof}

With Lemma~\ref{T:rand_complementary_slackness} in hand, we can now proceed with bounding the suboptimality of the randomized solution. To do so, we once again start by bounding the empirical objective. Explicitly, let~$\hat{F}_i(\btheta) \triangleq \frac{1}{N_i} \sum_{n_i = 1}^{N_i} \ell_0\big( f_{\btheta}(\bx_{n_i}), y_{n_i} \big)$ and notice that
\begin{equation*}
	\E_{\fke_{T}} \!\left[ \hat{F}_0(\btheta) \right]
		= \frac{1}{T} \sum_{t = 0}^{T-1} \hat{F}_0(\btheta)
		\text{.}
\end{equation*}
By adding and subtracting~$\sum_{i = 1}^m \mu_i^{(t)} s_i^{(t)}$ and recognizing the expression for the empirical Lagrangian~\eqref{E:empirical_lagrangian}, we then obtain
\begin{equation*}
	\E_{\fke_{T}} \!\left[ \hat{F}_0(\btheta) \right] = \frac{1}{T} \sum_{t = 0}^{T-1} \hat{L}\big( \btheta^{(t)},\bmu^{(t)} \big) - \sum_{i = 1}^m \mu_i^{(t)} s_i^{(t)}
		\text{.}
\end{equation*}
Using Lemma~\ref{T:rand_complementary_slackness} and the approximate minimizer property of~$\btheta^{(t)}$~(step~3 of Algorithm~\ref{L:primal_dual}), we then obtain the inequality
\begin{equation}\label{E:rand_optimality1}
	\E_{\fke_{T}} \!\left[ \hat{F}_0(\btheta) \right] \leq \frac{1}{T} \sum_{t = 0}^{T-1}
		\left[ \min_{\btheta \in \Theta}\ \hat{L}\big(\btheta,\bmu^{(t)}\big) + \rho \right]
			+ \eta \frac{m B^2}{2}
		\text{.}
\end{equation}
Observe that the minimum in~\eqref{E:rand_optimality1} is the empirical dual function~\eqref{E:empirical_dual_function} and by the definition of the empirical dual problem~\eqref{P:empirical_dual} it holds that
\begin{equation}\label{E:rand_optimality2}
	\E_{\fke_{T}} \!\left[ \hat{F}_0(\btheta) \right] \leq \hat{D}^\star + \rho + \eta \frac{m B^2}{2}
		\text{.}
\end{equation}

To conclude the proof, use Propositions~\ref{T:param} and~\ref{T:empirical} to bound~$\hat{D}^\star$ in~\eqref{E:rand_optimality2} and get that with probability~$1-2(m+1)\delta$
\begin{equation}
	\E_{\fke_{T}} \!\left[ \hat{F}_0(\btheta) \right] \leq P^\star + \rho + \eta \frac{m B^2}{2} + (1+\Delta) (M\nu + \bar{\zeta})
		\text{.}
\end{equation}
Once again using the uniform empirical bound from Assumption~\ref{A:empirical} yields that, with probability at least~$1-(2m+3)\delta$,
\begin{equation}\label{E:rand_optimality_isolated}
\begin{aligned}
	\E_{\substack{(\bx,y) \sim \fkD_i \\ \btheta \sim \fke_{T}}} \!\big[ \ell_0\big( f_{\btheta}(\bx), y \big) \big]
		&\leq P^\star + \rho + \eta \frac{m B^2}{2}
		\\
		{}&+ (1+\Delta) M\nu + (2+\Delta) \bar{\zeta}
		\text{.}
\end{aligned}
\end{equation}

\vspace{0.5\baselineskip}\noindent
\textbf{Union bound.}
Using the union bound to combine~\eqref{E:rand_feasbility_isolated} and~\eqref{E:rand_optimality_isolated} and the choice of step size in~\eqref{E:eta} concludes the proof.
\hfill$\blacksquare$

\end{document}